\documentclass[12pt]{article}
\usepackage{mathptmx}
\usepackage{amsmath,amssymb}
\usepackage{natbib}
\usepackage{dsfont}
\usepackage{xcolor}
\usepackage{gensymb}
\usepackage{rotating}
\usepackage{subcaption} 
\usepackage{booktabs}
\usepackage{tabularx}
\usepackage{array}
\usepackage{placeins}
\usepackage[hidelinks]{hyperref}
\usepackage[shortlabels]{enumitem}
\usepackage{graphicx} \usepackage{tikz}
\usetikzlibrary{arrows.meta,positioning}
\usepackage[margin=1in]{geometry}
\usepackage{amsthm}

\newtheorem{proposition}{Proposition}

\newtheorem{corollary}{Corollary}

\newtheorem{assumption}{Assumption}

\usepackage{libertine}
\usepackage{cleveref}
\usepackage[all,defaultlines=3]{nowidow}

\crefname{lemma}{lemma}{lemmas}
\Crefname{lemma}{Lemma}{Lemmas}
\crefname{theorem}{theorem}{theorems}
\Crefname{theorem}{Theorem}{Theorems}
\crefname{proposition}{proposition}{propositions}
\Crefname{proposition}{Proposition}{Propositions}
\crefname{corollary}{corollary}{corollaries}
\Crefname{corollary}{Corollary}{Corollaries}
\crefname{remark}{remark}{remarks}
\Crefname{remark}{Remark}{Remarks}
\crefname{assumption}{assumption}{assumptions}
\Crefname{assumption}{Assumption}{Assumptions}
\crefname{section}{Section}{Sections}
\Crefname{section}{Section}{Sections}
\crefname{subsection}{Section}{Sections}
\Crefname{subsection}{Section}{Sections}
\crefname{subsubsection}{Section}{Sections}
\Crefname{subsubsection}{Section}{Sections}
\crefname{equation}{Equation}{Equations}
\Crefname{equation}{Equation}{Equations}
\crefname{algorithm}{Algorithm}{Algorithms}
\Crefname{algorithm}{Algorithm}{Algorithms}
\crefname{figure}{Figure}{Figures}
\Crefname{figure}{Figure}{Figures}
\crefname{table}{Table}{Tables}
\Crefname{table}{Table}{Tables}
\crefname{appendix}{Appendix}{Appendices}
\Crefname{appendix}{Appendix}{Appendices}

\newcommand{\E}{\mathbb{E}}

\title{Causal Inference with Unstructured Treatments}

\author{
  Kevin Christian Wibisono\\
  Department of Statistics\\
  University of Michigan, Ann Arbor\\
  kwib@umich.edu
  \and
  Yixin Wang\\
  Department of Statistics\\
  University of Michigan, Ann Arbor\\
  yixinw@umich.edu
  }

\date{}

\begin{document}

\maketitle

\begin{abstract}
\noindent
Causal inference usually concerns a scalar treatment, yet in many problems the treatment is unstructured: a text, an image, or a sequence of clinical decisions. Consider an instructor writing a course description to attract more students: the treatment is the course description, and the outcome is enrollment. The standard target, the average treatment effect of fixing the treatment to one exact value versus another, runs into two problems. It cannot be estimated, because almost no exact description recurs across courses, leaving no comparable group from which to measure its effect; and it would be of little use even if it could, since no one wants every course to carry the same description. What the instructor actually wants to know is which features of a description raise enrollment, and which of those features can be acted on across many courses. To this end, we propose a causal query for unstructured treatments: the \emph{maximally influential feature} (MIF), the feature of the treatment that most strongly influences the outcome. We formalize the MIF as a binary feature of the treatment, defined by a feature-scoring function, constrained so that both of its values stay well populated, and chosen to maximize the causal effect it induces. Turning the feature on shifts the distribution of treatments toward those that display it, turning it off shifts away, and the MIF effect contrasts the two average potential outcomes. We study identification conditions for the MIF, develop algorithms to estimate it, and make it actionable through a nudging algorithm that revises a treatment along the MIF into an outcome-improving version. We illustrate the MIF algorithm across applications in text, image, and dynamic treatment sequences.
\end{abstract}

\section{Introduction}
\label{sec:intro}

An instructor is writing the catalog description for her course. She hopes that a better paragraph will draw more students. The university has records from past offerings: the course description, the subject and level of the course, the instructor, the meeting time, the prerequisites, and the final enrollment. \Cref{tab:course-running} shows an excerpt. The records span several subject areas, with multiple courses in each, and the courses differ in both content and how they are described. Some descriptions speak directly to students; others read like formal catalog entries. Enrollment varies with subject, content, and description style. These records can predict enrollment from a paragraph, but the instructor's question is causal: how should her course description be written so that enrollment would change?

\begin{table}[t]
\centering
\small
\caption{An excerpt of catalog records. The catalog contains several subject areas, with multiple courses in each. Description style varies across courses and may also be correlated with subject area. The causal question is which modifiable features of the description raise enrollment.}
\label{tab:course-running}
\begin{tabularx}{\textwidth}{@{}
  >{\raggedright\arraybackslash}p{0.21\textwidth}
  >{\raggedright\arraybackslash}p{0.15\textwidth}
  >{\raggedright\arraybackslash}X
  r@{}}
\toprule
\textbf{Course} & \textbf{Subject} & \textbf{Description} & \textbf{Enroll.} \\
\midrule
Machine Learning
& Computer Science
& \emph{How do computers learn from data? Build models that recognize patterns, make predictions, and improve with experience.}
& 86 \\
\addlinespace
Artificial Intelligence
& Computer Science
& \emph{From search and planning to language models, explore how machines reason, act, and adapt in complex environments.}
& 91 \\
\addlinespace
Regression Modeling
& Statistics
& \emph{Linear and generalized linear models, diagnostics, variable selection, model comparison, and applications. Prerequisite: Statistical Inference.}
& 34 \\
\addlinespace
Causal Inference
& Statistics
& \emph{When can data tell us what would have happened otherwise? Study experiments, observational studies, matching, weighting, and sensitivity analysis.}
& 51 \\
\addlinespace
Organic Chemistry II
& Chemistry
& \emph{Continuation of reaction mechanisms, spectroscopy, and the synthesis of polyfunctional compounds.}
& 31 \\
\addlinespace
Physical Chemistry
& Chemistry
& \emph{Thermodynamics, chemical equilibrium, kinetics, quantum structure, and molecular spectroscopy. Prerequisite: General Chemistry.}
& 27 \\
\addlinespace
Modern European History
& History
& \emph{From revolutions to world wars, trace how a continent remade itself and argue about why it happened.}
& 44 \\
\addlinespace
Medieval Worlds
& History
& \emph{A survey of political, religious, and social life in Europe and the Mediterranean from late antiquity to the fifteenth century.}
& 33 \\
\bottomrule
\end{tabularx}
\end{table}

\textbf{The treatment is an unstructured object.} Classical causal inference is usually written for scalar treatments: a drug is taken or not, a dose is low or high, a price is raised by one dollar.  Here the treatment is a paragraph. In other applications it may be an image, a video, or a sequence of clinical decisions. These problems are examples of causal inference with an \emph{unstructured} treatment. Let $A$ denote the treatment, $X$ the background variables we are willing to adjust for, and $Y$ the outcome. In the example, $A$ is the course description, $X$ includes the course subject, level, instructor, meeting time, prerequisites, and other fixed course information, and $Y$ is enrollment. The same setting arises beyond text: the treatment may be a product image whose outcome is sales, a medical image whose outcome is diagnosis, or a sequence of clinical decisions whose outcome is patient recovery.

\textbf{Why standard causal queries fail. }Standard causal queries such as the average treatment effect are poorly suited to this setting. For a binary treatment, the average treatment effect compares two regimes that the data usually populate well. With unstructured treatments like text, the direct analogue would compare the potential outcome under one exact paragraph to the potential outcome under another exact paragraph. Although a potential outcome $Y(a)$ is well defined for an unstructured treatment paragraph $a$, the target $\E\{Y(a)\}$ has two problems. First, it is usually not estimable, because most exact descriptions occur once, many plausible descriptions never occur, and the observed data provide little support for point-level comparisons. Second, even if it were estimable, it would answer the wrong question. Identifying the paragraph that leads to the best potential outcome would not imply that one could assign this same paragraph to every course. Rather, the practical question is coarser: What features of a description raise enrollment? Which of those features can the instructor use when writing her own description?

One way to simplify the problem is to replace the paragraph with a pre-specified low-dimensional summary. For example, one might group descriptions by topic and compare enrollment across topics. But this misses much of the features that may matter. Topic is often not the part of the treatment one can change: a Bayesian statistics course should not be rewritten as organic chemistry. The modifiable part may instead be the form of the description, whether it is concrete, direct, formal, terse, motivational, or example-driven. These features may not be known in advance, and they need not coincide with topics.

\textbf{The maximally influential feature.}
To this end, we propose a causal query that asks a different causal question: Which feature of the treatment has the largest causal effect? We call the answer the \emph{maximally influential feature} (MIF).  The MIF is the feature of an unstructured treatment that most strongly changes the outcome. 
We represent a candidate feature by a binary variable \(W_f\in\{0,1\}\), generated from a feature-scoring function \(f\) according to
\[
    W_f\mid A,X\sim\operatorname{Bernoulli}\{f(A,X)\}.
\]
The score \(f(A,X)\) measures how strongly treatment \(A\) displays the feature in background context \(X\). For the formality feature, for example, a feature score calibrated to human judgments can be interpreted as the proportion of raters who would classify the course description as formal, as opposed to informal, given the description and course context. Turning the feature on shifts the treatment distribution toward course descriptions with high formality scores; turning it off shifts the distribution toward course descriptions with low formality scores. The causal effect of the feature compares the average potential outcomes under these two regimes.

We constrain the feature-scoring function so that both values of \(W_f\) remain well represented conditional on \(X\), building overlap into the query. Among the functions that satisfy this constraint, the MIF selects the one with the largest causal effect. A budget determines how broad the feature may be: small budgets search for narrow, high-impact features, while larger budgets search for features shared by more course descriptions. \Cref{sec:causal-query} formalizes this query, gives its identification conditions, and develops algorithms for estimating the MIF.

\textbf{Interpreting and acting on the learned MIF.}
For the MIF to be useful, it must be both interpretable and actionable. After learning the MIF, we inspect treatments that receive high and low feature scores to understand what separates them. In the course example, this comparison may reveal that high-scoring descriptions are more concrete, more direct, or more example-driven than low-scoring descriptions. The same learned feature can also guide revision: we can move a description toward the high-MIF region and decode the revised representation back into language. Thus the MIF analysis does not only identify an outcome-relevant feature; it can also produce a concrete rewritten paragraph.

This revision step makes explicit a constraint that was implicit in the causal query. To act on a learned feature, the feature must correspond to something one could plausibly change. Otherwise the MIF may be statistically valid but practically unhelpful: it may identify variation in the treatment that predicts or affects the outcome, yet is not available as an intervention. This leads to the question of which parts of the treatment should be treated as modifiable.

\textbf{What features of the treatment are modifiable?}
Not every feature of a treatment is something one can or should change. A learned MIF may reveal a real causal contrast, but it is only useful as a recommendation if the feature corresponds to an available intervention. For example, if the MIF says that courses in one subject enroll better than courses in another, the finding may be informative but not actionable: an instructor cannot turn Bayesian statistics into organic chemistry. What she can change is how the same course is described. We therefore distinguish between parts of the treatment that are unmodifiable and parts that may be modified.

For text, this distinction is often between content and style. Content captures what the course is about: the topic, prerequisites, learning goals, and substantive material. Style captures how that content is expressed: whether the description is concrete or abstract, direct or indirect, formal or conversational, terse or detailed. We therefore restrict the MIF search to features of the modifiable representation. The revision step then changes only this modifiable component while reconstructing the full description with the original content intact. \Cref{sec:cont-style-sep} develops this construction, which focuses the MIF on recommendations a user could actually act on.

\textbf{Contributions.} The paper makes five contributions:
\begin{enumerate}
\item We propose a causal query for causal inference with unstructured treatments: the maximally influential feature (MIF),
\item We establish identification conditions for the MIF, and develop estimation algorithms.
\item We make the MIF actionable by introducing a nudging algorithm that revises a treatment along the learned feature.
\item We discuss practical considerations in using MIF, e.g. separating features of the unstructured treatment into modifiable ones and unmodifiable ones, in order to focus the MIF search on relevant causal questions.
\item We illustrate the MIC algorithm across applications with text-based treatments, image-based  treatments, and dynamic treatment sequences.
\end{enumerate}

\begin{figure}[t]
\centering
\begin{minipage}[t]{0.20\textwidth}
\vspace{0pt}
\centering
\begin{tikzpicture}[scale=0.92, transform shape,
  var/.style={
    circle,
    draw=black!55,
    line width=0.7pt,
    minimum size=9mm,
    text=white,
    font=\bfseries
  },
  causal/.style={
    -{Latex[length=2.2mm,width=1.7mm]},
    line width=1.1pt,
    draw=blue!65!black
  },
  direct/.style={
    -{Latex[length=2.2mm,width=1.7mm]},
    line width=1.1pt,
    draw=orange!85!black
  },
  label/.style={font=\scriptsize, align=center}
]
  \node[var, fill=blue!65!black] (x) at (1.50,2.05) {\(X\)};
  \node[var, fill=green!55!black] (a) at (0.50,0.45) {\(A\)};
  \node[var, fill=blue!65!black] (y) at (2.50,0.45) {\(Y\)};

  \draw[causal] (x) -- (a);
  \draw[causal] (x) -- (y);
  \draw[direct] (a) -- (y);

  \node[label, above=1mm of x] {course\\subject, level};
  \node[label, below=1mm of a] {course\\description};
  \node[label, below=1mm of y] {enrollment};
\end{tikzpicture}
\end{minipage}\hfill
\begin{minipage}[t]{0.49\textwidth}
\vspace{0pt}
\raggedright
\small
\textbf{Step 1:}
\textcolor{orange!85!black}{Separate what can and cannot change}

\vspace{2pt}
\centering
\begin{tikzpicture}[
  source/.style={
    rounded corners=2pt,
    draw=green!55!black,
    fill=green!8,
    line width=0.7pt,
    align=center,
    font=\tiny\bfseries,
    text width=1.42cm,
    inner sep=4pt
  },
  fixed/.style={
    rounded corners=2pt,
    draw=blue!55!black,
    fill=blue!6,
    line width=0.7pt,
    align=left,
    font=\scriptsize,
    text width=3.88cm,
    inner sep=3.5pt
  },
  changeable/.style={
    rounded corners=2pt,
    draw=orange!80!black,
    fill=orange!8,
    line width=0.7pt,
    align=left,
    font=\scriptsize,
    text width=3.88cm,
    inner sep=3.5pt
  },
  split/.style={
    -{Latex[length=1.8mm,width=1.3mm]},
    draw=black!50,
    line width=0.7pt
  }
]
  \node[source] (treatment) at (0.72,0) {unstructured\\treatment \(A\)};
  \node[fixed, anchor=west] (fixed) at (2.00,0.70)
    {\textbf{Unmodifiable features}\\
     content: topics \(\cdot\) learning goals \(\cdot\) prerequisites};
  \node[changeable, anchor=west] (changeable) at (2.00,-0.70)
    {\textbf{Modifiable features}\\
     style: directness \(\cdot\) formality \(\cdot\) level of detail};

  \draw[split] (treatment.east) -- (fixed.west);
  \draw[split] (treatment.east) -- (changeable.west);
\end{tikzpicture}

\vspace{1pt}
\raggedright
\textbf{Step 2:}
\textcolor{orange!85!black}{Define, compare, and learn}

{\scriptsize
\setlength{\abovedisplayskip}{3pt}
\setlength{\belowdisplayskip}{2pt}
\noindent\textbf{Define.}
A feature score is a deterministic function of the unstructured treatment
and its covariates, $f(A,X)$, which induces a binary feature-on versus feature-off policy:
\(
  W_f\mid A,X
  \sim \operatorname{Bernoulli}
  \!\left\{f(A,X)\right\}.
\)\\
\vspace{2pt}
\noindent\textbf{Compare.}
The policy outcome integrates potential outcomes under either policy: for $w\in \{0,1\}$,
\[
  \Psi_f(w)
  =\mathbb E[
      \int \mathbb E\{Y(a)\mid R_f,X\}
      p(a\mid R_f,X,W_f=w)\,da],
\]
\noindent where \(R_f\) is the treatment information other than the feature.\\
\vspace{2pt}
\noindent\textbf{Learn.}
A maximally influential feature (MIF) is a candidate feature with the largest
high-versus-low effect on \(Y\).
}
\end{minipage}\hfill
\begin{minipage}[t]{0.28\textwidth}
\vspace{0pt}
\raggedright
\small
\textbf{Step 3:}
\textcolor{orange!85!black}{Interpret, and \textbf{act}}

\vspace{2pt}
{\footnotesize
\textbf{Interpret:} the MIF is direct, action-oriented wording.}

\vspace{3pt}
\centering
\begin{tikzpicture}[
  low/.style={
    rounded corners=3pt,
    draw=blue!55!black,
    fill=blue!5,
    line width=0.7pt,
    align=left,
    font=\scriptsize,
    text width=3.45cm,
    inner sep=5pt
  },
  high/.style={
    rounded corners=3pt,
    draw=orange!80!black,
    fill=orange!8,
    line width=0.8pt,
    align=left,
    font=\scriptsize,
    text width=3.45cm,
    inner sep=5pt
  },
  lock/.style={
    rounded corners=5pt,
    fill=black!7,
    draw=black!30,
    font=\scriptsize\bfseries,
    inner xsep=6pt,
    inner ysep=2.5pt
  }
]
  \node[low] (low) {\textbf{Low feature score}\\[-1pt]
    \emph{``Linear models, diagnostics, variable selection, and model comparison.''}};
  \node[font=\scriptsize, text=orange!85!black,
        text width=3.45cm, align=center,
        below=2mm of low] (nudge)
    {\(\boldsymbol{\downarrow}\)\; \textbf{Act: nudge the text}\\[-1pt]
     toward a higher score};
  \node[high, below=2mm of nudge] (high) {\textbf{Higher feature score}\\[-1pt]
    \emph{``Build linear models.\\ Diagnose fit. Select variables.\\ Compare models.''}};
  \node[lock, below=2.5mm of high] (lock)
    {same course \(\cdot\) same content};
\end{tikzpicture}
\end{minipage}
\caption{From an unstructured treatment to action. Course context \(X\) may affect both the unstructuredtreatment \(A\) and enrollment \(Y\). We separate unmodifiable content from modifiable style, define a deterministic feature score \(f(A,X)\), and learn the maximally influential feature (MIF) with the largest on-versus-off effect. We then interpret the learned feature and act on it by nudging only the modifiable text toward a higher score while holding content and context fixed.}
\label{fig:summary}
\end{figure}

\subsection{Related work}
\label{sec:related}

This work draws on five lines of related work.

\textbf{Text as covariates.}
A large literature uses text to improve causal estimates for treatments that are not themselves text; see \citet{feder2022causal} and \citet{keith2020text} for reviews. In this line of work, text is a record of context. It supplies controls, proxies, or representations for adjusting the effect of a separate, usually low-dimensional, treatment. Examples include matching and adjustment with text representations \citep{mozer2020matching,mozer2023leveraging,roberts2020adjusting,weld2022adjusting}, causally sufficient embeddings \citep{veitch2020adapting}, double machine learning with multimodal data \citep{klaassen2024doublemldeep}, outcome and propensity modeling from unstructured text \citep{zhou2026integrating}, AI-assisted regression adjustment with predictions from unstructured inputs \citep{arbour2026variance}, and methods for missing confounders or apparent overlap violations \citep{chen2024proximal,gui2023causal}. These papers ask how text can help estimate the effect of a treatment already chosen by the user. We ask the reverse question: when the text, image, or treatment history is itself the treatment object, which feature of that object should be changed?

\textbf{Text-derived and latent treatments.}
A second line moves the treatment inside the text. Topic-model and latent-treatment methods discover interpretable treatment groups from documents \citep{ahrens2021bayesian,egami2022make,fong2016discovery,fong2023causal}. Other work estimates the effects of pre-specified linguistic properties, tone, or classifier-derived text treatments \citep{pryzant2017predicting,pryzant2021causal,sridhar2019estimating,wood2018challenges}. Recent works also study the challenge of overlap in text-based treatment.  \citet{gui2023causal} reshape Bidirectional Encoder Representations from Transformers (BERT) embeddings to encode confounding while preserving overlap, and \citet{tierney2025design} argue that representation-based adjustment can still create overlap bias when the representation encodes the treatment itself. All of these approaches involve text in the treatment, but the specific treatment of interest is typically fixed before estimation: a topic, a label, a classifier output, or a named linguistic attribute. The MIF instead learns the treatment feature of interest from data.

The closest work to ours is the GenAI-powered inference of \citet{imai2025gpi,imai2024causal,nakamura2026dynamic}. \citet{imai2024causal} study texts as treatments by using a generative model to produce or reproduce treatment texts and then using the model's internal representation for causal effect estimation. \citet{imai2025gpi} broaden the framework to causal and predictive inference with unstructured data, including text and images. \citet{nakamura2026dynamic} extend it to dynamic unstructured treatments, estimating effects of treatment-feature sequences and their positions.

Our work shares their starting point: the treatment is unstructured and represented through learned features. The target is different. In the GenAI-powered inference line, the user begins with a treatment feature, or a sequence of features, whose effect is to be estimated. In our setting, the feature is not specified in advance. The MIF uses the data to discover a feature that is outcome-relevant, and actionable. Their representation is used mainly to construct a deconfounder or low-dimensional structure for estimating effects of specified features. Our representation defines a searchable space of modifiable features over which the causal contrast objective is optimized.

\textbf{High-dimensional treatments and kernel causal learning.}
Related work also studies causal inference with high-dimensional or structured treatments outside the specific setting of text or unstructured data. Kernel and neural mean-embedding methods estimate causal functions, dose-response curves, and adjustment functionals in rich treatment spaces \citep{singh2024kernel,xu2022neural}. Kernel methods have also been used for proximal causal learning with proxies \citep{mastouri2021proximal}, and neural proxy learning has been applied to confounded bandit policy evaluation \citep{xu2021deep}. More recent work studies treatment representations for instrumental-variable regression \citep{lin2025ivrepr} and inference with high-dimensional treatments more broadly \citep{kramer2026highdim}. These papers learn representations or functions for a downstream causal estimand. Our goal is different: we search for the estimand-defining feature itself.

\textbf{Natural-language interventions and modifiable features.}
Recent work formalizes interventions on language more directly. \citet{lin2023texttransport} estimate causal effects of changing linguistic attributes under distribution shift, while \citet{lin2025isolated} study isolated causal effects of focal language interventions and the difficulty of preserving non-focal language. \citet{lin2024cpo} frame language-model optimization from preference outcomes as a causal problem. Work on AI-generated treatments studies how treatment variants can be represented, compared, assigned, or annotated efficiently \citep{nwankwo2025batch,shi2025semantic}. Complementary work evaluates language-model generations under confounded model choice, combining a small randomized experiment with an offline simulator to identify causal model values and using observational logs to improve estimation \citep{jin2026partial}. Related work also uses text rationales to improve robustness in causal effect estimation from text \citep{zhang2025text}. These papers share our concern that natural-language changes are rarely one-dimensional: changing one feature of a text can also change its topic, meaning, and support. Our MIF algorithm addresses this by defining the content to be preserved and searching only within the modifiable part of the representation.

\textbf{Stochastic interventions and content-style representations.}
The causal target of the MIF is closest in spirit to stochastic interventions. Rather than assign every unit to one treatment value, a stochastic intervention shifts or reweights a treatment distribution while staying inside support. Incremental propensity-score interventions use this idea to avoid the positivity problems of point interventions \citep{kennedy2018incremental}. We use the same potential-outcomes logic: the intervention changes the distribution over which existing potential outcomes $Y(a)$ are averaged; it does not change the potential outcomes itself.

For unstructured treatments, support is not enough. The intervention must also respect which part of the treatment is modifiable and which part is not. With text-based treatments, this connects to representation learning for content and style. Work in natural language processing (NLP) studies how to build style embeddings that are less entangled with topic or content \citep{patel2023learning,patel2024styledistance,wegmann2022same}; related causal work gives formal desiderata for representation learning, including disentanglement and non-spuriousness \citep{wang2024desiderata}. Existing style representations often begin with a named style, such as authorship, formality, sentiment, or stylometric signature. We use a different division. For each application, the user defines the component of the unstructured treatment that must stay unmodifiable; the remaining variation is the space in which an intervention may act. The MIF then searches that space for the feature most strongly tied to the causal contrast objective. The representation is therefore not only descriptive. It is a constraint on the causal changes our MIF algorithm is allowed to consider.

\subsection{Organization of the paper}
The remainder of the paper is organized as follows. \Cref{sec:mif} develops causal inference with unstructured treatments: \Cref{sec:causal-query} introduces the setup and the MIF causal query; \Cref{sec:causal-identification,sec:causal-estimation} establish identification and develop estimation; \Cref{sec:interpret} explains how to interpret and act on a learned feature; and \Cref{sec:cont-style-sep} addresses modifiability through content--style separation. \Cref{sec:empirical_studies} presents empirical studies on text, images, and dynamic treatment sequences, together with experiments that nudge treatments along the learned MIF direction. \Cref{sec:disc} concludes.

\section{Causal inference with unstructured treatments}
\label{sec:mif}

This section formulates a causal query for unstructured treatments, develops its identification conditions and an estimation algorithm, and explores its interpretation and practical considerations.  The running example is the course catalog.  A course description is the treatment, enrollment is the outcome, and course information such as subject, level, instructor, prerequisites, and meeting time is the context.  The instructor is interested in what features of a course description can increase enrollment.

\subsection{Setup, potential outcomes, and the causal query}
\label{sec:causal-query}

\textbf{Observed data and potential outcomes.}
We observe i.i.d.\ data
\[
    Z_i=(X_i,A_i,Y_i),\qquad i=1,\ldots,n,
\]
where \(X_i\in\mathcal X\) contains pre-treatment covariates, \(A_i\in\mathcal A\) is a numerical representation of the full unstructured treatment, and \(Y_i\in\mathbb R\) is the outcome.  For each full treatment value \(a\in\mathcal A\), let \(Y(a)\) denote the potential outcome under \(a\).  In the course example, \(A\) represents the complete course description, \(X\) contains fixed course information, and \(Y(a)\) is the enrollment that would result if the course used the description represented by \(a\).

\textbf{Numerical representation of unstructured treatments.}
The raw treatment may be a paragraph, image, audio clip, or another object that is not naturally a vector.  Write \(O\) for this raw treatment and represent it numerically as
\[
    A=\mathcal E(O).
\]
The map \(\mathcal E\) may be a language-model embedding, a set of hand-coded variables, or a representation designed to isolate a modifiable component such as style, tone, or specificity.  Although \(A\) is numerical, it must still identify the full treatment that indexes \(Y(a)\).  We therefore assume that \(\mathcal E\) is one-to-one on the observed raw-treatment support.  This condition prevents hidden treatment versions: each supported value \(a\) corresponds to one raw treatment, so \(Y(a)\) is unambiguous.  The map need not cover the ambient representation space; an arbitrary vector may not correspond to a coherent treatment.  The policies below avoid such vectors by reweighting only supported values of \(A\).  In the running example, \(O\) is the raw course description, \(A\) is its text representation, and \(X\) records fixed course facts such as subject and prerequisites.

\textbf{Feature-scoring functions.}
For overlap margin~\(\epsilon\), we represent a candidate feature by a function
\[
    f:\mathcal A\times\mathcal X\longrightarrow[\epsilon,1-\epsilon],
    \qquad 0<\epsilon<1/2.
\]
The value \(f(a,x)\) is the feature score: a large value means that treatment \(a\) strongly displays the feature in context \(x\).  For example, \(f(a,x)\) may score how formal a course description is.  The bounds ensure that both feature-on and feature-off policies are well-defined.

\textbf{Binary feature intervention.}
Each feature score defines two stochastic interventions through an auxiliary binary feature variable
\[
    W_f\mid A,X
    \sim
    \operatorname{Bernoulli}\{f(A,X)\}.
\]
The policy indexed by \(W_f=1\) is feature-on, and the policy indexed by \(W_f=0\) is feature-off.  For a formality feature calibrated to human judgments, \(f(a,x)\) can be interpreted as the proportion of raters who would classify course description \(a\) as formal rather than informal in context \(x\).  The value \(W_f=1\) then represents a formal classification by a randomly selected rater.  The construction does not threshold the graded score; it uses the score as the probability of the feature-on label.

\textbf{Remainder-preserving feature policies and their outcomes.}
For each candidate feature-scoring function \(f\), let \(R_f=r_f(A,X)\) denote the remainder: all properties of the treatment other than the feature scored by \(f\).  These are the properties that the feature intervention does not touch and holds fixed.  If \(f\) scores formality, for example, \(R_f\) records the course description's semantic content, facts, length, directness, politeness, examples, and every other property besides formality.  Holding \(R_f\) and \(X\) fixed then lets the feature intervention change formality while preserving these other properties and the course context. \Cref{sec:causal-identification} formalizes this notation \(R_f\) and explains when the comparison contains feature variation.

For simplicity, assume that the conditional densities or probability mass functions below exist on their conditional supports and that all displayed expectations are finite.  Write \(p(a\mid r,x)\) for the observed conditional density or mass of \(A\) given \(R_f=r\) and \(X=x\).
Within a remainder--context stratum, we define the \textit{feature-on policy} as \(p(a\mid r,x,W_f=1)\), and the \textit{feature-off policy} as \(p(a\mid r,x,W_f=0)\).  For \(w\in\{0,1\}\), Bayes' rule gives
\[
\begin{aligned}
    p(a\mid r,x,W_f=w)
    &=
    \frac{
        \mathbb P(W_f=w\mid A=a,R_f=r,X=x)
        \,p(a\mid r,x)
    }{
        \mathbb P(W_f=w\mid R_f=r,X=x)
    }\\[4pt]
    &=
    \begin{cases}
    \displaystyle
    \frac{f(a,x)}{\pi_f(r,x)}p(a\mid r,x),
        & w=1,\\[8pt]
    \displaystyle
    \frac{1-f(a,x)}{1-\pi_f(r,x)}p(a\mid r,x),
        & w=0,
    \end{cases},
\end{aligned}
\]
where
\[
    \pi_f(r,x)
    =
    \mathbb P(W_f=1\mid R_f=r,X=x)
    =
    \mathbb E\{f(A,X)\mid R_f=r,X=x\},
\]
is the conditional probability that the feature is on.
Under the binary construction, the label probability in the numerator is \(f(a,x)\) when \(w=1\) and \(1-f(a,x)\) when \(w=0\).  Averaging the same label probability over \(p(a\mid r,x)\) gives the denominator \(\pi_f(r,x)\) or \(1-\pi_f(r,x)\).

The binary construction therefore produces weighted stochastic policies over supported descriptions.  In the course example, fix the context \(x\) and a remainder \(r\) that includes the description's content.  Relative to \(p(a\mid r,x)\), the feature-on policy upweights descriptions whose formality scores exceed the stratum average and downweights those whose scores fall below it.  The feature-off policy reverses this tilt.  The normalizers make both policies integrate to one, and the score bounds keep their weights positive and finite.

The feature-on/off policies also have a simple updating interpretation.  Imagine drawing a course description from \(p(a\mid r,x)\) and asking one randomly selected rater whether it is formal.  Under the calibrated-rater interpretation, \(f(a,x)\) is the probability of a formal rating.  Conditioning on a ``formal'' rating therefore updates the distribution toward descriptions more likely to receive that rating; conditioning on an ``informal'' rating updates it in the opposite direction.  These updated distributions are the feature-on and feature-off policies.  The size of the tilt is how much this one rating changes the distribution over descriptions.  It changes which full description is likely to be drawn while preserving the remainder and context.

The resulting conditional policy outcome is
\[
    \Psi_f(w\mid r,x)
    =
    \int
    \mathbb E\{Y(a)\mid R_f=r,X=x\}
    p(a\mid r,x,W_f=w)\,da.
\]
For a discrete treatment, the integral is a sum.  The population policy outcomes retain each course's remainder and context and then average over the population:
\[
    \Psi_f(w\mid x)
    =
    \mathbb E\{\Psi_f(w\mid R_f,x)\mid X=x\},
    \qquad
    \Psi_f(w)
    =
    \mathbb E\{\Psi_f(w\mid R_f,X)\}.
\]
In the running example, \(\Psi_f(1)\) is expected enrollment when courses retain their content and the other writing properties recorded in \(R_f\), while descriptions are redistributed toward higher formality.  The value \(\Psi_f(0)\) uses the corresponding redistribution toward lower formality.  The random policy draw is made independently of the potential outcomes conditional on \((R_f,X)\).

\textbf{Feature budget.}
Let \(\mathcal F\) denote the chosen class of admissible feature-scoring functions.  The average feature mass, \(\mathbb E\{f(A,X)\}\), controls the breadth of the feature.  To compare features at the same breadth, fix \(b\in(\epsilon,1-\epsilon)\) and let
\[
    \mathcal F_b
    =
    \left\{
        f\in\mathcal F:\mathbb E\{f(A,X)\}=b
    \right\}.
\]
A small \(b\) asks for a sharply defined pattern concentrated in a narrow region of the description space, such as whether a piece of course description includes an exclamation mark.  A larger \(b\) asks for a broader pattern shared by more courses.  Fixing \(b\) keeps these searches comparable.  Without this constraint, a flexible learner can lower the average feature mass to isolate a tiny region; for near-binary scores, \(b\) directly controls the fraction of descriptions assigned a high score.

\textbf{The raw causal contrast and the maximally influential feature (MIF).}
For \(f\in\mathcal F_b\), the \textit{raw causal contrast} and its fixed-budget optimizer set are
\[
    \Delta_X(f)=\Psi_f(1)-\Psi_f(0),
    \qquad
        \mathcal F_{b,\Delta}^*
    =
    \arg\max_{f\in\mathcal F_b}
    \Delta_X(f).
\]
The contrast compares expected enrollment under the feature-on and feature-off policies.  Maximizing it over \(\mathcal F_b\) answers the most literal fixed-budget query: among features with the same average feature mass, which one produces the largest policy contrast? We call each \(f_{b,\Delta}^*\in\mathcal F_{b,\Delta}^*\) a \textit{fixed-budget maximally influential feature (MIF)} at budget \(b\).

\textbf{Comparing across budgets with a variance-weighted causal contrast objective.}
The raw causal contrast objective is natural for evaluating a fixed feature at a fixed budget.  It is less convenient for searching across budgets; the optimized raw contrast is not on a common scale across budgets.  A narrow feature can have a large raw contrast by isolating a small and unusual region, while a broader feature can have a smaller raw contrast but capture more outcome signal.  For a search that ranges over budgets, we propose the \textit{variance-weighted causal contrast} objective,
\[
\begin{aligned}
    \widetilde\Delta_X(f)
    =
    \mathbb E\Big[
        \pi_f(R_f,X)\{1-\pi_f(R_f,X)\}
        \times
        \{\Psi_f(1\mid R_f,X)-\Psi_f(0\mid R_f,X)\}
    \Big].
\end{aligned}
\]
This objective uses the same feature-level policy outcomes as the raw contrast but changes their scale.  Because \(\pi_f(R_f,X)=\mathbb P(W_f=1\mid R_f,X)\), the factor \(\pi_f(R_f,X)\{1-\pi_f(R_f,X)\}\) is the conditional variance of \(W_f\); it downweights a remainder--context stratum when either feature label is rare.  The outer expectation averages over the population distribution of \((R_f,X)\), so remainder--context strata containing more courses contribute more.  In the course example, the objective is large when tilting toward formality raises enrollment, both formal and informal ratings are plausible among otherwise comparable descriptions, and such courses occur frequently.  \Cref{app:mif-original} explains in details how \(b\) controls feature breadth and how the two objective scales differ in a pooled course-description setting.

For the cross-budget search, use a chosen search class of feature functions \(\mathcal F\) itself, without a budget constraint.  The population MIF set is
\[
    \mathcal F^*
    =
    \arg\max_{f\in\mathcal F}
    \widetilde\Delta_X(f).
\]
We call each \(f^*\in\mathcal F^*\) a \textit{maximally influential feature (MIF)}.  Its breadth is \(\mathbb E\{f^*(A,X)\}\), so the MIF search chooses both the feature and its breadth.

\subsection{Causal identification of the MIF}
\label{sec:causal-identification}

The policy above defines the causal target.  The identification problem is to connect the feature-level potential outcomes to observed data. Below we present assumptions required for the identification of the feature-on and feature-off policy outcomes.

\begin{assumption}[SUTVA and consistency]
\label{ass:mif-sutva}
For every full treatment \(a\in\mathcal A\), the potential outcome \(Y(a)\) is well-defined, with no interference across units and no hidden versions of \(A\).  If the observed treatment is \(A=a\), then
\[
    Y=Y(a).
\]
\end{assumption}

In the course example, observed enrollment equals the potential enrollment under the complete description that was used.

\begin{assumption}[Full-treatment unconfoundedness]
\label{ass:mif-exchangeability}
For every full treatment \(a\in\mathcal A\),
\[
    Y(a)\perp A\mid X.
\]
\end{assumption}

This assumption asks \(X\) to contain the common causes of course descriptions and enrollment, such as subject, level, instructor, prerequisites, and meeting time.

\begin{assumption}[Feature overlap within remainder--context strata]
\label{ass:mif-overlap}
For every candidate feature \(f\) in the search class and every supported
remainder--context pair \((r,x)\),
\[
    \epsilon
    \leq
    \pi_f(r,x)
    :=
    \mathbb E\{f(A,X)\mid R_f=r,X=x\}
    \leq
    1-\epsilon ,
\]
for some \(0<\epsilon<1/2\).

Here \(R_f\) denotes the remainder associated with feature \(f\): the
properties of the treatment that are held fixed when changing the
feature scored by \(f(A,X)\). Formally, \(R_f\) is defined through a
feature coordinate system in which
\(
    A
    \longleftrightarrow
    \{f(A,X),R_f,X\}
\)
is one-to-one for each $X=x$. We discuss how such feature coordinate systems can be
constructed in practice in \Cref{sec:cont-style-sep}.
\Cref{sec:remainder-definition} explains how they ensure that the resulting policy
outcome is well defined.
\end{assumption}

\Cref{ass:mif-overlap} requires that, among treatments with the same remainder
and context, both higher and lower feature values occur with positive
probability. It is the relevant overlap condition because the
feature-on and feature-off policies reweight the conditional treatment
distribution within each remainder--context stratum. The assumption
guarantees that the normalizing constants
$\pi_f(r,x)$ and $1-\pi_f(r,x)$
are positive. It rules out a feature whose on or off comparison would require course descriptions that never occur among comparable courses.

Under these assumptions, the feature-on and feature-off policy outcomes are identified from observed data.  The following proposition establishes the result.

\begin{proposition}[Identification of feature-on/off outcomes and causal contrasts]
\label{prop:iden}
For a feature-scoring function \(f\) and its feature-specific remainder \(R_f\), under \Cref{ass:mif-sutva,ass:mif-exchangeability,ass:mif-overlap}, the feature-on and feature-off policy outcomes are identified by
\[
    \Psi_f(1)
    =
    \mathbb E\left[
        \frac{f(A,X)Y}{\pi_f(R_f,X)}
    \right],
    \qquad
    \Psi_f(0)
    =
    \mathbb E\left[
        \frac{\{1-f(A,X)\}Y}
        {1-\pi_f(R_f,X)}
    \right].
\]
Hence, the raw causal contrast $\Delta_X(f)=\Psi_f(1)-\Psi_f(0)$ is identified.
The variance-weighted causal contrast is also identified:
\[
    \widetilde\Delta_X(f)
    =
    \mathbb E\left[
        f(A,X)\{Y-m_f(R_f,X)\}
    \right],
    \qquad \text{ where } \qquad
    m_f(r,x)=\mathbb E(Y\mid R_f=r,X=x).
\]
\end{proposition}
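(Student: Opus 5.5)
The plan is to work one remainder--context stratum at a time. I first identify the inner regression $\mathbb E\{Y(a)\mid R_f=r,X=x\}$, then integrate it against the tilted density $p(a\mid r,x,W_f=w)$, and finally average over $(R_f,X)$ using the tower property.

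The first step is the key one. By \Cref{ass:mif-overlap}, the map $A\leftrightarrow\{f(A,X),R_f,X\}$ is one-to-one for each $X=x$. Hence $R_f=r_f(A,X)$ is a measurable function of $(A,X)$. \Cref{ass:mif-exchangeability} gives $Y(a)\perp A\mid X$, so it also gives $Y(a)\perp (A,R_f)\mid X$. This yields
\[
\mathbb E\{Y(a)\mid R_f=r,X=x\}=\mathbb E\{Y(a)\mid A=a,R_f=r,X=x\}.
\]
The exchangeability statement is the one that matters here, since $R_f$ is a function of $A$ given $X$. Using \Cref{ass:mif-sutva}, the right-hand side equals $\mu(a,x):=\mathbb E(Y\mid A=a,X=x)$, evaluated on the support where $r=r_f(a,x)$. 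There is a subtlety: the left side conditions on $R_f=r$ rather than on $A=a$. I handle it by noting that conditioning on the larger $\sigma$-field $(A,R_f,X)$ versus $(R_f,X)$ does not change the conditional law of $Y(a)$, by conditional independence. I expect this measure-theoretic step, together with restricting to $a$ in the conditional support of $A$ given $(r,x)$, to be the main obstacle. I will state it carefully and rely on the overlap bounds $\epsilon\le\pi_f\le1-\epsilon$ only to guarantee that the normalizers are nonzero.

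Next I substitute the Bayes-rule form of $p(a\mid r,x,W_f=1)$ from \Cref{sec:causal-query}:
\[
\Psi_f(1\mid r,x)=\int \mu(a,x)\frac{f(a,x)}{\pi_f(r,x)}p(a\mid r,x)\,da=\mathbb E\Big[\frac{f(A,X)\mu(A,X)}{\pi_f(R_f,X)}\Bigm| R_f=r,X=x\Big].
\]
Taking the outer expectation and then replacing $\mu(A,X)$ by $Y$ via iterated expectations gives $\Psi_f(1)=\mathbb E[f(A,X)Y/\pi_f(R_f,X)]$. The replacement is valid because $f$ and $\pi_f$ are functions of $(A,X)$. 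The case $w=0$ is identical with $1-f$ and $1-\pi_f$, and the raw contrast $\Delta_X(f)$ follows by subtraction.

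For the variance-weighted contrast, I multiply the conditional contrast by $\pi_f(1-\pi_f)$ within a stratum. This gives
\[
(1-\pi_f)\,\mathbb E[fY\mid r,x]-\pi_f\,\mathbb E[(1-f)Y\mid r,x]=\mathbb E[fY\mid r,x]-\pi_f\, m_f(r,x).
\]
Since $\pi_f(r,x)=\mathbb E[f\mid r,x]$, this equals $\mathbb E[f\{Y-m_f(R_f,X)\}\mid R_f=r,X=x]$. Averaging over $(R_f,X)$ then yields the stated formula for $\widetilde\Delta_X(f)$. Finiteness of all the expectations follows from the standing integrability assumption together with $f\in[\epsilon,1-\epsilon]$.
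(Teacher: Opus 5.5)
Your proposal is correct and follows the paper's proof essentially step for step. You fix a stratum, use $Y(a)\perp(A,R_f)\mid X$ together with consistency to replace the counterfactual regression by an observed one, integrate against the tilted density with iterated expectations, and then carry out the same stratum-wise algebra for $\widetilde\Delta_X(f)$. Two differences are only cosmetic. You pass through $\mu(a,x)=\mathbb E(Y\mid A=a,X=x)$ rather than keeping $R_f$ in the conditioning set, which is equivalent because $R_f$ is a function of $(A,X)$. That fact comes from the definition $R_f=r_f(A,X)$, not from the one-to-one property of \Cref{ass:mif-overlap} that you cite.
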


The proof of \Cref{prop:iden} is in \Cref{app:iden-proof}. The identification formula of the variance-weighted causal contrast admits an intuitive interpretation. In the course example, \(m_f(R_f,X)\) is expected enrollment among courses with the same context and the same values of the content and writing properties recorded in \(R_f\).  The variance-weighted causal contrast thus measures the residual enrollment signal associated with the feature scored by \(f\) within those comparisons.

\textbf{Using a shared remainder for every feature.} \Cref{prop:iden} uses a feature-specific remainder \(R_f\), which
contains all treatment properties other than the candidate feature scored by
\(f\).  In principle, \(R_f\) changes with \(f\): the remainder for directness
must include formality, while the remainder for formality must include
directness.  Searching over many candidate features would therefore require a
different outcome regression
$m_f(R_f,X)=\mathbb E(Y\mid R_f,X)$
for each feature.

In many applications, however, candidate features describe separate variation
in presentation or style, while a common representation captures the
underlying content.  We can then use a shared remainder \(R\) across features.
For text, \(R\) may represent semantic content while \(f\) scores properties
such as formality, directness, or brevity.  For images, \(R\) may represent
object identity or biological content while \(f\) scores properties such as
rotation, blur, or contrast.  The required condition is that changing the
candidate feature does not systematically change treatment properties omitted
from \(R\).  Equivalently, conditional on \(R\) and \(X\), the feature label
\(W_f\) should contain no information about the remaining treatment variation.

In the course example, if \(R\) captures the semantic content of a description,
this condition means that making a description more formal should not also
change its brevity, directness, or other writing properties after conditioning
on content and course context.  In an image application, changing a feature
such as rotation should not systematically change object identity or other
visual properties represented outside the rotation feature.  Under this
separation condition, the feature-specific policy can be recovered using the
shared remainder \(R\), allowing one common outcome regression to be used for
all candidate features.

Specifically, for a shared remainder \(R\), write the feature-specific
remainder as
$R_f=(R,U_f),$
where \(U_f\) contains the additional treatment properties that would be
preserved by the feature-specific intervention but are not represented in the
shared remainder.  In an image application, for example, \(R\) can record
object or biological content, while \(U_f\) records the other visual
properties that a rotation or blur intervention should preserve.

\begin{assumption}[Shared-remainder mean balance]
\label{ass:mif-fixed-residual}
For every candidate feature-scoring function \(f\),
\[
\mathbb E\{f(A,X)\mid R_f,X\}
=
\mathbb E\{f(A,X)\mid R,X\}
\qquad\text{almost surely}.
\]
\end{assumption}

This assumption states that, after conditioning on content and context,
receiving a feature-on label provides no information about the remaining
feature-specific properties.  In the course example, fix the course content
\(R\) and context \(X\).  Learning whether a randomly selected rater called a
description formal provides no information about its length, directness,
politeness, or the other properties in \(U_f\).  Thus the descriptions
receiving formal and informal labels have the same distribution of these
other properties.  Conditioning on either label can then tilt formality
without changing the distribution of \(U_f\).  This preservation is in
distribution; it does not construct a paired rewrite that fixes \(U_f\)
description by description.

\begin{corollary}[Identification with a shared remainder]
\label{cor:mif-fixed-residual}
Under \Cref{ass:mif-sutva,ass:mif-exchangeability,ass:mif-overlap,ass:mif-fixed-residual}, the
feature-specific policy outcomes can be identified using the shared remainder:
\[
\Psi_f(1)
=
\mathbb E\left[
\frac{f(A,X)Y}
{\mathbb E\{f(A,X)\mid R,X\}}
\right],
\qquad
\Psi_f(0)
=
\mathbb E\left[
\frac{\{1-f(A,X)\}Y}
{1-\mathbb E\{f(A,X)\mid R,X\}}
\right].
\]
Moreover,
\[
\widetilde\Delta_X(f)
=
\mathbb E\left[
f(A,X)\{Y-m(R,X)\}
\right],
\qquad
\text{where}
\qquad
m(r,x)=\mathbb E(Y\mid R=r,X=x).
\]
\end{corollary}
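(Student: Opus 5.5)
The plan is to derive \Cref{cor:mif-fixed-residual} directly from \Cref{prop:iden}, without returning to the potential-outcome definitions. \Cref{prop:iden} already identifies $\Psi_f(1)$, $\Psi_f(0)$ and $\widetilde\Delta_X(f)$ using the feature-specific remainder $R_f=(R,U_f)$. It therefore suffices to show two things. First, each appearance of $\pi_f(R_f,X)$ may be replaced by $\pi(R,X):=\mathbb E\{f(A,X)\mid R,X\}$. Second, the regression $m_f(R_f,X)$ may be replaced by $m(R,X)$ inside the variance-weighted identity. One structural fact is used throughout. Because $R$ is a coordinate of $R_f$, we have $\sigma(R,X)\subseteq\sigma(R_f,X)$, so the tower property lets us condition on $(R_f,X)$ first and then on $(R,X)$.

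\textbf{Step 1 (policy outcomes).} By \Cref{ass:mif-fixed-residual}, $\pi_f(R_f,X)=\pi(R,X)$ almost surely. Substituting this into the two formulas of \Cref{prop:iden} gives the displayed expressions for $\Psi_f(1)$ and $\Psi_f(0)$ immediately. I will also note that the new denominators are bounded away from $0$ and $1$. This holds because \Cref{ass:mif-overlap} gives $\epsilon\le\pi_f(R_f,X)\le 1-\epsilon$, and these bounds are preserved by the almost-sure equality, and also by averaging over $U_f$. Hence the ratios are well defined and their expectations are finite.

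\textbf{Step 2 (variance-weighted contrast).} By \Cref{prop:iden}, $\widetilde\Delta_X(f)=\mathbb E[f(A,X)Y]-\mathbb E[f(A,X)\,m_f(R_f,X)]$, so it suffices to show
\[
\mathbb E[f(A,X)\,m_f(R_f,X)]=\mathbb E[f(A,X)\,m(R,X)].
\]
I will prove this by the following chain.
\begin{itemize}
\item Condition on $(R_f,X)$ to obtain $\mathbb E[f(A,X)\,m_f(R_f,X)]=\mathbb E[\pi_f(R_f,X)\,m_f(R_f,X)]$.
\item Apply \Cref{ass:mif-fixed-residual} to replace $\pi_f(R_f,X)$ by $\pi(R,X)$.
\item Since $\pi(R,X)$ is $(R,X)$-measurable, condition on $(R,X)$. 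The tower property gives $\mathbb E\{m_f(R_f,X)\mid R,X\}=m(R,X)$, so the expression becomes $\mathbb E[\pi(R,X)\,m(R,X)]$.
\item Reverse the first conditioning step, now conditioning on $(R,X)$, to get $\mathbb E[\pi(R,X)\,m(R,X)]=\mathbb E[f(A,X)\,m(R,X)]$.
\end{itemize}
Finiteness of all the terms follows from $0<f<1$ and the standing assumption that the displayed expectations are finite.

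The main obstacle is the middle of Step 2. The mean-balance assumption is stated only for the conditional mean of $f$, not for $f$ itself. The argument therefore has to be ordered so that $f$ is first replaced by its conditional mean given $(R_f,X)$, and only then is $m_f$ averaged down to $m$. Doing these in the other order would require a stronger independence between $f(A,X)$ and $U_f$. Writing the argument in the order above uses only \Cref{ass:mif-fixed-residual}, and this is what makes the single shared regression $m(R,X)$ valid for every candidate $f$.
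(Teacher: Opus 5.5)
Your proof is correct. For the variance-weighted contrast, your Step~2 is the same chain the paper uses. The paper writes $\mathbb E\{f(A,X)m_f(R_f,X)\}=\mathbb E\{\pi_R(R,X)m_f(R_f,X)\}=\mathbb E\{\pi_R(R,X)m(R,X)\}$ and justifies it with the mean-balance identity and the tower property $\mathbb E\{m_f(R_f,X)\mid R,X\}=m(R,X)$.

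For the policy outcomes you take a different and shorter route. You start from the $R_f$-based formulas of \Cref{prop:iden} and substitute the almost-sure identity $\pi_f(R_f,X)=\mathbb E\{f(A,X)\mid R,X\}$. You also check that the new denominators stay in $[\epsilon,1-\epsilon]$, which the paper leaves implicit.

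The paper argues in the opposite direction. It first identifies the policy defined by reweighting within $(R,X)$, through a separate stochastic-intervention argument based on $Y(a)\perp A\mid R,X$. It then shows that this $R$-based policy coincides with the feature-specific one. To do so, it rewrites \Cref{ass:mif-fixed-residual} as $W_f\perp U_f\mid R,X$, so that $p(u_f\mid r,x,W_f=w)=p(u_f\mid r,x)$. The $R$-based feature-on/off law is then the feature-specific law averaged over the unchanged distribution of $U_f$.

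Your route needs only \Cref{prop:iden} and one substitution. It avoids manipulating conditional densities on the strata $\{R_f=(r,u_f)\}$, so it is the cleaner proof of the displayed formulas.

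The paper's route establishes a structural fact yours does not. The reweighting actually carried out within $R$ is itself the feature-specific intervention, mixed over $U_f$. This underlies the paper's remarks that the corollary is an equality of population treatment laws, and that without the mean-balance condition the $R$-based reweighting is a bundled policy.
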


The proof of \Cref{cor:mif-fixed-residual} is in \Cref{app:mif-fixed-residual-proof}. The corollary is an equality of population treatment laws and policy
outcomes.  It says that a coarser fixed \(R\) need not retain omitted properties for the
same course under a particular policy draw.  
\Cref{ass:mif-fixed-residual} lets a coarser \(R\) recover the
feature-specific target after population averaging.

If \Cref{ass:mif-fixed-residual} fails, reweighting within \(R\)
still defines and, under \Cref{ass:mif-sutva,ass:mif-exchangeability}, identifies a supported stochastic policy on
the full treatment.  It may, however, change other treatment properties
together with the focal feature.  It is then a bundled policy rather than the
feature-specific intervention.

From this point forward, assume that
\Cref{ass:mif-fixed-residual} holds for every candidate feature in
the search class.  We therefore use the shared-remainder representation
\[
\widetilde\Delta_X(f)
=
\mathbb E\left[
f(A,X)\{Y-m(R,X)\}
\right],
\]
where one common outcome regression can be used for all candidate features.

\begin{proposition}[Threshold form of the budgeted MIF]
\label{prop:mif-optimizer-main}
Define the fixed-budget optimizer set as
\[
\mathcal F_{b,\widetilde\Delta}^*
=
\arg\max_{f\in\mathcal F_b}
\widetilde\Delta_X(f).
\]
Choose \(f_{b,\widetilde\Delta}^*
\in\mathcal F_{b,\widetilde\Delta}^*\), and define the conditional residual-outcome score
\[
\eta(a,x)
=
\mathbb E\{Y-m(R,X)\mid A=a,X=x\}.
\]
When \(\mathcal F_b\) is otherwise unrestricted
apart from the score bounds and budget,
the variance-weighted MIF assigns \(1-\epsilon\) to treatment--context pairs with the
largest values of \(\eta(a,x)\), assigns \(\epsilon\) to pairs with the
smallest values, and may assign an intermediate value at the budget threshold.
Equivalently, for budget threshold~\(\lambda_b\),
\[
f_{b,\widetilde\Delta}^*(a,x)=1-\epsilon
\quad\text{if }\eta(a,x)>\lambda_b,
\qquad
f_{b,\widetilde\Delta}^*(a,x)=\epsilon
\quad\text{if }\eta(a,x)<\lambda_b.
\]
The value at equality is chosen to satisfy the budget.  If
\(\mathbb P\{\eta(A,X)=\lambda_b\}=0\), the optimizer is unique up to
almost-sure equality. Without a fixed budget, the threshold is zero, and uniqueness holds if \(\mathbb P\{\eta(A,X)=0\}=0\).
\end{proposition}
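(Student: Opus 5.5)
The plan is to rewrite the objective as a linear functional of \(f\) and then solve a fractional-knapsack (Neyman--Pearson type) problem. Start from the shared-remainder representation established after \Cref{cor:mif-fixed-residual}, \(\widetilde\Delta_X(f)=\mathbb E[f(A,X)\{Y-m(R,X)\}]\). Since \(f(A,X)\) is \((A,X)\)-measurable, condition on \((A,X)\) using the tower property:
\[
\widetilde\Delta_X(f)=\mathbb E\{f(A,X)\,\eta(A,X)\}.
\]
Because \(A\) determines \(R\) given \(X\), the definition of \(\eta\) is the natural regression. The problem then becomes maximizing \(\mathbb E\{f\eta\}\) over measurable \(f\) with values in \([\epsilon,1-\epsilon]\) and \(\mathbb E f=b\).

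The main step substitutes \(g=(f-\epsilon)/(1-2\epsilon)\in[0,1]\). The constraint becomes \(\mathbb E g=\beta:=(b-\epsilon)/(1-2\epsilon)\in(0,1)\), and the objective becomes \(\epsilon\,\mathbb E\eta+(1-2\epsilon)\mathbb E\{g\eta\}\). So it suffices to maximize \(\mathbb E\{g\eta\}\). Choose \(\lambda_b\) as a \((1-\beta)\)-quantile of \(\eta(A,X)\), so that \(\mathbb P(\eta>\lambda_b)\le\beta\le\mathbb P(\eta\ge\lambda_b)\). Define the candidate \(g^*\) as \(1\) on \(\{\eta>\lambda_b\}\), \(0\) on \(\{\eta<\lambda_b\}\), and a constant \(\gamma\in[0,1]\) on the tie set, with \(\gamma\) chosen so that \(\mathbb E g^*=\beta\). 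For any feasible \(g\), consider
\[
\mathbb E\{(g^*-g)(\eta-\lambda_b)\}\ge 0.
\]
This holds pointwise, because \(g^*-g\ge0\) where \(\eta>\lambda_b\) and \(g^*-g\le0\) where \(\eta<\lambda_b\). The term \(\lambda_b\mathbb E(g^*-g)\) vanishes by the budget, so \(\mathbb E\{g^*\eta\}\ge\mathbb E\{g\eta\}\). Translating back gives the stated threshold form \(f^*=1-\epsilon\) or \(\epsilon\). Conversely, for any maximizer the nonnegative integrand must vanish a.s., which forces \(g=g^*\) a.s. off the tie set. This yields the characterization of every element of \(\mathcal F^*_{b,\widetilde\Delta}\), and uniqueness when \(\mathbb P(\eta=\lambda_b)=0\).

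For the unbudgeted case, maximize \(\mathbb E\{f\eta\}\) over \(f\in[\epsilon,1-\epsilon]\) with no constraint. Pointwise maximization gives \(1-\epsilon\) where \(\eta>0\) and \(\epsilon\) where \(\eta<0\), which corresponds to threshold zero. The same pointwise argument with \(\lambda=0\) gives both optimality and uniqueness off \(\{\eta=0\}\).

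I expect the main obstacle to be the preliminary reduction rather than the knapsack step. Three points need care. First, "otherwise unrestricted" must be interpreted as all measurable \(f\) with these constraints. Second, one must check that the shared-remainder identity and \Cref{ass:mif-overlap} / \Cref{ass:mif-fixed-residual} hold for every such \(f\), since the proposition assumes this for the search class. Third, \(m(R,X)\) does not depend on \(f\), so the objective is genuinely linear in \(f\). A further point is that \(\mathbb E\{f\,\mathbb E(Y-m\mid A,X)\}\) uses only integrability of \(Y\), which the paper already assumes.
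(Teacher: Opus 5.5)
Your proposal is correct and follows the paper's proof. It rewrites $\widetilde\Delta_X(f)=\mathbb E\{f(A,X)\eta(A,X)\}$, rescales $f=\epsilon+(1-2\epsilon)q$ with $q\in[0,1]$ and $\mathbb E q=(b-\epsilon)/(1-2\epsilon)$, thresholds $\eta$ at a quantile, and maximizes pointwise in the unbudgeted case. Your exchange inequality $\mathbb E\{(g^*-g)(\eta-\lambda_b)\}\geq 0$, with the budget cancelling the $\lambda_b$ term, supplies the step the paper only asserts and also shows that every maximizer, not just one, has the threshold form, which is what the statement claims.
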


The proof of \Cref{prop:mif-optimizer-main} is in \Cref{sec:fixed-budget-optimizer-proof}. \Cref{prop:mif-optimizer-main} gives a direct
interpretation of the MIF.  After adjusting enrollment for the shared remainder and
course context, the MIF \(f_{b,\widetilde\Delta}^*\) assigns high feature scores to
descriptions with the largest expected residual enrollment.  The budget
determines how far down this residual-outcome ranking the feature extends.  

\subsection{Causal estimation of the MIF}
\label{sec:causal-estimation}

The identification result turns exploratory feature discovery into a residualized prediction problem.  The target causal contrast objective is
\[
    \widetilde\Delta_X(f)
    =
    \mathbb E\left[f(A,X)\{Y-m(R,X)\}\right],
    \qquad
    m(r,x)=\mathbb E(Y\mid R=r,X=x).
\]
Thus we do not need to estimate the high-dimensional regression \(\mathbb E(Y\mid A,X)\) to learn the MIF.  We estimate the outcome regression \(m(r,x)\) once from the shared content representation and course context, and the MIF learns which features of the unstructured treatment explain the remaining outcome variation.  The learned feature is selected for interpretation and further validation.

\textbf{Estimation algorithm.}
Given data \(\{(A_i,X_i,Y_i)\}_{i=1}^n\), compute the fixed remainder \(R_i\) and choose a class \(\{f_\theta:\theta\in\Theta\}\) of feature-scoring functions over the modifiable part of the treatment.  Each feature-scoring function should have a feature-specific remainder satisfying the reconstruction and variation properties in \Cref{sec:causal-identification}.  When \(R\) is coarser, the fixed-remainder preservation condition should also be credible.
\begin{enumerate}
    \item Estimate \(m(r,x)=\mathbb E(Y\mid R=r,X=x)\).  Cross-fitting may be used to obtain held-out predictions \(\widehat m(R_i,X_i)\).  The difference \(Y_i-\widehat m(R_i,X_i)\) is enrollment above or below what is expected from the course's content and context.
    \item Parameterize \(f_\theta\) to lie in \([\epsilon,1-\epsilon]\), for example with the neural-network logit~\(g_\theta(a,x)\) and logistic sigmoid~\(\sigma(u)=\{1+\exp(-u)\}^{-1}\), so that \(f_\theta(a,x)=\epsilon+(1-2\epsilon)\sigma\{g_\theta(a,x)\}\). Estimate the MIF by
    \[
        \widehat\theta
        \in
        \arg\max_{\theta\in\Theta_b}
        \frac1n
        \sum_{i=1}^n
        f_\theta(A_i,X_i)
        \{Y_i-\widehat m(R_i,X_i)\},
        \qquad
        \widehat f=f_{\widehat\theta},
    \]
    where \(\Theta_b\) imposes the budget when one is used, for example through the average-feature-mass constraint \(\frac1n\sum_{i=1}^n f_\theta(A_i,X_i)=b\).
    \item Inspect and validate the learned feature.  Compare treatments with high and low values of \(\widehat f(A_i,X_i)\), evaluate the variance-weighted causal contrast objective on held-out data, and act on the learned direction for controlled perturbations as discussed in \Cref{sec:interpret}.
\end{enumerate}

\begin{proposition}[Consistency of the learned MIF]
\label{prop:mif-consistency}
Suppose \(\Theta\) is compact, the population objective $\widetilde\Delta_X(f)
    =
    \mathbb E\left[f(A,X)\{Y-m(R,X)\}\right]$ has a unique maximizer \(\theta_0\), \(f_\theta(a,x)\) is bounded and continuous in \(\theta\), and \(\widehat m(r,x)\) converges uniformly in probability to \(m(r,x)\).  If \(\mathbb E|Y|<\infty\), then
\[
    \widehat\theta\overset{p}{\longrightarrow}\theta_0.
\]
\end{proposition}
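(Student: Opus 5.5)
The plan is the standard argmax consistency argument for M-estimators (van der Vaart, Theorem 5.7). Define the population criterion
\[
M(\theta)=\mathbb E\left[f_\theta(A,X)\{Y-m(R,X)\}\right]
\]
and the empirical criterion
\[
\widehat M_n(\theta)=n^{-1}\sum_{i=1}^n f_\theta(A_i,X_i)\{Y_i-\widehat m(R_i,X_i)\}.
\]
It suffices to show two things: (i) $\sup_{\theta\in\Theta}|\widehat M_n(\theta)-M(\theta)|\to0$ in probability, and (ii) $\theta_0$ is a well-separated maximizer, meaning $\sup_{\|\theta-\theta_0\|\ge\delta}M(\theta)<M(\theta_0)$ for every $\delta>0$. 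Since $\widehat\theta$ maximizes $\widehat M_n$, we have $\widehat M_n(\widehat\theta)\ge\widehat M_n(\theta_0)$, and consistency follows from (i) and (ii). If a budget is imposed through $\Theta_b$, we treat the constraint set as the compact parameter set and assume the empirical budget constraint is handled the same way, or simply work with the unconstrained class as in the statement.

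For (i), split the error into two terms:
\[
\widehat M_n(\theta)-M(\theta)=\Big[n^{-1}\textstyle\sum_i f_\theta(A_i,X_i)\{Y_i-m(R_i,X_i)\}-M(\theta)\Big]+n^{-1}\textstyle\sum_i f_\theta(A_i,X_i)\{m(R_i,X_i)-\widehat m(R_i,X_i)\}.
\]
The second term is bounded uniformly in $\theta$ by $\sup_\theta\|f_\theta\|_\infty\cdot\sup_{r,x}|\widehat m(r,x)-m(r,x)|$. Because $f_\theta$ takes values in $[\epsilon,1-\epsilon]$, this is at most $\sup|\widehat m-m|$, which tends to zero in probability by assumption. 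The first term calls for a uniform law of large numbers over the class $\{z\mapsto f_\theta(a,x)\{y-m(r,x)\}:\theta\in\Theta\}$. I will invoke the classical ULLN for compact parameter spaces (Newey--McFadden Lemma 2.4, or Jennrich): each summand is continuous in $\theta$ for every $z$, and it is dominated by the envelope $|Y|+|m(R,X)|$. The envelope is integrable because $\mathbb E|m(R,X)|\le\mathbb E|Y|<\infty$ by Jensen. Together these give uniform convergence in probability (indeed almost surely).

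For (ii), dominated convergence with the same envelope shows that $M$ is continuous on $\Theta$. A continuous function on a compact set with a unique maximizer $\theta_0$ has a well-separated maximum: the set $\{\theta:\|\theta-\theta_0\|\ge\delta\}$ is compact, so $M$ attains its maximum there, and that maximum is strictly below $M(\theta_0)$. The final step is routine. Combining, we get $M(\theta_0)-M(\widehat\theta)\le 2\sup_\theta|\widehat M_n-M|+o_p(1)$, where the $o_p(1)$ allows $\widehat\theta$ to be an approximate maximizer. Hence $\mathbb P(\|\widehat\theta-\theta_0\|\ge\delta)\le\mathbb P\{M(\theta_0)-M(\widehat\theta)\ge\eta_\delta\}\to0$, where $\eta_\delta>0$ is the separation gap from (ii).

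The main obstacle is the first term of (i). The plug-in term is harmless only because the hypothesis assumes \emph{uniform} convergence of $\widehat m$; cross-fitting does not matter here. The ULLN, by contrast, requires the continuity-in-$\theta$ hypothesis to hold pointwise in $(a,x)$, together with measurability of the supremum. I would check it through the dominated finite-cover (bracketing) argument. Compactness gives, for each $\delta$, finitely many balls on which the oscillation $\sup_{\|\theta'-\theta\|<\rho}|f_{\theta'}-f_\theta|\cdot|Y-m|$ has expectation below $\delta$, by dominated convergence. This yields finitely many $L^1$ brackets, and the scalar strong law then applies to each bracket.
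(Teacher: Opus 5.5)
Your proposal is correct and follows essentially the same route as the paper. Like the paper, you split the error into an oracle term handled by a uniform law of large numbers (envelope $|Y-m(R,X)|$, integrable because $\mathbb E|Y|<\infty$) and a plug-in term bounded by $\sup_{r,x}|\widehat m(r,x)-m(r,x)|$, then apply the argmax theorem; your explicit check that $\theta_0$ is well separated (continuity of the population criterion via dominated convergence, plus compactness) just spells out what the paper folds into its appeal to the argmax theorem.
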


\begin{proposition}[Held-out central limit theorem for a learned feature]
\label{prop:heldout-clt}
Let \(\widehat f\) be trained independently of an evaluation sample of size \(n_{\mathrm{eval}}\).  Conditional on the training data, suppose
\[
    V(\widehat f)
    =
    \operatorname{Var}\left[
        \widehat f(A,X)\{Y-m(R,X)\}
        \mid\widehat f
    \right],
\]
is finite and positive.  Then
\[
    \sqrt{n_{\mathrm{eval}}}
    \left[
        \frac1{n_{\mathrm{eval}}}
        \sum_{i=1}^{n_{\mathrm{eval}}}
        \widehat f(A_i,X_i)\{Y_i-m(R_i,X_i)\}
        -
        \widetilde\Delta_X(\widehat f)
    \right]
    \rightsquigarrow
    \mathcal N\{0,V(\widehat f)\}.
\]
The variance can be estimated by the empirical variance of the held-out summands. The same conclusion holds with a cross-fitted \(\widehat m\) when its contribution to the held-out average is \(o_p(n_{\mathrm{eval}}^{-1/2})\).
\end{proposition}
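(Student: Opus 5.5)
Conditional on the training data, \(\widehat f\) is a fixed measurable function. The held-out summands are then i.i.d., and the classical Lindeberg--L\'evy central limit theorem should give the result directly. The plan has four steps: freeze \(\widehat f\), check the mean and variance of the summands, apply the CLT conditionally, and then handle the cross-fitted \(\widehat m\) and the variance estimator by perturbation arguments.

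Let \(\mathcal D_{\mathrm{tr}}\) denote the training data and define
\[
\xi_i=\widehat f(A_i,X_i)\{Y_i-m(R_i,X_i)\}, \qquad i=1,\ldots,n_{\mathrm{eval}}.
\]
The evaluation sample is independent of \(\mathcal D_{\mathrm{tr}}\) and i.i.d. Hence, conditional on \(\mathcal D_{\mathrm{tr}}\), the \(\xi_i\) are i.i.d. draws of \(\widehat f(A,X)\{Y-m(R,X)\}\) with \(\widehat f\) held fixed.

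Their conditional mean is \(\mathbb E[\widehat f(A,X)\{Y-m(R,X)\}\mid\widehat f]\). By the shared-remainder representation established after \Cref{cor:mif-fixed-residual} (applied to the fixed function \(\widehat f\)), this equals \(\widetilde\Delta_X(\widehat f)\). Integrability follows from \(\widehat f\in[\epsilon,1-\epsilon]\) together with finiteness of \(V(\widehat f)\). The conditional variance is \(V(\widehat f)\in(0,\infty)\) by hypothesis.

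Applying Lindeberg--L\'evy conditionally gives the displayed weak convergence. The conditional CDF of the normalized sum converges pointwise to the \(\mathcal N\{0,V(\widehat f)\}\) CDF for almost every training realization. If an unconditional statement is wanted, we can pass to it by dominated convergence on the bounded conditional CDFs, after standardizing by \(V(\widehat f)^{1/2}\).

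For the variance estimator, the empirical variance of the \(\xi_i\) is consistent for \(V(\widehat f)\) conditionally, by the weak law of large numbers applied to \(\xi_i\) and \(\xi_i^2\) (the latter has finite mean). Slutsky's lemma then gives the studentized version.

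For the cross-fitted \(\widehat m\), I would write the feasible average as the oracle average plus the remainder
\[
-\frac1{n_{\mathrm{eval}}}\sum_i \widehat f(A_i,X_i)\{\widehat m(R_i,X_i)-m(R_i,X_i)\}.
\]
By assumption this remainder is \(o_p(n_{\mathrm{eval}}^{-1/2})\), so Slutsky's lemma transfers the limit. The empirical variance built from the feasible summands differs from the oracle one by terms controlled by \(\frac1{n_{\mathrm{eval}}}\sum_i(\widehat m-m)^2\). To keep the variance estimator consistent, I would add the mild requirement that this quantity is \(o_p(1)\), which holds for any \(L_2\)-consistent \(\widehat m\).

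The main obstacle is rigor in the conditioning step: making precise that the CLT holds conditionally on the training data, in a sense strong enough to combine with the data-dependent centering \(\widetilde\Delta_X(\widehat f)\). I would handle this by the dominated-convergence argument above. A secondary subtlety is that \(V(\widehat f)\) may vary with \(n\) if the training size grows. The cleanest route there is to apply the Lindeberg--Feller theorem to the standardized triangular array, assuming a uniform Lindeberg condition such as a bounded \(2+\delta\) moment of \(Y-m(R,X)\) together with \(V(\widehat f)\) bounded away from zero.
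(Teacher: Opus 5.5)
Your proposal is correct and follows the same route as the paper's proof. You condition on the training data so that $\widehat f$ is fixed, apply the classical i.i.d.\ central limit theorem to the held-out summands with mean $\widetilde\Delta_X(\widehat f)$ and variance $V(\widehat f)$, and transfer the limit to the cross-fitted $\widehat m$ via Slutsky's lemma. Your additional steps are refinements that the paper leaves implicit rather than a different argument: the weak law for the empirical variance, the extra condition $n_{\mathrm{eval}}^{-1}\sum_i\{\widehat m(R_i,X_i)-m(R_i,X_i)\}^2=o_p(1)$ for the feasible variance estimator, and the triangular-array (Lindeberg--Feller) treatment when the training size grows.
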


The proofs of \Cref{prop:mif-consistency,prop:heldout-clt} are in \Cref{app:mif-consistency-proof,app:heldout-clt-proof}. Here \(\widetilde\Delta_X(\widehat f)\) is the population objective for the realized trained feature, conditional on the training data.  The consistency result concerns the MIF within the chosen parameterization.  The held-out result gives inference for the value of a learned feature conditional on the training sample; it does not provide post-selection inference for the optimizer. We use the MIF algorithm primarily for exploratory selection and interpretation.

\subsection{Interpreting and acting on a learned feature}
\label{sec:interpret}

The MIF is a learned function: among the features we allow ourselves to search over, it is the function \(f\) that gives the largest variance-weighted causal contrast in potential outcomes.  After learning such a function, we still have to understand what it has found and decide whether it can be used.  In the course-description example, a large value of \(\widehat f(A,X)\) might correspond to direct address, concrete examples, a shorter opening sentence, a more formal tone, or some combination of these.  The first step is therefore descriptive: inspect texts with small and large values of \(\widehat f\), holding the relevant context fixed, and ask what changes.

\textbf{Why interpretation is needed.}
A learned feature can be influential without being immediately legible.  This is also the difficulty with more traditional topic-model approaches.  Suppose a topic model finds that
$\mathbb E\{Y(k)\}$
is large for some topic \(k\), where \(Y(k)\) denotes the potential outcome under topic \(k\).  If the topic is unmodifiable, the result is not an actionable recommendation.  If the topic is treated as modifiable, there are still two problems.  First, it is unclear how to systematically rewrite a given text so that it is assigned to topic \(k\).  Second, topics may miss important treatment dimensions such as tone, sentiment, formality, politeness, or brevity.  These features can cut across topics rather than coincide with them.

The MIF algorithm replaces a pre-specified topic label with the binary feature intervention \(W_{\widehat f}\) induced by the learned feature-scoring function
\[
    \mathbb P(W_{\widehat f}=1\mid A,X)
    =
    \widehat f(A,X).
\]
Because \(\widehat f\) is learned from the data, the resulting \(W_{\widehat f}\) may represent any modifiable feature allowed by the representation and parameterization.  The simplest interpretation is empirical: compare texts with high and low values of \(\widehat f(A,X)\), and infer what the feature appears to represent.

\textbf{From inspection to perturbation.}
Inspection tells us what the learned feature appears to represent.  It does not tell us whether the representation can be moved in a controlled direction.  When the treatment is represented by an embedding, a natural diagnostic is to perturb the modifiable part of the embedding in the direction that increases the feature score, and then decode the perturbed embedding back to the treatment space.

For text, this means changing the learned feature \(A\) (e.g. style) while leaving the remainder (e.g. content) unchanged.  For a course description, the goal is to change how the course is described while preserving the subject, prerequisites, and learning goals.  For an image, the same idea changes presentation while preserving content.  For a dynamic treatment sequence, it searches nearby or feasible future continuations while treating the past as unmodifiable.  These algorithms implicitly rely on the feature-remainder support condition: the new (style) feature should be compatible with the (content) remainder.

There are two closely related ways to act on a learned feature. One can replace \(A\) by another treatment value \(\widetilde A\) (e.g., text) such that \(\mathbb P(W_{\widehat f}=1\mid \widetilde A,X)\) is larger. Alternatively, one can \textit{nudge} the unstructured treatment \(A\) directly in a local direction that increases the feature score.

\textbf{Nudging without covariates.}
Suppose the learned feature-scoring function depends only on the treatment, without any covariates, so that
\[
    \mathbb P(W_{\widehat f}=1\mid A)
    =
    \widehat f(A),
    \qquad
    \widehat f(A)\in[\epsilon,1-\epsilon].
\]
For a small perturbation \(\delta\), Taylor expansion gives
\[
    \widehat f(A+\delta)
    \approx
    \widehat f(A)
    +
    \delta^\top
    \nabla_{a}
    \widehat f(A).
\]
Thus the steepest local increase is in the direction of
\(\nabla_{a}\widehat f(A)\).
In the empirical studies, we use
    \(A^{\mathrm{new}}=A+\lambda\nabla_a\widehat f(A)/\nu(A)\),
where \(\lambda>0\) is a step size and \(\nu(A)\) is a normalization factor.  This normalization is not part of the estimand; it is a numerical device that makes steps comparable across examples.

\textbf{Nudging with covariates.}
When the learned feature-scoring function depends on covariates \(X\):
\[
    \mathbb P(W_{\widehat f}=1\mid A,X)
    =
    \widehat f(A,X),
\]
the covariate is held fixed.  For a small perturbation \(\delta\) to the style embedding,
\[
\begin{aligned}
    \widehat f(A+\delta,X)
    &=
    \widehat f\{(A,X)+(\delta,0)\}
    \approx
    \widehat f(A,X)
    +
    \delta^\top
    \nabla_{a}
    \widehat f(A,X).
\end{aligned}
\]
The gradient is taken only with respect to the modifiable part of \(A\).  This is intentional: if \(X\) contains the course department, the student population, the article topic, or the publication time, nudging should not change those quantities.  It should only change the treatment feature being probed.

\textbf{Why smoothing is useful.}
The population optimizer often has a hard-cutoff behavior: it puts mass near the boundary values \(\epsilon\) and \(1-\epsilon\), as shown in \Cref{prop:mif-optimizer-main}. This is useful for feature selection, but it can make gradient-based perturbation difficult because the final sigmoid layer may be nearly flat near the boundaries. In the nudging experiments, we therefore replace the final sigmoid \(\sigma(z)\) by the temperature-smoothed version \(\sigma(z/\tau)\) when computing gradients, for positive temperature~\(\tau\). This smoothing is used only for the diagnostic perturbation step. It does not change the target estimand or the learned MIF.

\textbf{Budgets, weighting, and selected regions.}
When several treatment dimensions may affect the outcome, such as formality, brevity, politeness, or sentiment, the learned feature depends on how broad a region it is allowed to select.  \Cref{sec:causal-query} controls this breadth through the budget
\[
    \mathbb E\{f(A,X)\}=b.
\]
Small budgets focus attention on the most outcome-relevant treatment regions.  Larger budgets ask for a broader feature and therefore include weaker regions as well.

The optimizer result in \Cref{sec:causal-query} gives a simple way to read the learned feature.  Recall that the variance-weighted causal contrast objective is
$\widetilde\Delta_X(f)
    =
    \mathbb E\left[f(A,X)\{Y-m(R,X)\}\right].$
The MIF ranks treatment-context pairs by their expected outcome residual.  In the course-description example, it ranks descriptions by enrollment above or below what would be expected from the course content and context.  A budgeted MIF then selects the upper tail of this residual-outcome score.  The selected region expands as the budget increases.

This also clarifies why we use the variance-weighted causal contrast objective for discovery.  The raw causal contrast
   $ \Delta_X(f)
    =
    \Psi_f(1)-\Psi_f(0)$
is the natural estimand for evaluating a fixed feature, but it is normalized by the amount of feature-on and feature-off mass within context.  During search, this normalization can make rare textual patterns look important.  The variance-weighted causal contrast objective \(\widetilde\Delta_X(f)\) keeps the same feature-level potential outcomes but scores a feature by the amount of residual outcome variation it captures.  This makes the learned feature easier to interpret: high \(\widehat f\) should mark descriptions that sit in high residual-outcome regions, not merely descriptions that form a tiny high-contrast group.

\subsection{Practical considerations of the MIF algorithm: What features of the treatment are modifiable?}
\label{sec:cont-style-sep}

Not every feature of a treatment is something one can or should change.  If the learned MIF says that courses in one subject enroll better than courses in another, it may be causally meaningful, but it is not an actionable writing recommendation.  An instructor cannot turn Bayesian statistics into organic chemistry.  What she can change is how the same course is described.  We therefore distinguish between parts of the treatment that are unmodifiable and parts that may be modified.

For text, this separation is often a separation between content and style.  Content captures what the course is about: the topic, prerequisites, learning goals, and substantive material.  Style captures how that content is expressed: whether the description is concrete or abstract, direct or indirect, formal or conversational, terse or detailed.  In the MIF search, we restrict attention to the modifiable part of the treatment.  In the revision step, we change only that part while preserving the original content.  This makes the learned feature closer to a recommendation an instructor could actually use.

The same issue appears in a news-headline example.  A headline about a tennis tournament should not be modified into a headline about a presidential election.  The topic and factual information are unmodifiable.  At the same time, other aspects of the headline, such as formality, brevity, or directness, may be freely modified.  In this work, we refer to the unmodifiable aspects of the text as the \textit{content}, and the aspects of the text that can be modified as the \textit{style}.

The content-style distinction is particularly important in the MIF algorithm because we learn the feature-scoring function $f$ from the data and use it to define the binary feature intervention $W_f$.\footnote{Most existing methods pre-define the treatments. These treatments are assumed to be modifiable, so content-style separation is not applicable.}  We need to ensure that $W_f$ represents a modifiable part of the treatment.  There are many possible definitions of content and style, and the right definition depends on the application.  For example, in the content-style separation literature, sentiment is typically treated as style \citep{patel2023learning,patel2024styledistance,wegmann2022same}.  But it is also reasonable to treat sentiment as content, since changing a text from positive to negative can fundamentally alter the information it conveys.  We therefore allow different definitions of content and style in different settings.

For now, we assume that the text embedding $A$ can be separated\footnote{Unlike existing approaches that typically focus only on learning style embeddings, our content-style separation algorithm learns both content and style embeddings and allows the original embedding to be reconstructed from these components.} into the content embedding $A_{\mathrm{unmodifiable}}$ and the style embedding $A_{\mathrm{modifiable}}$, and that one can reconstruct $A$ from $A_{\mathrm{unmodifiable}}$ and $A_{\mathrm{modifiable}}$.  Roughly speaking, the content embedding captures all information in the text that is considered content, while the style embedding captures all information in the text that is considered style.  Consequently, our feature-scoring function becomes
\[
    \mathbb{P}(W_f = 1 \mid A_{\mathrm{modifiable}}, X)
    =
    f(A_{\mathrm{modifiable}},X),
\]
instead of
$    \mathbb{P}(W_f = 1 \mid A, X)
    =
    f(A,X).$
This is also why topic-model-based approaches are not ideal for influential feature discovery: the information encoded by topic models is primarily content.

\textbf{Known covariates and support.}
We implicitly assume that the known confounder $X$ does not contain stylistic information.  It may contain content information, as well as other non-text features not included in $A$.  For instance, in the news-headline setting, $X$ may include the article's topic and publication time, but not the headline's formality if formality is treated as style.  We also assume that content and style have independent supports: any content can in principle occur with any style, and any style can in principle occur with any content.  If all positive reviews were exclusively about food, while all negative reviews were exclusively about drinks, then disentangling content from style would be difficult, if not impossible.

The same idea extends beyond text.  In images, the unmodifiable part might be object identity or biological content, while the modifiable part might be blur, contrast, rotation, or other acquisition choices.  In dynamic treatment sequences, the past is unmodifiable and the future is modifiable.  In such cases, content can naturally be interpreted as the portion of the treatment sequence that has already been administered, while style corresponds to the choices that remain modifiable in the future.  This setting is often more straightforward than text or image applications, since one can enforce content-style separation by treating the past trajectory as unmodifiable and only intervening on future treatments.

\textbf{Existing style embeddings can still contain content.}
We begin with a simple failure mode. Existing content-style separation methods often define style in advance. They then use contrastive learning to extract style embeddings \citep{patel2023learning,patel2024styledistance,wegmann2022same}. In these works, sentiment is typically treated as style rather than content. But even under this convention, the learned style embeddings may still contain substantial content information.

Consider sentences of the form
\[
    \texttt{The [food/drink item] is [adj\_1] and [adj\_2]},
\]
where each sentence is equally likely to be about food or drink, and independently equally likely to be positive or negative.  Here, the content is the food or drink item, while the style is represented by the adjectives.  This matches the convention in which sentiment is treated as part of style.

We perform a probing experiment.  We ask whether two binary variables can be predicted from the style embeddings: the food/drink label and the positive/negative adjective label.  For the style embeddings of \citet{patel2024styledistance}, the test accuracies for predicting food/drink and positive/negative are $0.97$ and $1.00$, respectively.\footnote{In both setups, all quantities are averaged across 10 experiments.}  For the style embeddings of \citet{wegmann2022same}, the corresponding accuracies are $0.96$ and $1.00$.  In other words, the style embeddings still contain a large amount of content information.

\textbf{Defining content and style.}
We now describe the representation we want to learn.  The text embedding $A$ is split into the content embedding $A_{\mathrm{unmodifiable}}$ and the style embedding $A_{\mathrm{modifiable}}$.  Ideally, $A_{\mathrm{unmodifiable}}$ encodes the parts of the text that cannot be modified, while $A_{\mathrm{modifiable}}$ encodes the parts of the text that can be modified.  The pair $(A_{\mathrm{unmodifiable}},A_{\mathrm{modifiable}})$ should still contain enough information to reconstruct the original text embedding $A$.

How should content and style be defined?  One possibility is to define content as what the text is about, and style as everything other than content.  Under this definition, sentiment is part of content, because changing sentiment changes what the text says.  This differs from the convention in \citet{wegmann2022same} and \citet{patel2023learning,patel2024styledistance}, where sentiment is treated as style.

Another possibility is to define content for the application.  In the food/drink review example, for instance, we may define the food or drink item as content and everything else as style.  We do not pre-define the style feature itself, because our goal is to discover the MIF from observational data.  We only specify what is unmodifiable; the remaining variation is the space in which the MIF searches.

\textbf{Extracting content and style embeddings.}
To separate the text embedding $A$ into content and style embeddings, we develop a content-style separation algorithm that uses four losses:
\begin{enumerate}
    \item \textbf{Reconstruction loss}, ensuring that we can reconstruct the original embedding from the content and style embeddings.
    \item \textbf{Content loss}, ensuring that texts with the same content have similar content embeddings.
    \item \textbf{Style loss}, ensuring that texts with the same style have similar style embeddings.
    \item \textbf{Independence-of-support loss}, ensuring that content and style embeddings have independent supports.
\end{enumerate}

Theoretically, losses \#1, \#2, and \#3 are sufficient for content-style separation, even though loss \#4 can also be added to further encourage separation.  In practice, we minimize a linear combination of these losses. \Cref{sec:cont-style-sep-theory} details this algorithm, and proves in \Cref{prop:content-style-sep} that, under suitable assumptions, perfect content-style separation is achieved when the reconstruction, content, and style losses are all zero.  In practice, we cannot expect to achieve zero loss, but we can still achieve good separation with small losses.  \Cref{sec:cont-style-sep-examples} reports controlled separation, the failure of a headline representation under narrow style supervision, and approximate sentiment preservation in restaurant-review transfers.

\section{Empirical studies}
\label{sec:empirical_studies}

The empirical studies ask whether the MIF (i) recovers a known outcome-relevant feature after conditioning on context, (ii) provides a direction for modifying the treatment while approximately preserving its unmodifiable component, and (iii) changes coherently with the search representation and feature budget.

Across text, image, and dynamic treatment-sequence studies, the learned MIF recovers the feature used to generate the outcome, including settings in which the preferred direction reverses with context. Nudging the treatment along the MIF direction changes formality, toxicity, rotation, and blur in the corresponding direction. In particular, in the formality study on text, the embedding-based MIF attains a held-out contrast of \(1.4\), compared with \(0.24\) for the best topic-based split and \(2.0\) for a perfect formality split. The content--style separation algorithm also succeeds in controlled examples.

\subsection{Experiment I: Empirical studies on text-based treatments}
\label{sec:emp-text}

The text studies consider formality in user comments from Grammarly's Yahoo Answers Formality Corpus (GYAFC), toxicity in ParaDetox comments, and sentiment in food and drink reviews.

\subsubsection{GYAFC: User comments, response helpfulness, and formality}
\label{sec:emp-main-gyafc}

\textbf{Causal question of interest.}
What textual features of a user comment are associated with more helpful responses, after accounting for the context in which the comment appears? Operationally, can the MIF recover formality both when its outcome-enhancing direction is shared across contexts and when that direction reverses between casual and advice-seeking threads?

\textbf{Experimental setup.}
We use GYAFC comments~\citep{rao2018dear} from its Family and Relationships domain. The context is \(X=0\) for a casual discussion thread and \(X=1\) for an advice-seeking thread, and \(S_{\mathrm{formal}}\) records whether a comment is formal. In the first semi-synthetic scenario, formal comments have larger response-helpfulness outcomes in both contexts. In the second, the outcome-relevant direction reverses: informal comments have larger outcomes in casual threads, whereas formal comments have larger outcomes in advice-seeking threads.

\textbf{Results.}
On the held-out test set, the average learned scores for informal and formal comments in the shared-direction scenario are \(0.19\) and \(0.82\) when \(X=0\), and \(0.26\) and \(0.84\) when \(X=1\). In the reversing scenario, the corresponding test-set averages are \(0.81\) and \(0.22\) when \(X=0\), but \(0.28\) and \(0.85\) when \(X=1\). Thus, the learned score recovers both a common formality direction and a formality direction that changes with context (\Cref{fig:exp1-plot}).

\begin{figure}[!htbp]
    \centering
    \includegraphics[width=0.92\textwidth]{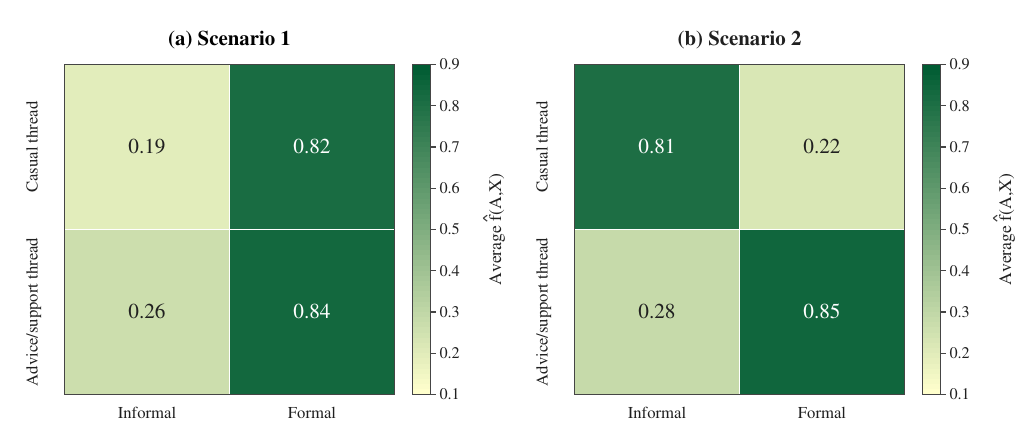}
    \caption{The learned MIF recovers the outcome-relevant formality direction on the held-out test set. In the shared-direction scenario, formal comments receive larger scores in both contexts. In the context-dependent scenario, informal comments receive larger scores in casual threads and formal comments receive larger scores in advice-seeking threads. Values inside the cells are test-set average learned scores; larger scores indicate the outcome-favored formality direction.}
    \label{fig:exp1-plot}
\end{figure}

\textbf{Nudging results.}
In the shared-direction scenario, ten nudging iterations increase average formality from approximately \(0.2\) to nearly \(1.0\). In the context-dependent scenario, nudging initially informal comments in advice-seeking threads increases average formality from approximately \(0.3\) to \(0.8\), while nudging initially formal comments in casual threads decreases average formality from approximately \(0.9\) to \(0.5\). The perturbation therefore follows the context-specific direction rather than pushing every comment toward formality (\Cref{fig:exp1-nudging}).

\begin{figure}[!htbp]
    \centering
    \includegraphics[width=0.80\textwidth]{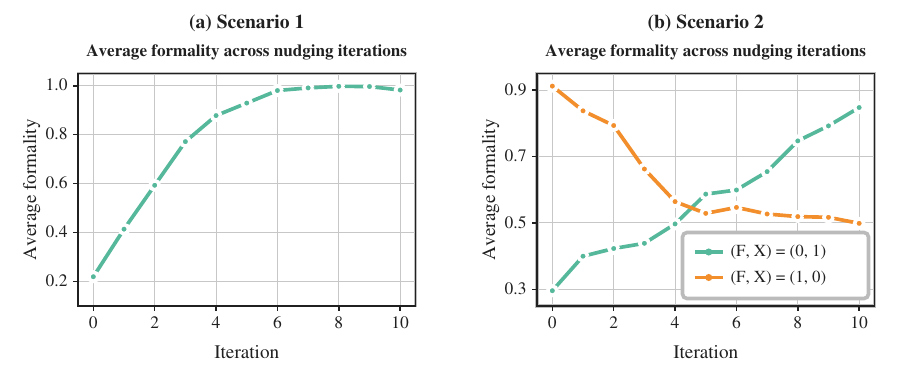}
    \caption{Nudging changes formality in the direction associated with larger outcomes. In Scenario 1, nudging increases formality because formal comments are associated with higher outcomes in both contexts. In Scenario 2, the direction depends on context: nudging raises formality in advice-seeking threads and lowers it in casual threads. Higher is more formal.}
    \label{fig:exp1-nudging}
\end{figure}

\textbf{Causal question of interest: no-covariate diagnostic.}
When formality is the only outcome-relevant feature, does the learned MIF recover formality, and does nudging move comments toward the higher-outcome direction? Can a semantic-topic feature class recover this stylistic feature, which cuts across topics, as well as an embedding-based search?

\textbf{Experimental setup.}
Using the same GYAFC domain, we remove covariates, learn the MIF from SONAR embeddings under an outcome determined by formality, and nudge the learned embedding. Decoded text is evaluated with \texttt{s-nlp/xlmr\_formality\_classifier}, a Hugging Face evaluator that returns a formality score in \([0,1]\). We then compare the held-out variance-weighted causal contrast with the best split available from latent Dirichlet allocation (LDA) topic assignments.

\textbf{Results.}
Formal sentences concentrate at large learned scores and informal sentences at small scores, while the external formality score increases over ten nudging iterations. The five displayed nudges generally preserve the main semantic content while increasing formality. The embedding-based MIF attains a held-out contrast of \(1.4\), compared with \(0.24\) for the best topic-based split and \(2.0\) for a perfect formality split. The selected topics describe relationship themes but do not isolate formality, which cuts across those themes.

\textbf{Remaining results.}
\Cref{sec:text-gyafc} gives the shared- and context-dependent data-generating processes, estimation and nudging settings, and qualitative low- and high-score comments. \Cref{sec:formality-diagnostic} reports the no-covariate score distribution, formality trajectory, and before-and-after texts; \Cref{sec:topic-model-failure} gives the complete topic search, objective calculation, and representative words.

\FloatBarrier

\subsubsection{ParaDetox: Toxicity and platform response priorities}
\label{sec:emp-main-toxicity}

\textbf{Causal question of interest.}
What textual properties of a user comment should drive platform response priorities, after accounting for the context in which the comment appears? In particular, can the MIF distinguish a neutral-wording direction for response priority interpreted as constructive engagement from a toxic-wording direction for moderation review when that direction reverses by context?

\textbf{Experimental setup.}
We use neutral and toxic comment pairs from ParaDetox~\citep{logacheva2022paradetox}. The context distinguishes community discussion from moderation-sensitive threads. In the first scenario, neutral comments have larger platform response-priority scores in both contexts, interpreted as promoting constructive engagement. In the second, neutral comments have larger response-priority scores in community threads, whereas toxic comments receive larger review-priority scores in moderation-sensitive threads.

\textbf{Results.}
On the held-out test set, the average learned scores for neutral and toxic comments in the shared-direction scenario are \(0.88\) and \(0.13\) in community threads, and \(0.88\) and \(0.11\) in moderation-sensitive threads. In the reversing scenario, the corresponding test-set averages are \(0.87\) and \(0.16\) in community threads, but \(0.15\) and \(0.89\) in moderation-sensitive threads (\Cref{fig:tox-plot}). Nudging also follows these directions. It reduces average toxicity from approximately \(0.9\) to \(0.1\) in the shared-direction scenario and changes direction with the thread context in the second scenario.

\begin{figure}[!htbp]
    \centering
    \includegraphics[width=0.92\textwidth]{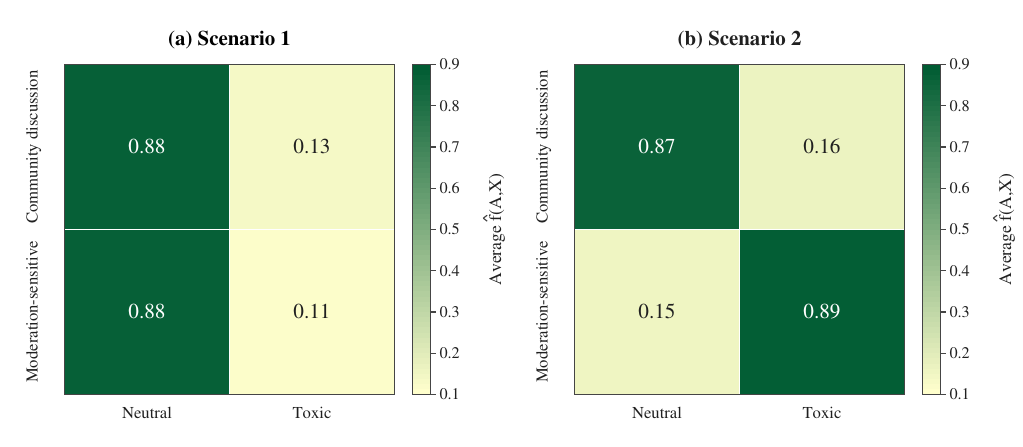}
    \caption{The learned MIF recovers the outcome-relevant toxicity direction on the held-out test set. It assigns larger scores to neutral comments in both contexts in the shared-direction scenario and reverses the ordering in moderation-sensitive threads when toxicity determines review priority. Values inside the cells are test-set average learned scores; larger scores indicate higher response priority.}
    \label{fig:tox-plot}
\end{figure}

\textbf{Remaining results.}
\Cref{sec:text-toxic} reports the complete outcome models, Detoxify-based nudging trajectories, implementation details, and low- and high-score comment examples.

\FloatBarrier

\subsubsection{Food and drink reviews: Helpfulness and representation}
\label{sec:emp-main-food}

\textbf{Causal question of interest.}
What textual features in a review are associated with higher helpfulness, after accounting for the type of item being reviewed? More specifically, does \(f(A,X)\) recover category-specific sentiment directions, and what marginal sentiment direction does \(f(A)\) recover when it cannot condition on category?

\textbf{Experimental setup.}
We use author-generated controlled food and drink reviews, with \(X=0\) for food and \(X=1\) for drink, and review sentiment as the outcome-relevant text feature. We compare two queries. The context-dependent score \(f(A,X)\) can assign different directions to the same sentiment across categories. The treatment-only score \(f(A)\) must instead select one marginal direction after content and sentiment style are separated.

\textbf{Results.}
On the held-out test set, when positive reviews have larger helpfulness outcomes in both categories, \(f(A,X)\) assigns larger values to positive reviews throughout. When the preferred sentiment reverses across categories, \(f(A,X)\) assigns larger values to negative food reviews and positive drink reviews. By contrast, when \(\mathbb P(X=1)=0.8\) and the score cannot condition on category, \(f(A)\) recovers the marginally preferred direction: \(0.9\) for positive reviews and \(0.1\) for negative reviews (\Cref{fig:dgp1-exp2}). This comparison makes the target of the two parameterizations visible.

\begin{figure}[!htbp]
    \centering
    \includegraphics[width=0.88\textwidth]{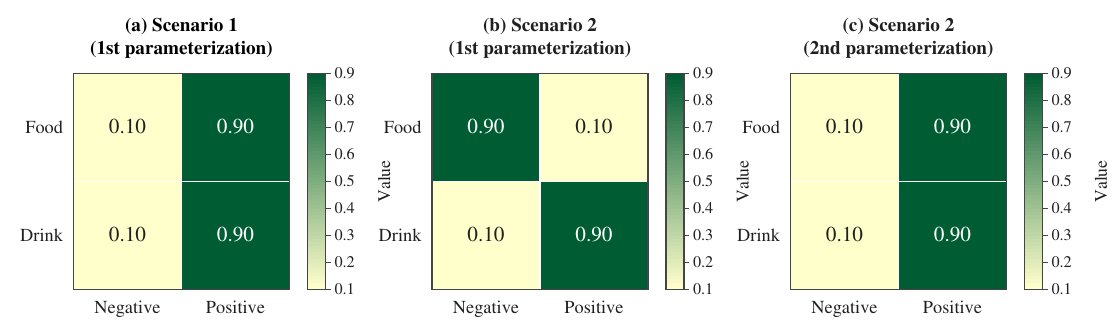}
    \caption{The learned sentiment direction changes with the MIF parameterization on the held-out test set. Panel (a) assigns large \(\widehat f(A,X)\) values to positive reviews when positive sentiment is favored in both categories. Panel (b) reverses the direction across food and drink when the outcome-relevant direction depends on category. Panel (c) uses the style-only score \(f(A)\), which recovers the marginally optimal direction when drink reviews are more common. Values inside the cells are test-set average learned scores; larger scores indicate the outcome-favored sentiment direction.}
    \label{fig:dgp1-exp2}
\end{figure}

\textbf{Representation question.}
Can the search be restricted to modifiable sentiment and style while excluding food-or-drink item content, and can the original text still be reconstructed?

\textbf{Experimental setup.}
Using the same controlled food-and-drink review family, we first probe two existing style embeddings for content leakage. We then train the proposed content--style separation procedure and test what can be predicted from each learned component.

\textbf{Results.}
The two existing style representations predict the food--drink content label with accuracies \(0.97\) and \(0.96\), and both predict the sentiment label with accuracy \(1.00\), showing substantial content leakage. Under the proposed procedure, the learned content embedding predicts content and style with accuracies \(1.00\) and \(0.53\), while the style embedding predicts them with accuracies \(0.52\) and \(1.00\). The probe results and the reconstructed sentences therefore support successful separation in this controlled example.

\textbf{Remaining results.}
\Cref{sec:text-food} gives the full outcome models, the population calculation for the treatment-only rule, and the remaining causal discussion. \Cref{sec:cont-style-sep} and \Cref{sec:cont-style-sep-theory} give the separation objectives and guarantees; \Cref{sec:cont-style-food} reports the loss trajectories, probe details, and reconstructed reviews.

\FloatBarrier

\subsection{Experiment II: Empirical studies on image-based treatments}
\label{sec:emp-image}

The image studies ask whether the MIF can identify a modifiable visual feature while separating it from image content. The first study makes the preferred rotation depend on digit parity; the second isolates a focus artifact from cell count.

\subsubsection{Rotated handwritten digits}
\label{sec:emp-main-mnist}

\textbf{Causal question of interest.}
After accounting for digit identity, which visual patterns are associated with larger downstream recognition or review scores? Operationally, can the MIF recover a parity-dependent rotation direction while treating digit identity as unmodifiable?

\textbf{Experimental setup.}
We use MNIST digits~\citep{lecun1998gradient}. We draw the rotation angle~\(\alpha\) from \(\mathrm{Uniform}(-70,70)\), treat digit identity as unmodifiable content, and encode visual style in four dimensions. The context records digit parity, with \(X=0\) for even digits and \(X=1\) for odd digits. With independent error~\(\xi\sim\mathcal N(2,1)\), the semi-synthetic outcome is
\[
    Y=0.1\alpha(2X-1)+10X+\xi.
\]
The outcome-relevant rotation direction is therefore opposite for even and odd digits.

\textbf{Results.}
The representation check first shows that changing digit identity while holding the style embedding fixed preserves rotation and stroke pattern; the complete reconstruction grid is in \Cref{sec:mnist}. The MIF then recovers the parity-dependent direction. Among even digits, the mean rotation angle changes from \(26.8\) degrees when \(\widehat f<0.2\) to \(-32.5\) degrees when \(\widehat f>0.8\). Among odd digits, it changes from \(-21.1\) to \(28.7\) degrees (\Cref{fig:scatter-image}).

\begin{figure}[!htbp]
    \centering
    \includegraphics[width=0.80\textwidth]{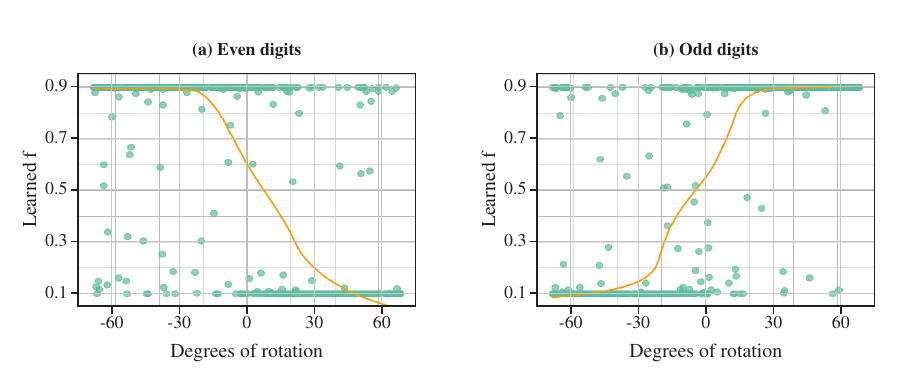}
    \caption{The learned MIF recovers opposite rotation directions for even and odd digits. Large \(\widehat f\) corresponds to clockwise rotations for even digits and counter-clockwise rotations for odd digits. Larger scores indicate the outcome-favored rotation direction.}
    \label{fig:scatter-image}
\end{figure}

\textbf{Nudging results.}
Nudging rotates odd digits counter-clockwise and even digits clockwise while preserving digit identity (\Cref{fig:nudging-mnist}). This is a direct visual check that changes in the modifiable embedding follow the learned direction while the designated content remains stable.

\begin{figure}[!htbp]
    \centering
    \includegraphics[width=0.74\textwidth]{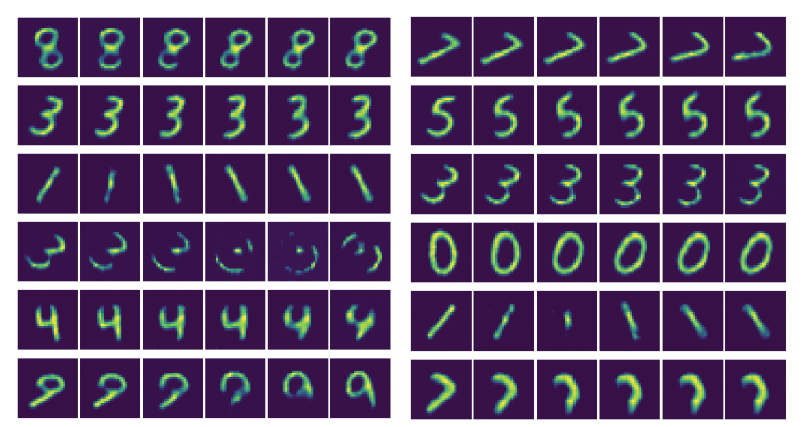}
    \caption{MNIST nudging changes rotation while preserving digit identity. Odd digits rotate counter-clockwise and even digits rotate clockwise, following the parity-dependent direction learned from the outcome model. Columns give nudging iterations \(0\) through \(5\).}
    \label{fig:nudging-mnist}
\end{figure}

\textbf{Remaining results.}
Full image-model and nudging details appear in \Cref{sec:mnist}.

\FloatBarrier

\subsubsection{Cell-body stains}
\label{sec:emp-main-cell}

\textbf{Causal question of interest.}
After accounting for the number of cells in an image, which acquisition artifact is associated with larger quality-control or review-priority scores? Operationally, can the MIF isolate focus blur from cell count?

\textbf{Experimental setup.}
We use BBBC005v1 cell-body-stain images~\citep{ljosa2012annotated}, treating cell count as unmodifiable content and focus blur~\(B\in[1,48]\) as modifiable style. With \(X\) indicating whether the image contains fewer than \(20\) cells and independent error~\(\xi\sim\mathcal N(2,1)\), we generate
\[
    Y=0.1B+10X+\xi.
\]
The MIF should therefore identify blur after adjusting for cell-count group.

\textbf{Results.}
The representation check first shows that reconstructed images approximately preserve blur while cell count changes; the complete grid is in \Cref{sec:cell}. Among images with more than \(20\) cells, mean blur increases from \(15.9\) when \(\widehat f<0.2\) to \(40.3\) when \(\widehat f>0.8\). Among images with fewer than \(20\) cells, it increases from \(15.7\) to \(37.1\) (\Cref{fig:scatter-image-cell}). Nudging makes images blurrier while approximately preserving cell count. Here, increased blur is a diagnostic of the direction associated with larger review-priority outcomes, not a recommendation for improving image quality.

\begin{figure}[!htbp]
    \centering
    \includegraphics[width=0.80\textwidth]{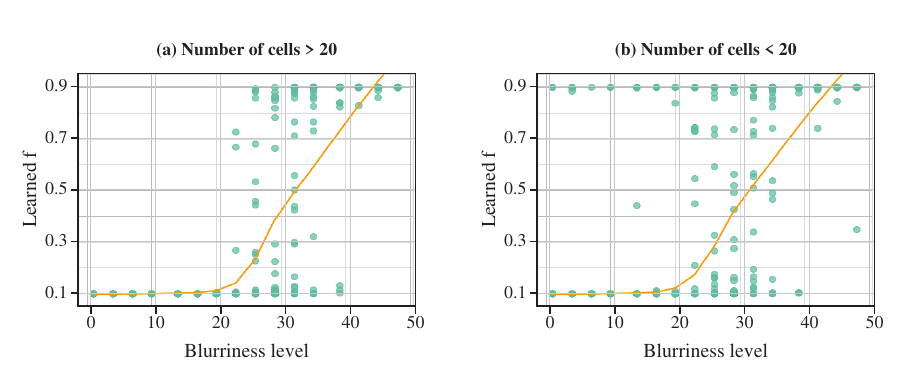}
    \caption{The learned MIF recovers blur in both cell-count groups. Images with larger scores have substantially more blur regardless of cell count, so the MIF tracks the acquisition artifact rather than only the number of cells. Larger scores indicate the outcome-favored blur direction.}
    \label{fig:scatter-image-cell}
\end{figure}

\textbf{Remaining results.}
\Cref{sec:cell} gives the image architecture and style dimension, shows the complete content--style reconstructions, and reports all nudging panels and their interpretation.

\FloatBarrier

\subsection{Experiment III: Empirical studies on dynamic treatment sequences}
\label{sec:emp-sequence}

Dynamic treatments arise in longitudinal care, where a patient may receive different interventions across recurring visits and those visits may occur at varying intervals. The causal question is which features of the treatment sequence---such as treatment type, order, or spacing---most influence the patient's final outcome. Our sequence studies isolate this question by varying how a treatment history is represented and which parts remain modifiable. We first analyze the same short histories under categorical and sentence representations, and then add administration times and a fixed treatment prefix.

\subsubsection{Short treatment sequences under categorical and sentence representations}
\label{sec:emp-main-sequence-short}

\textbf{Causal question of interest.}
Which treatment-type counts in a short longitudinal regime yield the largest raw causal contrast for the final outcome, and does nudging toward the learned score improve that outcome under two opposing models? When context reverses the preferred direction, does the learned direction reverse within context?

\textbf{Experimental setup.}
We generate sequences of length one to six in which \(a\), \(b\), and \(c\) denote three treatment or visit types. Two no-covariate outcome models reward the type counts in opposite directions. A third design uses a binary context \(X\) to switch between those directions. We encode each sequence with SONAR, take five nudging steps, and decode back to valid categorical sequences.

\textbf{Results.}
In the two no-covariate scenarios, the average outcome rises from \(45.5\) to \(64.7\) and from \(64.5\) to \(74.5\). In the context-dependent design, it rises from \(45.5\) to \(77.5\) for \(X=0\) and from \(64.5\) to \(68.1\) for \(X=1\). The \(X=1\) trajectory is not monotone: it first falls to \(62.2\) before recovering. Because these short sequences have no unmodifiable component, direct comparison of learned scores can be more useful than local nudging when the entire treatment may be changed.

\textbf{Representation question.}
Does the same causal feature and outcome-improving nudge persist when the identical treatment histories and scenarios are represented as natural-language sentences?

\textbf{Experimental setup.}
We encode the same categorical histories in natural language; for example, \textit{a b c} becomes \textit{The first treatment is a. The second treatment is b. The third treatment is c.} We then repeat the five-step nudging procedure.

\textbf{Results.}
The average outcome rises from \(36.2\) to \(48.2\) in the first scenario and from \(73.8\) to \(89.8\) in the second. The second trajectory peaks at \(89.9\) after four steps, so the endpoint improves substantially even though the final step is not monotone. \Cref{tab:main-sequence-results} collects the endpoint results for both representations.

\begin{table}[!htbp]
    \centering
    \small
    \caption{Five nudging iterations increase the endpoint average outcome in all six sequence settings. The \(X=1\) categorical trajectory initially declines, and the second sentence trajectory peaks one iteration before the endpoint. Higher is better.}
    \begin{tabularx}{\textwidth}{@{}llrrX@{}}
        \toprule
        Representation & Condition & Initial & Iteration 5 & Intermediate behavior \\
        \midrule
        Categorical & No covariate, Scenario 1 & \(45.5\) & \(64.7\) & Monotone increase \\
        Categorical & No covariate, Scenario 2 & \(64.5\) & \(74.5\) & Monotone increase \\
        Categorical & \(X=0\) & \(45.5\) & \(77.5\) & Monotone increase \\
        Categorical & \(X=1\) & \(64.5\) & \(68.1\) & Falls to \(62.2\), then recovers \\
        Sentence & Scenario 1 & \(36.2\) & \(48.2\) & Flat through iteration 2 \\
        Sentence & Scenario 2 & \(73.8\) & \(89.8\) & Peaks at \(89.9\) in iteration 4 \\
        \bottomrule
    \end{tabularx}
    \label{tab:main-sequence-results}
\end{table}

\textbf{Remaining results.}
\Cref{sec:sequence-short} gives the two no-covariate models, the context-dependent model, the decoding filter, and all iteration-level values. \Cref{sec:sequence-sentence} reports every iteration-level result for the sentence representation.

\FloatBarrier

\subsubsection{Treatment sequences with administration times}
\label{sec:emp-main-sequence-time}

\textbf{Causal question of interest.}
In a longitudinal treatment regime---such as care delivered at recurring prenatal visits with varying intervals---which treatment types and spacings most influence the patient's final outcome under a fixed feature budget? Once a treatment prefix has been administered, which feasible suffix raises the learned score while leaving that prefix unchanged?

\textbf{Experimental setup.}
We use a synthetic abstraction of recurring clinical visits: we allow at most three treatments, record each treatment type and administration time, and measure the final outcome at time~\(100\). The outcome rewards the counts of \(a\), \(b\), and \(c\), as well as the shortest interval~\(t_p\) between treatments:
\[
    Y=5N_a(A_{1:L})+10N_b(A_{1:L})+15N_c(A_{1:L})+0.2\,t_p+\xi,
    \qquad \xi\sim\mathcal N(0,4).
\]
We fit treatment-only scores under four feature budgets.

\textbf{Results.}
At every budget, the learned score is positively associated with the expected outcome. Smaller budgets reserve large scores for a narrower group of larger-outcome sequences, whereas larger budgets expand the selected region (\Cref{fig:scatter}). When part of a sequence has already been administered, the score can also rank feasible suffixes while leaving the observed prefix unchanged.

\begin{figure}[!htbp]
    \centering
    \includegraphics[width=0.98\textwidth]{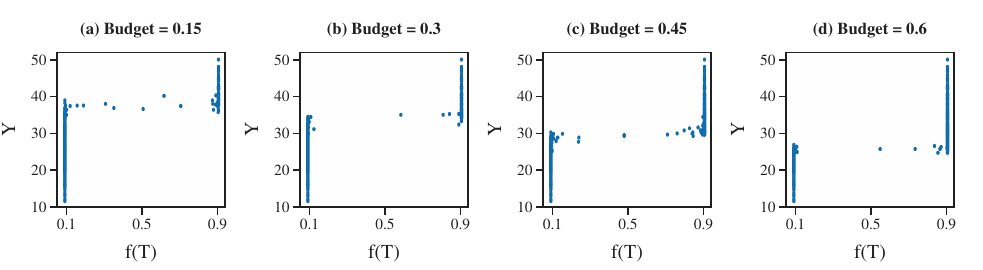}
    \caption{Smaller budgets concentrate large MIF scores on a narrower set of larger-outcome sequences. Across budgets, \(\widehat f(A_{1:L})\) remains positively associated with \(\mathbb E(Y\mid A_{1:L})\). Higher is better for both quantities.}
    \label{fig:scatter}
\end{figure}

\textbf{Remaining results.}
\Cref{sec:sequence-time} gives the complete timing design, budget interpretation, and prefix-preserving suffix-search discussion.

\FloatBarrier

\subsection{Content-style separation on text representations}
\label{sec:emp-representation-aux}

We also report two studies of the content--style separation algorithm in \Cref{sec:cont-style-sep}. Both use a large language model (LLM) to construct supervision and assess whether the learned representation is suitable for a later causal analysis.

\subsubsection{News headlines with LLM-based embeddings}
\label{sec:emp-main-headlines}

\textbf{Representation question.}
Can headline content be held fixed while modifiable style is isolated well enough to support outcome-relevant feature discovery?

\textbf{Experimental setup.}
We assemble \(20{,}000\) English news headlines from the general-news domains listed in \Cref{sec:cont-style-headlines}. We use the Llama 3.1 8B Instruct model to create pairs with shared content but different style and pairs with shared style but different content. We train content and style mappings together with a reconstruction map. The style-preserving prompt chiefly enforces a shared grammatical skeleton.

\textbf{Results.}
Reconstruction works well, but content--style separation does not work very well. The learned style largely captures grammatical skeleton rather than the broader writing choices that the study intends to make modifiable. The experiment identifies a concrete failure mode: narrow style supervision can make the representation learn surface form instead of the intended notion of style.

\textbf{Remaining results.}
\Cref{sec:cont-style-headlines} retains the complete data construction, prompts, training objective, representative reconstructions, and failure analysis.

\FloatBarrier

\subsubsection{Restaurant reviews with LLM-based embeddings}
\label{sec:emp-main-restaurants}

\textbf{Representation question.}
Can style be transferred while preserving restaurant, dish, taste, and sentiment content, so that a subsequent outcome contrast can be attributed to style rather than to changed content?

\textbf{Experimental setup.}
We generate short one-sentence Asian restaurant reviews that vary in sentiment and formality, construct same-content and same-style triplets, and transfer each source review into the styles of target reviews. We assess the transferred text qualitatively and use an external model to check sentiment preservation.

\textbf{Results.}
The transferred reviews approximately preserve restaurant, dish, taste, and sentiment on average, but individual rewrites contain grammatical and semantic artifacts. Across 100 target styles for each of ten source reviews, positive and negative polarity is generally preserved, although both magnitude and isolated cases can change. The representation therefore works reasonably well on average but is not exact for every transfer.

\textbf{Remaining results.}
\Cref{sec:cont-style-restaurants} retains the triplet construction, all qualitative transfer tables, every reported average sentiment score, and the discussion of grammatical and semantic artifacts.

\FloatBarrier

\section{Discussion}
\label{sec:disc}
This paper introduces the MIF algorithm for causal inference in observational studies involving unstructured treatments, such as texts, images, or medical treatment sequences. In these settings, traditional causal queries like the average treatment effect are often ill-defined because the overlap condition rarely holds. We address this problem by learning the maximally influential feature (MIF), which defines a binary feature intervention on unstructured objects. Using embedding representations, the MIF algorithm can capture abstract treatment features (e.g., formality, politeness, and brevity) that topic model-based approaches often fail to represent. It also identifies the most influential treatment feature from the data, revealing the dimension that has the strongest causal impact on the outcome. To make interventions realistic, we separate each treatment into unmodifiable content and modifiable style components. Moreover, the MIF is both interpretable and practically actionable: we can examine the learned feature score to understand what it represents and directly adjust the unstructured treatment to generate changes that are expected to improve outcomes. Overall, the MIF provides a simple and data-driven way to discover, interpret, and act on the key causal factors in unstructured treatments.

\section*{Acknowledgements}

This work was supported in part by funding from the Office of Naval Research under grant N00014-23-1-2590, the National Science Foundation under grant No. 2310831, No. 2428059, No. 2435696, No. 2440954, a Michigan Institute for Data Science Propelling Original Data Science (PODS) grant, Two Sigma Investments LP, and  LG Management Development Institute AI Research. Any opinions, findings, and conclusions or recommendations expressed in this material are those of the authors and do not necessarily reflect the views of the sponsors.

\clearpage
\bibliographystyle{abbrvnat}
\bibliography{bibfile}
\clearpage

\appendix
\crefalias{section}{appendix}
\crefalias{subsection}{appendix}
\crefalias{subsubsection}{appendix}

\begin{center}
    \Large{\textbf{Supplementary Materials: Causal Inference with Unstructured Treatments}}
\end{center}

\section{Raw causal contrast and no-covariate calculations}
\label{app:mif-original}

This appendix records the raw causal contrast and no-covariate calculations used to motivate the budget and the variance-weighted causal contrast objective in the main text.  The main conclusion is that, at a fixed budget, the raw causal contrast and the variance-weighted causal contrast rank features in the same way in the no-covariate case; across budgets, the variance-weighted causal contrast has a cleaner interpretation: it measures the total residual outcome signal captured by the selected feature region.

\subsection{No-covariate raw causal contrast}

If there are no covariates and the feature score approaches a binary-valued function, the raw causal contrast is
\[
    \Delta_X(f)
    =
    \Psi_f(1)-\Psi_f(0).
\]
Under the same identification logic as \Cref{prop:iden},
\[
    \Psi_f(1)
    =
    \frac{\mathbb E\{Yf(A)\}}{\mathbb E\{f(A)\}},
\]
and
\[
    \Psi_f(0)
    =
    \frac{\mathbb E\{Y[1-f(A)]\}}{\mathbb E\{1-f(A)\}}.
\]
Thus
\[
    \Delta_X(f)
    =
    \frac{\mathbb E\{Yf(A)\}}{\mathbb E\{f(A)\}}
    -
    \frac{\mathbb E\{Y[1-f(A)]\}}{\mathbb E\{1-f(A)\}}.
\]
This normalized form is natural for evaluating a fixed feature, but it can behave awkwardly as the budget changes because the selected region and the denominator change at the same time.

\subsection{Fixed-budget optimizer for the raw causal contrast}

Let \(b=\mathbb E\{f(A)\}\).  If the budget is fixed, then maximizing the raw causal contrast over features with \(\mathbb E\{f(A)\}=b\) is equivalent to selecting treatment values with large values of the relevant conditional outcome score.  In the no-covariate treatment-only case, this score is
\[
    m(a)=\mathbb E(Y\mid A=a).
\]
The optimizer assigns the high feature value to the upper tail of \(m(A)\), with the threshold chosen to satisfy the budget.  This is the same ranking logic as the variance-weighted causal contrast objective, but the objective value is normalized by the feature mass.  For this reason the optimized raw causal contrast need not be monotone in the budget.

\subsection{Two-feature budget intuition}

Consider a no-covariate treatment \(A=(A_1,A_2)\), where the binary feature \(A_1\in\{0,1\}\) is politeness and the binary feature \(A_2\in\{0,1\}\) is formality. The four population-cell probabilities are arranged as
\[
\begin{array}{c|cc}
 & A_2=0 & A_2=1 \\ \hline
A_1=0 & p_1 & p_2 \\
A_1=1 & p_3 & p_4
\end{array}
\qquad\text{with}\qquad
p_1+p_2+p_3+p_4=1.
\]
The conditional mean outcome is
\[
    \mathbb E(Y\mid A_1,A_2)=A_1+2A_2.
\]
Thus, both features increase the outcome, but formality contributes twice as much as politeness. For this illustration, allow \(f(A)\in[0,1]\), and name its four cell-specific feature-on probabilities
\[
    a=f(0,0),\qquad b=f(0,1),\qquad c=f(1,0),\qquad d=f(1,1).
\]
Define the feature budget \(r\), the outcome-weighted feature-on mass \(q\), and the population mean outcome \(s\) by
\[
    r=ap_1+bp_2+cp_3+dp_4,\qquad q=2bp_2+cp_3+3dp_4,\qquad s=2p_2+p_3+3p_4.
\]
The exact raw causal contrast is
\[
    \Delta(f)=\Psi_f(1)-\Psi_f(0)=\frac{q-rs}{r(1-r)}.
\]
For a fixed budget \(r\in(0,1)\), the quantities \(r\) and \(s\) are fixed, so maximizing the raw contrast is equivalent to maximizing \(q\). The optimizer therefore fills the four cells in decreasing conditional-mean order: first \((A_1,A_2)=(1,1)\), then \((0,1)\), then \((1,0)\), and finally \((0,0)\). \Cref{tab:r_mapping} gives the resulting four budget regions. The displayed fractions apply when the corresponding cell probabilities are positive; a zero-probability cell is skipped.

\begin{table}[!htbp]
\centering
\small
\caption{The fixed-budget optimizer fills cells in decreasing outcome order. The table maps each budget range to the exact cell scores \(a,b,c,d\).}
\label{tab:r_mapping}
\begin{tabular}{@{}ccccc@{}}
\toprule
Budget range & \(a\) & \(b\) & \(c\) & \(d\) \\
\midrule
\((0,p_4]\) & \(0\) & \(0\) & \(0\) & \(r/p_4\) \\
\((p_4,p_2+p_4]\) & \(0\) & \((r-p_4)/p_2\) & \(0\) & \(1\) \\
\((p_2+p_4,p_2+p_3+p_4]\) & \(0\) & \(1\) & \((r-p_2-p_4)/p_3\) & \(1\) \\
\((p_2+p_3+p_4,1)\) & \((r-p_2-p_3-p_4)/p_1\) & \(1\) & \(1\) & \(1\) \\
\bottomrule
\end{tabular}
\end{table}

Small budgets therefore first select observations that are both polite and formal. In the special case \(p_4=0\), so that no observation is both, formal observations are selected before polite observations because their conditional mean outcome is larger. This is the concrete two-feature version of the upper-tail ranking in \Cref{prop:mif-optimizer-main}.

\begin{figure}[!htbp]
    \centering
    \begin{subfigure}[t]{0.82\textwidth}
        \centering
        \includegraphics[width=\textwidth]{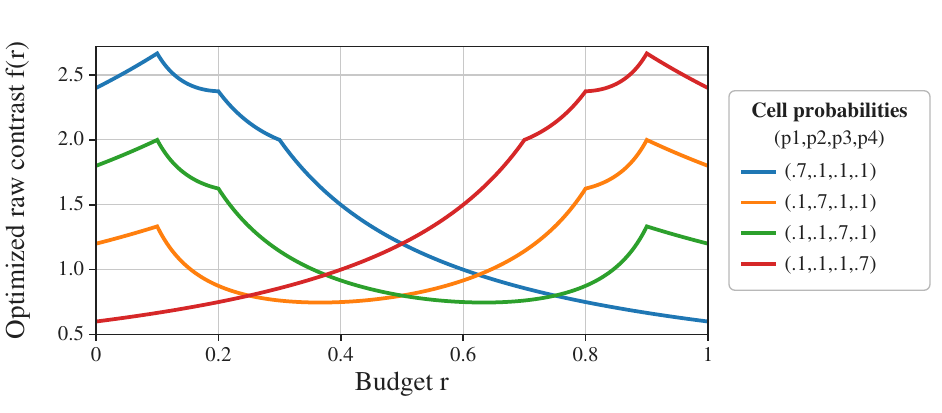}
        \caption{Each of the four curves places mass \(0.7\) on one cell and mass \(0.1\) on each other cell.}
    \end{subfigure}

    \medskip
    \begin{subfigure}[t]{0.82\textwidth}
        \centering
        \includegraphics[width=\textwidth]{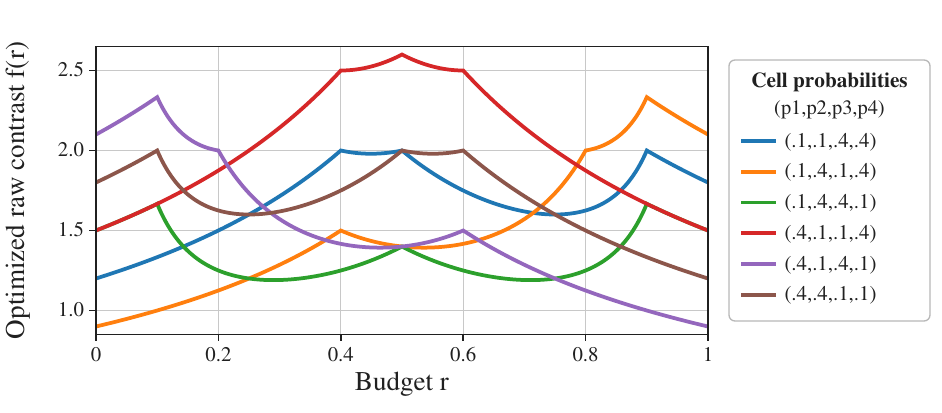}
        \caption{Each of the six curves places mass \(0.4\) on two cells and mass \(0.1\) on each other cell.}
    \end{subfigure}
    \caption{The budget maximizing the raw causal contrast depends on the population composition. The horizontal axis \(r\) is the feature budget, and the legacy vertical-axis label \(f(r)\) denotes the optimized raw contrast \(\Delta^*(r)\). Curves retain every probability configuration \((p_1,p_2,p_3,p_4)\) from the earlier manuscript. The optimized raw contrast can be nonmonotone in the budget. Higher is a larger raw contrast.}
    \label{fig:budget_illust}
\end{figure}

\Cref{fig:budget_illust} shows that changing \((p_1,p_2,p_3,p_4)\) changes the budget that maximizes the raw contrast. It also shows why the optimized raw contrast need not increase with \(r\): expanding the selected region admits lower-outcome cells and changes the normalization by \(r(1-r)\). The variance-weighted causal contrast instead records the total residual outcome captured by a selected region, so it supports comparisons across budgets without this normalization effect.

\long\def\mifobjectiveappendix{\section{Algebra and optimizer results for the MIF objective}
\label{app:mif-objective-proofs}

This appendix collects the algebra behind the variance-weighted causal contrast objective and the proofs of the optimizer statements used in \Cref{sec:causal-query}.  The main text keeps the identification proof because it is part of the causal argument.  The calculations here explain why the variance-weighted causal contrast objective becomes a residualized learning problem and why the optimizer has a threshold form.

\subsection{From the identified contrast to the residualized objective}

Suppose the shared-remainder conditions of
\Cref{cor:mif-fixed-residual} hold for every feature-scoring function
in the search class.  For any such function \(f\) satisfying the score bounds
in \Cref{sec:causal-query}, write
\[
    \pi_f(r,x)
    =
    \mathbb E\{f(A,X)\mid R=r,X=x\},
    \qquad
    m(r,x)
    =
    \mathbb E(Y\mid R=r,X=x).
\]
Substituting the shared-remainder policy formulas from
\Cref{cor:mif-fixed-residual} into the variance-weighted contrast
gives
\[
\begin{aligned}
\widetilde\Delta_X(f)
&=
\mathbb E\Big[
    \{1-\pi_f(R,X)\}
    \mathbb E\{f(A,X)Y\mid R,X\}\\
&\hspace{24mm}
    -\pi_f(R,X)
    \mathbb E[\{1-f(A,X)\}Y\mid R,X]
\Big]\\
&=
\mathbb E\Big[
    \mathbb E\{f(A,X)Y\mid R,X\}
    -\pi_f(R,X)m(R,X)
\Big]\\
&=
\mathbb E\left[
    f(A,X)\{Y-m(R,X)\}
\right].
\end{aligned}
\]
The second equality expands
\(\mathbb E[\{1-f(A,X)\}Y\mid R,X]\), and the last equality follows from
iterated expectation.
This is the shared-remainder working objective used in the optimizer and
estimation results below.

\subsection{Proof of \texorpdfstring{\Cref{prop:mif-optimizer-main}}{Proposition~\getrefnumber{prop:mif-optimizer-main}}}

\label{sec:fixed-budget-optimizer-proof}
\begin{proof}

Choose a fixed-budget variance-weighted optimizer
\[
    f_{b,\widetilde\Delta}^*
    \in
    \arg\max_{f\in\mathcal F_b}
    \widetilde\Delta_X(f).
\]
Define the conditional residual-outcome score
\[
    \eta(a,x)=\mathbb E\{Y-m(R,X)\mid A=a,X=x\}.
\]
Then
\[
\widetilde\Delta_X(f)
=
\mathbb E\{f(A,X)\eta(A,X)\}.
\]
Consider the unrestricted fixed-budget class satisfying \(\epsilon\leq f\leq1-\epsilon\) and \(\mathbb E\{f(A,X)\}=b\). Write the rescaled feature score as
\[
    f(A,X)=\epsilon+(1-2\epsilon)q(A,X),
    \qquad 0\leq q(A,X)\leq1.
\]
The corresponding rescaled budget is
\[
    \mathbb E\{q(A,X)\}
    =
    \frac{b-\epsilon}{1-2\epsilon}
    =q_b.
\]
The objective becomes
\[
    \epsilon\mathbb E\{\eta(A,X)\}
    +(1-2\epsilon)\mathbb E\{q(A,X)\eta(A,X)\}.
\]
The first term is constant in \(q\). The second term is maximized by assigning \(q=1\) to the largest values of \(\eta(A,X)\), assigning \(q=0\) to the smallest values, and assigning a fractional value on a threshold set if needed to make \(\mathbb E(q)=q_b\). Thus there exists a budget threshold~\(\lambda_b\) such that
\[
    f_{b,\widetilde\Delta}^*(a,x)=1-\epsilon \quad\text{when } \eta(a,x)>\lambda_b,
    \qquad
    f_{b,\widetilde\Delta}^*(a,x)=\epsilon \quad\text{when } \eta(a,x)<\lambda_b,
\]
with the threshold value chosen to satisfy the budget.  If \(\mathbb P\{\eta(A,X)=\lambda_b\}=0\), the threshold set has no mass, so the optimizer is unique up to equality almost surely.  Without a fixed budget, the pointwise maximizer sets \(f=1-\epsilon\) where \(\eta>0\) and \(f=\epsilon\) where \(\eta<0\), with uniqueness when \(\mathbb P\{\eta(A,X)=0\}=0\).
\end{proof}

\subsection{Treatment-only feature-scoring functions}
\label{app:mif-treatment-only}

The main text allows \(f\) to depend on both \(A\) and \(X\).  Sometimes the feature should be a property of the treatment alone.  In the course example, this asks whether the same textual property of a description is useful across course contexts, rather than allowing the feature to mean different things for different subjects or levels.

For treatment-only features \(f=f(A)\), the same variance-weighted causal contrast objective becomes
\[
    \widetilde\Delta_X(f)
    =
    \mathbb E\left[f(A)\{Y-m(R,X)\}\right].
\]
Define the treatment-only residual-outcome score
\[
    r(a)=\mathbb E\{Y-m(R,X)\mid A=a\}.
\]
Then
\[
    \widetilde\Delta_X(f)=\mathbb E\{f(A)r(A)\}.
\]
The unbudgeted optimizer is
\[
    f^*(a)
    =
    \begin{cases}
        1-\epsilon, & r(a)>0,\\
        \epsilon, & r(a)<0,\\
        c(a), & r(a)=0,
    \end{cases}
    \qquad c(a)\in[\epsilon,1-\epsilon],
\]
where \(c(a)\) is arbitrary.  At a fixed budget \(\mathbb E\{f(A)\}=b\), the optimizer selects treatment values in decreasing order of \(r(a)\).  The proof is identical to the rearrangement argument above, with \(\eta(A,X)\) replaced by \(r(A)\).

\subsection{Fixed-budget variance-weighted optimizer set}
\label{app:mif-uniqueness}

Let \(\mathcal F_b\) be the search class of feature-scoring functions, including the box constraint and budget.  The fixed-budget variance-weighted optimizer set is
\[
    \mathcal F_{b,\widetilde\Delta}^*
    =
    \arg\max_{f\in\mathcal F_b}
    \widetilde\Delta_X(f).
\]
Under \Cref{ass:mif-sutva,ass:mif-exchangeability}, \Cref{prop:iden} identifies \(\Psi_f(1)\), \(\Psi_f(0)\), \(\Delta_X(f)\), and \(\widetilde\Delta_X(f)\) for every fixed \(f\in\mathcal F_b\) with its feature-specific remainder. Hence the variance-weighted maximizer set over a fixed search class is identified. If the set contains a single equivalence class under equality almost surely, the fixed-budget variance-weighted optimizer is unique. The threshold conditions in \Cref{prop:mif-optimizer-main} give explicit uniqueness conditions in the unrestricted class.
}

\section{Technical discussions around the feature remainder}

\label{sec:remainder-definition}

For each candidate feature \(f(A,X)\), a remainder
\(R_f=r_f(A,X)\) is defined relative to a feature coordinate system that
separates the feature variation from all remaining treatment variation; it is the collection of treatment properties that the
feature intervention preserves.

\textbf{Definition of the remainder.}
For each candidate feature \(f\), define the feature-specific remainder \(R_f=r_f(A,X)\). The remainder is defined through a
feature coordinate system satisfying
\[
    A\longleftrightarrow \{f(A,X),R_f,X\},
\]
meaning that there exists a deterministic decoder \(D_f\) such that
\[
    A=D_f\{f(A,X),R_f,X\}.
\]
Thus, the feature score, remainder, and context together retain the full
treatment information. Holding \(R_f\) and \(X\) fixed changes only the
feature represented by \(f\), while preserving all other treatment
properties encoded by \(R_f\).

The construction of such feature coordinate systems depends on the
application. In the course example, if \(f\) scores formality, then
\(R_f\) contains the description properties that are intended to remain
unchanged, such as semantic content, course facts, length, examples, and
other writing characteristics. \Cref{sec:cont-style-sep}
discusses practical constructions of these feature coordinate systems.

The numerical encoding of a fixed remainder partition is not unique. The
argument below shows that the resulting policy outcome is invariant to a
one-to-one recoding of that partition. 

\textbf{Constructing feature coordinate systems.}
The definition of the remainder can be implemented by constructing an
invertible feature coordinate system. For each candidate feature \(f\),
learn a transformation
\[
    \Phi_f(a,x)
    =
    \{f(a,x),r_f(a,x)\},
\]
with decoder
\[
    G_f\{f(a,x),r_f(a,x),x\}=a.
\]
The first coordinate is the feature score, and \(R_f=r_f(A,X)\) contains the remaining treatment information held fixed by the feature
intervention.

Because the transformation is invertible,
\[
    A=G_f\{f(A,X),R_f,X\},
\]
so the feature score, remainder, and context together reconstruct the
full treatment. Thus, holding \(R_f\) and \(X\) fixed changes only the
feature represented by \(f\).

A valid feature coordinate system should also ensure that changing the
feature score does not change the remainder. For differentiable
coordinates, let \(d\) denote the dimension of treatment representation~\(a\). Local orthogonality requires
\[
    J_a r_f(a,x)\nabla_a f(a,x)=0,
\]
where \(J_a r_f\) is the Jacobian of the remainder and
\(\nabla_a f\) is the gradient of the feature score. This condition makes
the feature direction locally orthogonal to the remainder coordinates.

Together with
\[
    \operatorname{rank}\{J_a r_f(a,x)\}=d-1,
\]
invertibility, and connected feature paths, this makes \(R_f\) an exact
remainder: it contains all treatment information except the variation
represented by \(f\).

In practice, the coordinate system can be learned using an invertible
neural network or another invertible representation. A reconstruction
loss can be written as
\[
    \mathcal L_{\mathrm{recon}}(f)
    =
    \mathbb E
    \left[
        \left\|
        G_f\{f(A,X),r_f(A,X),X\}-A
        \right\|^2
    \right],
\]
and the orthogonality condition can be encouraged through
\[
    \mathcal L_{\mathrm{orth}}(f)
    =
    \mathbb E
    \left[
        \left\|
        J_a r_f(A,X)
        \nabla_a f(A,X)
        \right\|^2
    \right].
\]
At zero reconstruction and orthogonality loss, together with the rank
and connected-path conditions, \(R_f\) is an exact remainder associated
with the feature score \(f\).

\textbf{Linear feature coordinates as a special case.}
The linear construction is a special case of the above coordinate
system. Suppose
\[
    f(a,x)=\ell(v_f^\top a,x),
\]
where \(\ell\) is a strictly increasing link function and \(v_f\) is the unit feature direction:
\[
    \|v_f\|=1.
\]
Let \(I\) denote the \(d\times d\) identity matrix, and define
\[
    R_f=(I-v_fv_f^\top)A.
\]
Because
\[
    A
    =
    v_f(v_f^\top A)
    +
    (I-v_fv_f^\top)A,
\]
the feature score and remainder reconstruct the original treatment.
Changing \(f\) while holding \(R_f\) fixed changes only the component of
the treatment along the feature direction.

The nonlinear construction generalizes this idea by allowing the feature
coordinate system to define curved feature paths instead of straight
lines. Both constructions have the same interpretation: \(f\) identifies
the treatment variation being changed, while \(R_f\) records all other
treatment information.

\textbf{Invariance of policy outcomes to remainder encoding.}
For a fixed remainder partition, the policy outcome does not change under a one-to-one re-encoding. Suppose
\[
    \widetilde R_f=h_f(R_f,X),
\]
where \(h_f(\cdot,x)\) is one-to-one for every context \(x\). Then
\(R_f\) and \(\widetilde R_f\) define the same remainder strata because
\[
    \{R_f=r,X=x\}
    =
    \{\widetilde R_f=h_f(r,x),X=x\}.
\]
Therefore,
\[
    p(a\mid R_f=r,X=x)
    =
    p(a\mid
    \widetilde R_f=h_f(r,x),X=x),
\]
and
\[
    \pi_f(r,x)
    =
    \mathbb E\{f(A,X)\mid R_f=r,X=x\},
\]
is unchanged after relabeling.

Consequently, the feature-on and feature-off policies defined by
conditioning on the remainder are unchanged:
\[
    \widetilde p(a\mid \widetilde r,x,W_f=w)
    =
    p(a\mid r,x,W_f=w),
\]
after the corresponding relabeling of remainder values. Hence
\[
    \widetilde\Psi_f(w)=\Psi_f(w).
\]
Therefore, the policy outcome depends only on the remainder partition and
not on the particular numerical encoding used for \(R_f\).

\section{Proof of \texorpdfstring{\Cref{prop:iden}}{Proposition~\getrefnumber{prop:iden}}}
\label{app:iden-proof}

\begin{proof}
Fix \(R_f=r\) and \(X=x\).  The definition of the feature-on policy gives
\[
\begin{aligned}
    \Psi_f(1\mid r,x)
    &=
    \int
    \mathbb E\{Y(a)\mid R_f=r,X=x\}
    \frac{f(a,x)}{\pi_f(r,x)}
    p(a\mid r,x)\,da\\
    &=
    \int
    \mathbb E(Y\mid A=a,R_f=r,X=x)
    \frac{f(a,x)}{\pi_f(r,x)}
    p(a\mid r,x)\,da\\
    &=
    \frac{
        \mathbb E\{f(A,X)Y\mid R_f=r,X=x\}
    }{
        \pi_f(r,x)
    }.
\end{aligned}
\]
The second equality is where the causal assumptions enter.  Because \(R_f=r_f(A,X)\), \Cref{ass:mif-exchangeability} implies \(Y(a)\perp(A,R_f)\mid X\).  Thus adding \(A=a\) to the conditioning set does not change the mean of \(Y(a)\).  \Cref{ass:mif-sutva} makes \(Y(a)\) well-defined, and its consistency part replaces \(Y(a)\) by the observed \(Y\) when \(A=a\).  The last equality is the law of iterated expectation.

Averaging over the observed distribution of \((R_f,X)\) gives the feature-on formula.  Replacing \(f\) by \(1-f\) gives the feature-off formula.  The overlap bound above keeps both denominators positive; because each policy is a reweighting of \(p(a\mid r,x)\), every treatment used in the calculation is on the observed conditional support.

It remains to identify the variance-weighted contrast.  Substituting the two
identified conditional policy outcomes into its definition gives
\[
\begin{aligned}
\widetilde\Delta_X(f)
&=
\mathbb E\Big[
    \pi_f(R_f,X)\{1-\pi_f(R_f,X)\}
    \{\Psi_f(1\mid R_f,X)-\Psi_f(0\mid R_f,X)\}
\Big]\\
&=
\mathbb E\Big[
    \{1-\pi_f(R_f,X)\}
    \mathbb E\{f(A,X)Y\mid R_f,X\}\\
&\hspace{24mm}
    -\pi_f(R_f,X)
    \mathbb E[\{1-f(A,X)\}Y\mid R_f,X]
\Big]\\
&=
\mathbb E\Big[
    \mathbb E\{f(A,X)Y\mid R_f,X\}
    -\pi_f(R_f,X)m_f(R_f,X)
\Big]\\
&=
\mathbb E\left[
    f(A,X)\{Y-m_f(R_f,X)\}
\right],
\end{aligned}
\]
where
\[
m_f(r,x)=\mathbb E(Y\mid R_f=r,X=x).
\]
The third equality uses
\[
\mathbb E[\{1-f(A,X)\}Y\mid R_f,X]
=
m_f(R_f,X)-\mathbb E\{f(A,X)Y\mid R_f,X\},
\]
and the last equality follows from iterated expectation.
Thus the variance-weighted objective is identified.

In the course example, \(m_f(R_f,X)\) is expected enrollment among courses with the same context and the same values of the content and writing properties recorded in \(R_f\).  The objective measures the residual enrollment signal associated with the feature scored by \(f\) within those comparisons.
\end{proof}

\section{Proof of \texorpdfstring{\Cref{cor:mif-fixed-residual}}{Corollary~\getrefnumber{cor:mif-fixed-residual}}}
\label{app:mif-fixed-residual-proof}

\begin{proof}
Because \(R\) is a function of \((A,X)\), full-treatment
unconfoundedness implies
\[
Y(a)\perp A\mid R,X.
\]
Standard stochastic-intervention arguments therefore identify the feature-on
and feature-off policies based on \(R\):
\[
\Psi_f(1)
=
\mathbb E\left[
\frac{f(A,X)Y}{\mathbb E\{f(A,X)\mid R,X\}}
\right],
\]
and similarly for \(w=0\).

It remains to show that the shared-remainder policy agrees with the
feature-specific policy based on \(R_f\).  Write
\[
R_f=(R,U_f).
\]
The feature-specific policy satisfies
\[
p(a\mid r,u,x,W_f=w)
\propto
p(W_f=w\mid a,x)p(a\mid r,u,x).
\]
Because
\[
p(W_f=1\mid A=a,X=x)=f(a,x),
\]
the only difference between the two policies is the distribution of
\(U_f\) after conditioning on \(W_f\).

\Cref{ass:mif-fixed-residual} is equivalent to
\(W_f\perp U_f\mid R,X\).  Hence,
\[
p(u_f\mid r,x,W_f=w)
=
p(u_f\mid r,x).
\]
It follows that
\[
p(a\mid r,x,W_f=w)
=
\int
p(a\mid r,u_f,x,W_f=w)
p(u_f\mid r,x)\,du_f.
\]
Thus the shared-remainder policy averages the feature-specific policy over
the unchanged distribution of \(U_f\), and the two policies induce the same
distribution over full treatments after marginalizing over \(U_f\).

Finally, let
\[
\pi_R(R,X)=\mathbb E\{f(A,X)\mid R,X\}.
\]
The shared-remainder condition gives
\[
\mathbb E\{f(A,X)\mid R_f,X\}=\pi_R(R,X),
\]
and, because \(R\) is a component of \(R_f\),
\[
\mathbb E\{m_f(R_f,X)\mid R,X\}=m(R,X).
\]
Therefore, \Cref{prop:iden} and iterated expectation give
\[
\begin{aligned}
\widetilde\Delta_X(f)
&=
\mathbb E\{f(A,X)Y\}
-
\mathbb E\{f(A,X)m_f(R_f,X)\}\\
&=
\mathbb E\{f(A,X)Y\}
-
\mathbb E\{\pi_R(R,X)m_f(R_f,X)\}\\
&=
\mathbb E\{f(A,X)Y\}
-
\mathbb E\{\pi_R(R,X)m(R,X)\}\\
&=
\mathbb E\left[
f(A,X)\{Y-m(R,X)\}
\right].
\end{aligned}
\]
\end{proof}

\mifobjectiveappendix

\section{Proofs of \texorpdfstring{\Cref{prop:mif-consistency,prop:heldout-clt}}{Propositions~\getrefnumber{prop:mif-consistency} and~\getrefnumber{prop:heldout-clt}}}
\label{app:mif-estimation-proofs}

This appendix proves the estimation statements in \Cref{sec:causal-estimation}.  The results concern the shared-remainder variance-weighted causal contrast objective
\[
    \widetilde\Delta_X(f_\theta)=
    \mathbb E\left[
        f_\theta(A,X)\{Y-m(R,X)\}
    \right],
    \qquad
    m(r,x)=\mathbb E(Y\mid R=r,X=x).
\]

\subsection{Proof of \texorpdfstring{\Cref{prop:mif-consistency}}{Proposition~\getrefnumber{prop:mif-consistency}}}
\label{app:mif-consistency-proof}
\begin{proof}

Define the oracle and feasible empirical objectives by
\[
\widetilde\Delta_{X,n}(f_\theta)
=
\frac{1}{n}
\sum_{i=1}^n
f_\theta(A_i,X_i)
\{Y_i-m(R_i,X_i)\},
\]
and
\[
\widehat{\widetilde\Delta}_{X,n}(f_\theta)
=
\frac{1}{n}
\sum_{i=1}^n
f_\theta(A_i,X_i)
\{Y_i-\widehat m(R_i,X_i)\}.
\]
Since \(\Theta\) is compact and \(\widetilde\Delta_X(f_\theta)\) has a unique maximizer, it suffices by the argmax theorem to show
\[
\sup_{\theta\in\Theta}
\left|
\widehat{\widetilde\Delta}_{X,n}(f_\theta)
-
\widetilde\Delta_X(f_\theta)
\right|
\overset{p}{\longrightarrow}0.
\]
By the triangle inequality,
\[
\sup_{\theta\in\Theta}
\left|
\widehat{\widetilde\Delta}_{X,n}(f_\theta)
-
\widetilde\Delta_X(f_\theta)
\right|
\leq
\sup_{\theta\in\Theta}
\left|
\widetilde\Delta_{X,n}(f_\theta)
-
\widetilde\Delta_X(f_\theta)
\right|
+
\sup_{\theta\in\Theta}
\left|
\widehat{\widetilde\Delta}_{X,n}(f_\theta)
-
\widetilde\Delta_{X,n}(f_\theta)
\right|.
\]
The first term is \(o_p(1)\) by a uniform law of large numbers.  The summand class is indexed by a compact parameter space, \(f_\theta(A,X)\) is continuous in \(\theta\), and
\[
|f_\theta(A,X)\{Y-m(R,X)\}|
\leq
|Y-m(R,X)|,
\qquad
\mathbb E|Y-m(R,X)|\leq2\mathbb E|Y|<\infty.
\]
For the second term,
\[
\widehat{\widetilde\Delta}_{X,n}(f_\theta)
-
\widetilde\Delta_{X,n}(f_\theta)
=
-
\frac{1}{n}
\sum_{i=1}^n
f_\theta(A_i,X_i)
\{\widehat m(R_i,X_i)-m(R_i,X_i)\}.
\]
Thus
\[
\begin{aligned}
\sup_{\theta\in\Theta}
\left|
\widehat{\widetilde\Delta}_{X,n}(f_\theta)
-
\widetilde\Delta_{X,n}(f_\theta)
\right|
&\leq
\frac{1}{n}
\sum_{i=1}^n
\sup_{\theta\in\Theta}|f_\theta(A_i,X_i)|
|\widehat m(R_i,X_i)-m(R_i,X_i)| \\
&\leq
\sup_{r,x}|\widehat m(r,x)-m(r,x)|
=o_p(1).
\end{aligned}
\]
Combining the two bounds gives uniform convergence of \(\widehat{\widetilde\Delta}_{X,n}\) to \(\widetilde\Delta_X\), and the argmax theorem yields \(\widehat\theta\overset{p}{\longrightarrow}\theta_0\).
\end{proof}

\subsection{Proof of \texorpdfstring{\Cref{prop:heldout-clt}}{Proposition~\getrefnumber{prop:heldout-clt}}}
\label{app:heldout-clt-proof}
\begin{proof}

Conditional on the training data, \(\widehat f\) is fixed and the held-out summands \(\widehat f(A_i,X_i)\{Y_i-m(R_i,X_i)\}\) are independent and identically distributed with mean \(\widetilde\Delta_X(\widehat f)\) and variance \(V(\widehat f)\). The central limit theorem gives
\[
    \sqrt{n_{\mathrm{eval}}}
    \left\{
        \widehat{\widetilde\Delta}_{X,n_{\mathrm{eval}}}(\widehat f)
        -
        \widetilde\Delta_X(\widehat f)
    \right\}
    \rightsquigarrow
    \mathcal N\{0,V(\widehat f)\}.
\]
The variance can be estimated by the empirical variance of the held-out summands. If \(m\) is replaced by \(\widehat m\) and the induced difference in the held-out average is \(o_p(n_{\mathrm{eval}}^{-1/2})\), Slutsky's theorem gives the same limit.
\end{proof}

\subsection{Compactness relaxation for \texorpdfstring{\Cref{prop:mif-consistency}}{Proposition~\getrefnumber{prop:mif-consistency}}}

The compactness assumption on \(\Theta\) can be replaced by a tightness condition: for every \(\delta>0\), there exists a compact subset \(K_\delta\subset\Theta\) containing \(\theta_0\) such that, for all sufficiently large \(n\),
\[
    \mathbb P(\widehat\theta\in K_\delta)\geq1-\delta.
\]
The same proof applies on \(K_\delta\), with the remaining probability controlled by tightness.

\section{Theoretical guarantees for content--style separation}
\label{sec:cont-style-sep-theory}

We now introduce notation for stating this result. Let $\mathcal{C}$ denote the set of contents, $\mathcal{S}$ the set of styles, and $\mathcal{A}$ the set of text embeddings. Assume there exists a deterministic, injective map
\[
    h:\mathcal{C}\times\mathcal{S}\rightarrow\mathcal{A},
\]
such that each text embedding is uniquely determined by its content and style: $h(c,s)=h(c',s')$ implies $(c,s)=(c',s')$. Let $\mu_C$ and $\mu_S$ denote the distributions of content and style, and assume their joint distribution is $\mu=\mu_C\otimes\mu_S$. Thus, content and style are distributionally independent, and $c,c'\stackrel{\mathrm{iid}}{\sim}\mu_C$ and $s,s'\stackrel{\mathrm{iid}}{\sim}\mu_S$ below are mutually independent draws.

Introduce a content--style encoder
\[
    \Gamma=(\gamma_C,\gamma_S):
    \mathcal{A}\rightarrow\mathcal{Z}_C\times\mathcal{Z}_S,
\]
where \(\mathcal Z_C\) and \(\mathcal Z_S\) are the content and style representation spaces. The encoder decomposes a text embedding $a\in\mathcal{A}$ into $\Gamma(a)=(\gamma_C(a),\gamma_S(a))$, and a decoder $G:\mathcal{Z}_C\times\mathcal{Z}_S\rightarrow\mathcal{A}$. In the notation used throughout the rest of the paper,
\[
    A_{\mathrm{unmodifiable}}=\gamma_C(A),
    \qquad
    A_{\mathrm{modifiable}}=\gamma_S(A).
\]
For the technical steps below, we treat $\mathcal C$ and $\mathcal S$ as measurable spaces, take $\mathcal A$, $\mathcal Z_C$, and $\mathcal Z_S$ to be normed spaces with their usual Borel $\sigma$-algebras, and assume that $h$, $\Gamma$, and $G$ are measurable. These mild conditions ensure that the expectations below and our applications of Tonelli's and Fubini's theorems are well defined. We define the following losses:
\begin{enumerate}
    \item $\mathcal{L}_{\textrm{recon}}(\Gamma,G) = \mathbb{E}_{(c,s) \sim \mu}\left[ \lVert G\{\gamma_C(h(c,s)),\gamma_S(h(c,s))\}-h(c,s) \rVert^2 \right]$, i.e., reconstruction loss.
    \item $\mathcal{L}_{\textrm{cont}}(\Gamma) = \mathbb{E}_{c \sim \mu_C,\,s,s' \stackrel{\mathrm{iid}}{\sim} \mu_S}\left[ \lVert \gamma_C(h(c,s))-\gamma_C(h(c,s')) \rVert^2 \right]$, i.e., content loss.
    \item $\mathcal{L}_{\textrm{style}}(\Gamma) = \mathbb{E}_{s \sim \mu_S,\,c,c' \stackrel{\mathrm{iid}}{\sim} \mu_C}\left[ \lVert \gamma_S(h(c,s))-\gamma_S(h(c',s)) \rVert^2 \right]$, i.e., style loss.
\end{enumerate}

The following result states that zero reconstruction, content, and style losses achieve content-style separation.

\begin{proposition}
    \label{prop:content-style-sep}
    Suppose $\mathcal{L}_{\textrm{recon}}(\Gamma,G)=\mathcal{L}_{\textrm{cont}}(\Gamma)=\mathcal{L}_{\textrm{style}}(\Gamma)=0$. Then there exist measurable functions $\phi_C:\mathcal{C}\rightarrow\mathcal{Z}_C$ and $\phi_S:\mathcal{S}\rightarrow\mathcal{Z}_S$ such that, for $\mu$-almost every $(c,s)$,
    \[
        \gamma_C(h(c,s))=\phi_C(c),\qquad
        \gamma_S(h(c,s))=\phi_S(s).
    \]
    Moreover, for $\mu$-almost every $(c,s)$,
    \[
        G\{\phi_C(c),\phi_S(s)\}=h(c,s).
    \]
    The maps $\phi_C$ and $\phi_S$ are injective on full-measure subsets of $\mathcal C$ and $\mathcal S$. If another measurable encoder--decoder pair $(\Gamma',G')$ of the same type satisfies
    \[
        \mathcal{L}_{\textrm{recon}}(\Gamma',G')
        =\mathcal{L}_{\textrm{cont}}(\Gamma')
        =\mathcal{L}_{\textrm{style}}(\Gamma')
        =0,
    \]
    and induces maps $\phi'_C$ and $\phi'_S$ as above, then there are full-measure sets $\mathcal C_*\subseteq\mathcal C$ and $\mathcal S_*\subseteq\mathcal S$ and bijections
    \[
        b_C:\phi_C(\mathcal C_*)\rightarrow\phi'_C(\mathcal C_*),
        \qquad
        b_S:\phi_S(\mathcal S_*)\rightarrow\phi'_S(\mathcal S_*),
    \]
    such that $\phi'_C=b_C\circ\phi_C$ on $\mathcal C_*$ and $\phi'_S=b_S\circ\phi_S$ on $\mathcal S_*$. Thus, the two representations differ only by coordinatewise one-to-one reparameterizations on sets of probability one.
\end{proposition}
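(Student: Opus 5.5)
Zero content loss means $\gamma_C(h(c,s))=\gamma_C(h(c,s'))$ for $\mu_C\otimes\mu_S\otimes\mu_S$-almost every $(c,s,s')$, since a nonnegative integrand with zero expectation vanishes almost surely. By Tonelli, there is a full-measure set $\mathcal C_1\subseteq\mathcal C$ such that for each $c\in\mathcal C_1$ the equality holds for $\mu_S\otimes\mu_S$-a.e.\ $(s,s')$. Fix such a $c$. Applying Fubini once more, some $s_0(c)$ exists with $\gamma_C(h(c,s))=\gamma_C(h(c,s_0(c)))$ for $\mu_S$-a.e.\ $s$.

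To obtain a \emph{measurable} $\phi_C$ without an awkward selection of $s_0(c)$, I will define
\[
\phi_C(c)=\int\gamma_C(h(c,s))\,\mu_S(ds)
\]
when $\gamma_C$ is integrable, so that the a.e.-constant section equals its average. In the general normed-space case I will instead fix a single $s_0$ chosen by Fubini from the joint null set, so that $c\mapsto\gamma_C(h(c,s_0))$ is measurable as a composition of measurable maps. The point is that for $\mu_S$-a.e.\ $s_0$, the set $\{(c,s):\gamma_C(h(c,s))\neq\gamma_C(h(c,s_0))\}$ is $\mu$-null. This follows by integrating the zero-loss identity over $s'$ first and choosing $s'=s_0$ outside a null set. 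I define $\phi_S(s)=\gamma_S(h(c_0,s))$ symmetrically, using the style loss, and then intersect the two full-measure sets. Reconstruction loss zero gives $G\{\gamma_C(h),\gamma_S(h)\}=h$ $\mu$-a.e., and substituting the two representations yields $G\{\phi_C(c),\phi_S(s)\}=h(c,s)$ a.e.

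For injectivity, suppose $\phi_C(c)=\phi_C(c')$ with $c,c'$ in a good set. I choose a common $s$ such that both $(c,s)$ and $(c',s)$ lie in the full-measure reconstruction set. Then $h(c,s)=G\{\phi_C(c),\phi_S(s)\}=h(c',s)$, and injectivity of $h$ gives $c=c'$. The main obstacle is building this good set $\mathcal C_*$. I plan to take $\mathcal C_*$ to be the set of $c$ whose section $\{s:(c,s)\in E\}$ has full $\mu_S$-measure, where $E$ is the full-measure set of $(c,s)$ on which all three identities hold. By Fubini, $\mathcal C_*$ has full measure. Any two sections of full measure intersect, because $\mu_S$ is a probability measure, so a common $s$ exists. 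The style map $\phi_S$ is handled the same way.

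For uniqueness, I will define $b_C(\phi_C(c))=\phi'_C(c)$ on $\phi_C(\mathcal C_*)$, where $\mathcal C_*$ is the intersection of the good sets for both pairs. This is well defined because $\phi_C$ is injective there, and it is a bijection onto $\phi'_C(\mathcal C_*)$ because $\phi'_C$ is also injective there. The construction of $b_S$ is identical. Measurability of $b_C$ is not claimed, so I will not pursue it.
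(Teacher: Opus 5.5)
Your proposal is correct and follows the paper's proof essentially step for step. You fix a single $s_0$ (and $c_0$) via Tonelli so that $\phi_C(c)=\gamma_C(h(c,s_0))$ and $\phi_S(s)=\gamma_S(h(c_0,s))$ are measurable sections, get injectivity from a common style in two full-measure Fubini sections of the reconstruction set together with injectivity of $h$, and define $b_C\{\phi_C(c)\}=\phi'_C(c)$ on the intersection of the good sets; the per-$c$ selection and the averaged definition you mention are unnecessary detours (the average would need Bochner integrability), and you rightly fall back to the fixed-$s_0$ construction the paper uses.
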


\begin{proof}[Proof of \Cref{prop:content-style-sep}]
    Each loss averages a nonnegative quantity, so a zero loss means that the corresponding equality holds almost everywhere. In particular, zero content loss gives
    \[
        \gamma_C(h(c,s))=\gamma_C(h(c,s')),
    \]
    for $\mu_C\otimes\mu_S\otimes\mu_S$-almost every $(c,s,s')$. Tonelli's theorem therefore lets us fix a style $s_0$ such that
    \[
        \gamma_C(h(c,s))=\gamma_C(h(c,s_0)),
    \]
    for $\mu_C\otimes\mu_S$-almost every $(c,s)$. Define $\phi_C(c)=\gamma_C(h(c,s_0))$. The same argument applied to zero style loss lets us fix a content $c_0$ and define $\phi_S(s)=\gamma_S(h(c_0,s))$. These functions are measurable because they are fixed sections of measurable maps. Thus, the content and style losses give the two coordinate factorizations. Combining them with zero reconstruction loss gives
    \[
        G\{\phi_C(c),\phi_S(s)\}=h(c,s),
    \]
    for $\mu$-almost every $(c,s)$.

    To establish injectivity, let $B$ be the full-measure set on which this reconstruction identity holds. Fubini's theorem gives a full-measure set of contents for which
    \[
        B_c=\{s\in\mathcal S:(c,s)\in B\},
    \]
    has full $\mu_S$-measure. For any two contents $c,c'$ in that set, $B_c\cap B_{c'}$ contains a common style $s$. If $\phi_C(c)=\phi_C(c')$, then
    \[
        h(c,s)
        =G\{\phi_C(c),\phi_S(s)\}
        =G\{\phi_C(c'),\phi_S(s)\}
        =h(c',s),
    \]
    so injectivity of $h$ gives $c=c'$. The same argument, with content and style reversed, proves that $\phi_S$ is injective on a full-measure set.

    Finally, apply the same argument to a second zero-loss pair $(\Gamma',G')$ and restrict both representations to common full-measure sets $\mathcal C_*$ and $\mathcal S_*$ on which their coordinate maps are injective. Define
    \[
        b_C\{\phi_C(c)\}=\phi'_C(c),
        \qquad
        b_S\{\phi_S(s)\}=\phi'_S(s).
    \]
    Injectivity of both pairs of coordinate maps makes $b_C$ and $b_S$ well defined and bijective between the images stated in the proposition. Both representations reconstruct the same text embedding, so for $\mu$-almost every $(c,s)\in\mathcal C_*\times\mathcal S_*$,
    \[
        G'\!\left[b_C\{\phi_C(c)\},b_S\{\phi_S(s)\}\right]
        =h(c,s)
        =G\{\phi_C(c),\phi_S(s)\}.
    \]
    This proves uniqueness up to the stated coordinatewise bijections.
\end{proof}

The product-measure assumption above is distributional independence and is used only in the zero-loss proposition and its proof. It permits the independent content and style draws and the common full-measure sections used above. The independence-of-support loss below instead concerns product support and does not require distributional independence.

It is easy to see that separation is not successful if at least one of the reconstruction, content, and style losses is removed.  Without reconstruction, information can be discarded.  Without the content loss, content can leak into the style representation.  Without the style loss, style can leak into the content representation.

The independence-of-support loss is inspired by \citet{wang2024desiderata}, namely
\[
    \textrm{supp}([A_{\mathrm{unmodifiable}}, A_{\mathrm{modifiable}}]) = \textrm{supp}(A_{\mathrm{unmodifiable}}) \times \textrm{supp}(A_{\mathrm{modifiable}}).
\]
The intuition is that if content and style embeddings truly have independent supports, then any combination of a content vector $A_{\mathrm{unmodifiable}}$ and a style vector $A_{\mathrm{modifiable}}$ should correspond to some point within the joint support.

To evaluate the independence-of-support score (IOSS), we normalize each embedding dimension to $[0,1]$ and generate $K$ block samples by pairing a style vector from one data point with a content vector from another.  These synthetic pairs should lie within the joint support if the product-support condition holds.  We then measure how far these synthetic pairs are from the nearest actual data point in the joint embedding space.  The IOSS loss is defined as the maximum of these nearest-neighbor distances.  The smaller the value, the closer the synthetic pairs are to real observations, and the more the independence-of-support condition is satisfied.

\section{Empirical studies for the content-style separation algorithm}

\label{sec:cont-style-sep-examples}

The three examples are controlled food/drink reviews, news headlines embedded by a large language model (LLM), and controlled restaurant reviews.

\subsection{Controlled food/drink reviews}
\label{sec:cont-style-food}
\textbf{Loss behavior.}
We return to the simple food/drink review example.  \Cref{fig:traj} shows the loss trajectories for each loss component, excluding reconstruction loss.  The paired content and style losses are low during the early training phase because the content and style embeddings are initially roughly the same regardless of the text.

\begin{figure}[!htbp]
    \centering
    \includegraphics[width=0.98\textwidth]{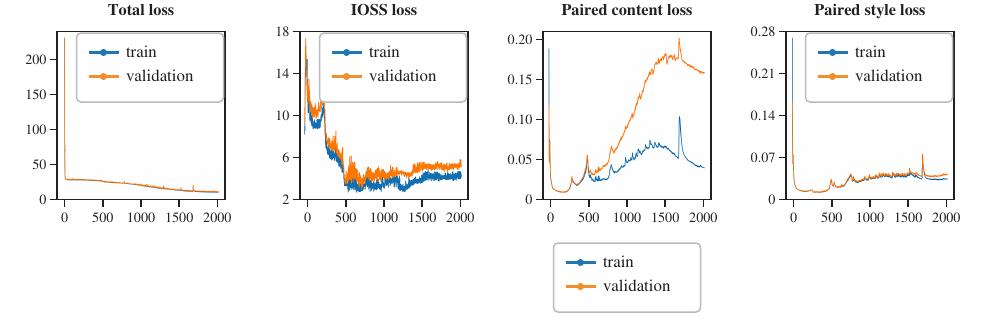}
    \caption{The paired losses begin low because content and style embeddings are initially similar across texts. The panels show the content--style separation training and validation losses across epochs. Lower is better.}
\label{fig:traj}
\end{figure} 

\textbf{Probe results.}
To check whether content-style separation is successful, we train multilayer perceptron (MLP) classifiers to predict content and style from $A_{\mathrm{unmodifiable}}$ and $A_{\mathrm{modifiable}}$.  From $A_{\mathrm{unmodifiable}}$ (content embedding), the test set accuracies of predicting content and style are $1.0$ and $0.53$.  From $A_{\mathrm{modifiable}}$ (style embedding), the test set accuracies of predicting content and style are $0.52$ and $1.0$.  These results show that the content-style separation is successful.

\textbf{Reconstructions.}
Manual inspection also suggests that reconstruction,
\[
    A \rightarrow A_{\mathrm{unmodifiable}}, A_{\mathrm{modifiable}} \rightarrow A_{\textrm{recon}},
\]
is successful on the test set.  Some examples are:
\begin{itemize}
    \item $A$: \textit{This tempeh is forgettable and lifeless.}\\
    $A_{\textrm{recon}}$: \textit{This tempeh is unpleasant and unattractive.}
    \item $A$: \textit{This pumpkin is fabulous and glorious.}\\
    $A_{\textrm{recon}}$: \textit{This pumpkin is amazing and wonderful.}
    \item $A$: \textit{This broccoli is magnificent and flavorful.}\\
    $A_{\textrm{recon}}$: \textit{This broccoli is delicious and delicious.}
\end{itemize}

\subsection{News headlines with LLM-based embeddings}
\label{sec:cont-style-headlines}
\textbf{Data and supervision.}
We next attempt to learn two functions, one that extracts the \textit{content} and one that extracts the \textit{style} of a text as represented by its embedding.  On top of these, we learn another function that reconstructs the original text embedding from the content and style embeddings.

We assemble a data set of headlines from the following general-news domains:
\begin{quote}
\small\itshape\raggedright
apnews.com, dailymail.co.uk, sfgate.com, chron.com,
timesofindia.indiatimes.com, thehindu.com, theguardian.com, allafrica.com,
mirror.co.uk, chronicleonline.com, washingtonpost.com, forbes.com,
newsnow.co.uk, nytimes.com, stuff.co.nz, theepochtimes.com, thestar.com.my,
and abcnews.go.com.
\end{quote}
We use the Hugging Face
\href{https://huggingface.co/facebook/fasttext-language-identification}{fastText language-identification model}
to identify English headlines. We keep only English headlines with no commas and with length between 50 and 80 characters, then take a random subset of $20{,}000$ headlines.

For each headline, we use the Llama 3.1 8B Instruct model to output two headlines: one with the \textit{same content} as the original headline, and another with the \textit{same style} as the original headline.  We use the following prompts:
\begin{enumerate}[(A)]
    \item \textbf{Same content, different style.}
    
    \texttt{SYSTEM:} You are a highly skilled news headline paraphraser. Your task is to paraphrase news headlines that completely preserve content but change style.

    \texttt{USER:} Original headline: \texttt{[headline]}

    Rewrite this headline, ensuring that ALL the following requirements are satisfied:
    \begin{enumerate}
        \item The content is strictly preserved.
        \item The style is completely changed.
        \item Most importantly, do not add, remove, or subjectively interpret any information.
    \end{enumerate}
    Output ONLY the new headline.

    \item \textbf{Same style, different content.}
    
    \texttt{SYSTEM:} You are a highly skilled linguistic analyst and creative writer. Your task is to extract grammatical skeletons from sentences and generate new headlines that preserve style but change content to be about university.

    \texttt{USER:} Original headline: \texttt{[headline]}

    Rewrite this headline, ensuring that ALL the following requirements are satisfied:
    \begin{enumerate}
        \item The grammatical skeleton is strictly preserved.
        \item The content is about university.
        \item Most importantly, the new headline should not contain the same nouns, verbs, or adjectives as the original headline.
    \end{enumerate}
    Output ONLY the new headline.
\end{enumerate}

Below are some examples:
\begin{itemize}
    \item \textit{Original:} Whoopi Goldberg tells racist fantasy fans to `get a job'. \\
    \textit{Same content:} Whoopi Goldberg Urges Fans of Racist Fantasy to Seek Employment Opportunities. \\
    \textit{Same style:} Academic critics advise entitled student protesters to `secure a scholarship'.

    \item \textit{Original:} Analysis: Ukraine war forces United Arab Emirates to hedge. \\
    \textit{Same content:} UAE Adapts to Ukraine Conflict by Diversifying Its Stance. \\
    \textit{Same style:} Analysis: Rising tuition costs forces Canadian students to hedge.

    \item \textit{Original:} Queen Camilla is a mom of 2: What to know about her children. \\
    \textit{Same content:} Meet Queen Camilla's Two Offspring: A Closer Look at Her Family Life. \\
    \textit{Same style:} Professor Thompson is a mentor of 10: What to know about her students.
\end{itemize}

\textbf{Training objective.}
The training loss consists of three components:
\begin{enumerate}
    \item \textbf{Reconstruction loss}, ensuring that we can reconstruct the original embedding from the content and style embeddings.
    \item \textbf{Content loss}, ensuring that texts with the same content have similar embeddings.
    \item \textbf{Style loss}, ensuring that texts with the same style have similar embeddings.
\end{enumerate}

\textbf{Failure mode.}
We observe that reconstruction works well.  However, content-style separation does not work very well, because ``style'' mostly captures grammatical skeleton.  Two sentences with the same grammatical skeleton are likely to have the same style, but the converse need not hold.  Style may be more abstract than grammatical form.  Some examples are:
\begin{enumerate}\raggedright
    \item 
    \textit{Original sentence:} Helen Flanagan flaunts her sensational figure in sky blue bikini. \\
    \textit{Target style:} Jane Campion wins directing Oscar for 'Power of the Dog'. \\
    \textit{Rewritten sentence:} Jane Flanagan flaunts ecstasy on Halloween for 'Heaven of bikini.'

    \item 
    \textit{Original sentence:} Helen Flanagan flaunts her sensational figure in sky blue bikini. \\
    \textit{Target style:} NPC: Wellington beat Tasman 25-19 after controversial Connor Garden-Bachop try. \\
    \textit{Rewritten sentence:} Britishline Flannigan kicks Helen after glamorous butts.

    \item 
    \textit{Original sentence:} Court rules NY Democrats gerrymandered district boundaries. \\
    \textit{Target style:} 'Teflon' Mark Rutte is longest-serving Dutch prime minister. \\
    \textit{Rewritten sentence:} 'Nodes' NYC courting's become last of state district judges.

    \item 
    \textit{Original sentence:} Court rules NY Democrats gerrymandered district boundaries. \\
    \textit{Target style:} Togo: How AI Helped Deliver Cash Aid to Many of the Poorest People in Togo. \\
    \textit{Rewritten sentence:} NYC: Democratic Policy Delivers to Neighborhoods in New York City.

    \item 
    \textit{Original sentence:} West Australians boast about their abundance of lettuce. \\
    \textit{Target style:} Vietnam to lift Covid-19 restrictions on international flights. \\
    \textit{Rewritten sentence:} Australia unleashes savagery on outdoor vegetarians.

    \item 
    \textit{Original sentence:} West Australians boast about their abundance of lettuce. \\
    \textit{Target style:} How to invest in sustainable food: Six best fund idea. \\
    \textit{Rewritten sentence:} How to farm in Australian wildlife: Sexy New Zealand.
\end{enumerate}

This experiment is useful because it shows what can go wrong when style supervision is too narrow.  If the target style is defined mainly through grammatical skeleton, the learned representation may capture surface form rather than the broader writing choices that are actually modifiable.

\subsection{Restaurant reviews with LLM-based embeddings}
\label{sec:cont-style-restaurants}
\textbf{Data and supervision.}
We also consider a more controlled LLM-generated setting.  We prompt \texttt{Llama-3.1-8B-Instruct} to generate short one-sentence Asian restaurant reviews, each of which contains the restaurant name, a dish, and a short description of how it tastes.  Each review is either positive or negative, and either formal or informal.

We form a data set of triplets as follows:
\begin{enumerate}
    \item To generate a review with the same content, we prompt \texttt{Llama-3.1-8B-Instruct} to paraphrase the review.
    \item For a review with the same style, we again prompt \texttt{Llama-3.1-8B-Instruct} to write about a different dish and restaurant while maintaining the grammatical skeleton and changing the review's sentiment.
\end{enumerate}

\textbf{Qualitative transfers.}
As an illustration, we consider two sentences:
\[
    \text{``Ming's Oriental Bistro serves an exquisite Kung Pao Chicken delightfully''},
\]
and
\[
    \text{``Panda's Palace has an awful Pad Thai that tastes terrible.''}
\]
For each sentence, we rewrite it following the styles of nine target sentences, as summarized in \Cref{tab:rewrite-1,tab:rewrite-2}. In general, each rewritten review maintains the original review's content, namely the restaurant name, dish name, and how the dish tastes, while adopting the target review's grammatical skeleton.

\FloatBarrier

\begin{table}[!htbp]
\centering
\caption{Style transfer often retains restaurant and dish content but can introduce semantic and grammatical artifacts. The table rewrites ``Ming's Oriental Bistro serves an exquisite Kung Pao Chicken delightfully'' in nine target styles.}
\label{tab:rewrite-1}
\begin{tabular}{>{\raggedright\arraybackslash}p{7.5cm} >{\raggedright\arraybackslash}p{7.5cm}}
\hline
Target style                                                             & Rewritten sentence                                                                   \\ \hline
Bangkok Bites' Pad Thai is ridiculously flavorful and addictive.         & Ming's Kungo Bistro's Oriental Chicken is ridiculously flavorful and enticing.       \\ \hline
Donghai Asia's beef rendang tastes like stale cardboard somehow.         & Ming's Kung Pao Bistro Chicken of East Bistro tastes like perfectly stale butter.    \\ \hline
Tasty Wok had amazing Kung Pao chicken that's ridiculously flavorful.    & Ming's Oriental Bistro had Amazing Kung Pao Chicken which is ridiculously flavorful. \\ \hline
Ming Dynasty's Vegetable Lo Mein has an unpleasant, bitter taste.        & Ming's Oriental Bistro Kungo Pao Chicken has an exotic, rich flavor.                 \\ \hline
I had terrible beef at Wong's which was very salty.                      & I had Mingko Kung Pao at Mingko Bistro's Oriental, was extremely gluten.             \\ \hline
I ordered Won Ton soup at Lucky Wok Noodle House, it's gross.            & I have eaten Ming Kung Pao at Oriental's Jiang Bistro, it is delicious.              \\ \hline
Dumpling Heaven serves super tender beef potstickers that taste amazing. & Ming's Oriental Bistro serves exquisite Kung Pao Chicken which tastes wonderful.     \\ \hline
Tokyo Wok's General Tso's Chicken tastes extremely over salted.          & Ming's Oriental Bistro Kungo Pao Chicken has excessively salted.                     \\ \hline
Panda's Palace has awful Pad Thai that tastes terrible.                  & Ming's Oriental Bistro has awesome Kung Pao Chicken that tastes fabulous.            \\ \hline
\end{tabular}
\end{table}

\begin{table}[!htbp]
\centering
\caption{Style transfer often retains restaurant and dish content but can introduce semantic and grammatical artifacts. The table rewrites ``Panda's Palace has an awful Pad Thai that tastes terrible'' in nine target styles.}
\label{tab:rewrite-2}
\begin{tabular}{>{\raggedright\arraybackslash}p{7.5cm} >{\raggedright\arraybackslash}p{7.5cm}}
\hline
Target style                                                                                  & Rewritten sentence                                                      \\ \hline
Ming's Oriental Bistro serves an exquisite Kung Pao Chicken delightfully. & Panda's Palace serves an exquisite Pad Thai terribly.                   \\ \hline
Bangkok Bites' Pad Thai is ridiculously flavorful and addictive.                              & Panda's Pad Thai's Palace is ridiculously flavored and annoying.        \\ \hline
Donghai Asia's beef rendang tastes like stale cardboard somehow.                              & Pad Thai's Panda Pad Thai's tastes like old cardboard by the way.       \\ \hline
Tasty Wok had amazing Kung Pao chicken that's ridiculously flavorful.                         & Panda's Palace had incredible Pad Thai which is ridiculously flavorful. \\ \hline
Ming Dynasty's Vegetable Lo Mein has an unpleasant, bitter taste.                             & Pad Thai's Panda Palace's Pad Thai has an unpleasant, thick taste.      \\ \hline
I had terrible beef at Wong's which was very salty.                                           & I had the Pad Thai's Panda Palace was extremely salty.                  \\ \hline
I ordered Won Ton soup at Lucky Wok Noodle House, it's gross.                                 & I've tried the Pad Thai's Pad Thai's Palace, it is disgusting.          \\ \hline
Dumpling Heaven serves super tender beef potstickers that taste amazing.                      & Panda's Palace serves incredible Pad Thai that tastes amazing.          \\ \hline
Tokyo Wok's General Tso's Chicken tastes extremely over salted.                               & Panda Pad Thai's Pad Palace has an extremely salty taste.               \\ \hline
\end{tabular}
\end{table}

\textbf{Sentiment preservation.}
To further demonstrate the content-style separation algorithm's success, we
take ten representative reviews.  For each review, we change its style
following each of 100 target reviews.  We measure each rewritten review's
sentiment using the following Hugging Face model:
\begin{center}
\texttt{tabularisai/multilingual-sentiment-analysis}
\end{center}
Scores of $-2$ and $+2$ indicate ``very negative'' and ``very positive,''
respectively.  The average sentiment of the rewritten reviews for each
original review is shown in \Cref{tab:avg-sent}. In general, sentiment
polarity is preserved, although its magnitude and isolated cases can change.
Thus, the content--style separation works reasonably well on average but is
not exact for every rewrite.

\begin{table}[!htbp]
\centering
\caption{Positive and negative sentiment are generally preserved across target styles, with exceptions. The table gives the average sentiment of each selected review after transfer to 100 target styles.}
\label{tab:avg-sent}
\begin{tabular}{lc}
\hline
Original sentence                                                                             & Avg. sentiment \\ \hline
\multicolumn{1}{l}{Tasty Wok had amazing Kung Pao chicken that's ridiculously flavorful.}       & $+0.94$          \\ \hline
{Ming's Oriental Bistro serves an exquisite Kung Pao Chicken delightfully.}                     & $+0.75$          \\ \hline
Dumpling Heaven serves super tender beef potstickers that taste amazing.                        & $+0.65$          \\ \hline
Tokyo Wok's General Tso's Chicken tastes extremely over salted.                                 & $+0.53$          \\ \hline
Bangkok Bites' Pad Thai is ridiculously flavorful and addictive.                                & $+0.45$          \\ \hline
I ordered Won Ton soup at Lucky Wok Noodle House, it's gross.                                   & $-0.07$          \\ \hline
I had terrible beef at Wong's which was very salty.                                             & $-0.26$          \\ \hline
Donghai Asia's beef rendang tastes like stale cardboard somehow.                                & $-0.37$          \\ \hline
Panda's Palace has awful Pad Thai that tastes terrible.                                         & $-0.43$          \\ \hline
Ming Dynasty's Vegetable Lo Mein has an unpleasant, bitter taste.                               & $-0.62$          \\ \hline
\end{tabular}
\end{table}

Across these examples, the goal is the same.  We want the MIF to find the most outcome-relevant feature among the parts of the treatment that one could plausibly change.

\clearpage
\section[Additional results for Experiment I: Empirical studies on text-based treatments]{Additional results for Experiment I:\texorpdfstring{\\}{ }Empirical studies on text-based treatments}
\label{app:text-details}
\label{sec:app-text}

This appendix contains the details and remaining results for every text study in \Cref{sec:emp-text}. It gives the complete data-generating processes and implementation choices for the GYAFC, ParaDetox, and food-and-drink studies; all remaining nudging trajectories and qualitative examples; the no-covariate formality diagnostic; and the complete topic-model comparison.

\subsection{GYAFC: User comments and response helpfulness}
\label{sec:text-gyafc}

\textbf{Data and question.}
We begin with a semi-synthetic experiment using the Family and Relationships domain of the GYAFC dataset~\citep{rao2018dear}. We reinterpret each sentence as a user comment in an online discussion or advice forum. The treatment $A$ is the comment text, and the outcome $Y$ is a response helpfulness score, measuring the quality or usefulness of the response that the comment receives. Although the outcome is simulated, this setup corresponds to a common question of interest: \textit{what textual features of a user comment are associated with more helpful responses, after accounting for the context in which the comment appears?}

Let $S_{\mathrm{formal}}\in\{0,1\}$ denote the style label of the comment, where $S_{\mathrm{formal}}=0$ denotes an informal comment and $S_{\mathrm{formal}}=1$ denotes a formal comment. Let $X\in\{0,1\}$ denote the discussion context: $X=0$ corresponds to a casual discussion thread, while $X=1$ corresponds to an advice-seeking or support thread. The context $X$ is observed by the analyst but is not itself part of the text embedding. In the data-generating process, $X$ affects both the distribution of comment style and the response helpfulness score, so it plays the role of a confounder. Except where a diagnostic states otherwise, the semi-synthetic text outcome models use independent error~\(\xi\sim\mathcal N(2,1)\).

\textbf{A shared formality direction.}
We first consider a setting in which formal comments receive more helpful responses in both contexts. The data generating process, denoted Scenario 1, is:
\begin{enumerate}
    \item $X\in\{0,1\}$ with equal probability.
    \item If $X=0$, $A$ is formal with probability $0.3$ and informal with probability $0.7$. If $X=1$, $A$ is formal with probability $0.7$ and informal with probability $0.3$.
    \item $Y=10S_{\mathrm{formal}}+10X+\xi$.
\end{enumerate}
In this scenario, formal comments have higher response helpfulness than informal comments within both contexts. Therefore, the learned $\widehat f(A,X)$ should be close to $0.9$ for formal comments and close to $0.1$ for informal comments, regardless of $X$.

On the held-out test set, panel (a) of \Cref{fig:exp1-plot} confirms this behavior. The average learned $\widehat f(A,X)$ values for informal and formal comments are approximately $0.19$ and $0.82$ when $X=0$, and $0.26$ and $0.84$ when $X=1$. Hence, the learned representation recovers the outcome-relevant textual feature: formality is associated with higher response helpfulness in both contexts.

\textbf{A context-dependent formality direction.}
We next consider a setting in which the useful style depends on the discussion context. The data generating process, denoted Scenario 2, is:
\begin{enumerate}
    \item $X\in\{0,1\}$ with equal probability.
    \item If $X=0$, $A$ is formal with probability $0.3$ and informal with probability $0.7$. If $X=1$, $A$ is formal with probability $0.7$ and informal with probability $0.3$.
    \item $Y=10S_{\mathrm{formal}}(2X-1)+10X+\xi$.
\end{enumerate}
Here, the direction of the style effect changes across contexts. In casual discussion threads, informal comments receive more helpful responses; in advice-seeking or support threads, formal comments receive more helpful responses. Thus, a valid algorithm should not simply discover ``formality'' globally. Instead, it should discover a context-dependent textual feature.

On the same held-out test set, panel (b) of \Cref{fig:exp1-plot} shows that the learned $\widehat f(A,X)$ adapts to this context dependence. When $X=0$, the average learned value is high for informal comments and low for formal comments, approximately $0.81$ versus $0.22$. When $X=1$, the pattern reverses: the average learned value is approximately $0.28$ for informal comments and $0.85$ for formal comments. This demonstrates that the variance-weighted causal contrast learns the text direction associated with larger outcomes relative to the context-specific baseline, rather than merely learning a fixed style label.

\textbf{Nudging.}
We further probe the learned direction through embedding-based nudging. Starting from an initial comment, we transform its SONAR embedding in the direction that increases $\widehat f(A,X)$ while holding the context $X$ fixed. Since the learned $\widehat f(A,X)$ values are often close to the bounds $0.1$ and $0.9$, we apply a temperature-smoothed sigmoid during nudging by replacing $\sigma(\cdot)$ with $\sigma(\cdot/\tau)$. We also normalize the gradients to have mean $0$ and standard deviation $1$. In all experiments, we apply $10$ nudging steps with step size $\lambda=0.2$.

\Cref{fig:exp1-nudging} summarizes the nudging trajectories. In Scenario 1, nudging consistently increases the average formality score of the comments, from approximately $0.2$ at the initial texts to nearly $1.0$ after $10$ iterations. This agrees with the data-generating process, where formal comments are associated with higher response helpfulness in both contexts. In Scenario 2, the nudging direction depends on the context. For initially informal comments in the advice-seeking context, corresponding to $(S_{\mathrm{formal}},X)=(0,1)$, nudging increases average formality from approximately $0.3$ to $0.8$. In contrast, for initially formal comments in the casual discussion context, corresponding to $(S_{\mathrm{formal}},X)=(1,0)$, nudging decreases average formality from approximately $0.9$ to $0.5$. These trajectories show that the learned intervention is not simply pushing all texts toward formality; instead, it moves texts toward the style associated with higher response helpfulness within the given context.

\begin{table}[t]
    \centering
    \small
    \begin{tabularx}{\textwidth}{>{\raggedright\arraybackslash}p{0.18\textwidth}>{\raggedright\arraybackslash}X>{\raggedright\arraybackslash}X}
        \toprule
        Context & Low $\widehat f(A,X)$ examples & High $\widehat f(A,X)$ examples \\
        \midrule
        Advice/support thread & \textit{THE LONGEST WAS 7 HOURS} \newline \textit{it would have gone really easy.}  \newline \textit{But i thought I had your heart?} & \textit{If that is the case, then you truly do love her.} \newline \textit{I believe that deep down no one cares about others, and they prefer to wallow in self-pity.} \newline \textit{Personally, I would not do it since you have known him only six months.}  \\
        \midrule
        Casual thread & \textit{that's all I'm sayin'  ppl back me up on this 1 !!}  \newline \textit{men seem to have a weird way of showing their hurt.} \newline \textit{Um...okay, I HATE online relationships, just to let you know.}  & \textit{No, you know when you are ready for whatever you are talking about. You are your own person.}  \newline \textit{If you were in his shoes, you may very well end up like that.}  \newline \textit{Men need to control their desires and use their brains.} \\
        \midrule
        \bottomrule
    \end{tabularx}
    \caption{High-scoring comments are formal and low-scoring comments are informal in both contexts in Scenario 1. The table gives representative comments.}
    \label{tab:scenario1-low-high-examples}
\end{table}

\begin{table}[t]
    \centering
    \small
    \begin{tabularx}{\textwidth}{>{\raggedright\arraybackslash}p{0.18\textwidth}>{\raggedright\arraybackslash}X>{\raggedright\arraybackslash}X}
        \toprule
        Context & Low $\widehat f(A,X)$ examples & High $\widehat f(A,X)$ examples \\
        \midrule
        Advice/support thread & \textit{now i dont know if that is true or not.} \newline \textit{Lubrication= KY jelly or something similar...}  \newline \textit{it would have gone really easy.}  & \textit{I am sorry for both of you.} \newline \textit{I recommend speaking up and saying there's something wrong.} \newline \textit{Raising your children in that kind of situation will influence their future decisions.} \\
        \midrule
        Casual thread & \textit{Please enjoy being young and only 17 years old while pursuing a relationship.} \newline \textit{Obviously a man will desire sexual intercourse.} \newline \textit{Give your credit card details, too.}  & \textit{to protect them from getting hurt}  \newline \textit{(i'm always sure to return the favor)} \newline \textit{they just want for us to get confused!} \\
        \midrule
        \bottomrule
    \end{tabularx}
    \caption{The high-score formality direction reverses with context in Scenario 2: informal in casual threads and formal in advice-seeking threads. The table gives representative comments.}
    \label{tab:scenario2-low-high-examples}
\end{table}

\textbf{Qualitative interpretation.}
Finally, \Cref{tab:scenario1-low-high-examples,tab:scenario2-low-high-examples} provide qualitative examples of the textual directions discovered by the learned $\widehat f(A,X)$. In Scenario 1, high $\widehat f(A,X)$ comments are generally more polished and formal in both contexts, while low $\widehat f(A,X)$ comments tend to be more informal, abbreviated, or conversational. This agrees with the data generating process, where formality is associated with larger outcomes regardless of context. In Scenario 2, however, the interpretation of high $\widehat f(A,X)$ depends on the context. In advice/support threads, high $\widehat f(A,X)$ comments are again more formal and carefully phrased, such as comments that apologize, recommend an action, or explain consequences. In casual threads, the pattern reverses: high $\widehat f(A,X)$ comments are more informal and conversational, while low $\widehat f(A,X)$ comments are comparatively more formal. These examples show that the learned intervention does not simply recover a global formality direction. Instead, it identifies the text style associated with larger outcomes relative to the context-specific baseline, as intended by the variance-weighted causal contrast.

\FloatBarrier

\subsubsection{No-covariate formality diagnostic}
\label{sec:formality-diagnostic}

\textbf{Purpose.}
We evaluate how the nudging algorithm in \Cref{sec:interpret} performs in practice.

\textbf{Design.}
We illustrate the diagnostic on the Family and Relationships domain of the GYAFC data set~\citep{rao2018dear}.  We use a no-covariate semi-synthetic outcome
\[
    Y = 10S_{\mathrm{formal}} + 20 + 10\xi,
    \qquad
    \xi\sim\mathcal N(0,1),
\]
where \(S_{\mathrm{formal}}=1\) denotes a formal sentence and \(S_{\mathrm{formal}}=0\) denotes an informal sentence.  The only outcome-relevant text feature is formality.  We learn \(\widehat f(A)\in[0.1,0.9]\) by maximizing the sample analogue of the variance-weighted causal contrast
\[
    \widetilde\Delta_X(f)=\mathbb E[f(A)\{Y-\mathbb E(Y)\}].
\]
We then perform ten nudging iterations in the direction of increasing \(\widehat f(A)\), using gradient normalization, step size \(0.1\), and temperature \(50\).  The SONAR representation~\citep{duquenne2023sonar} is useful here because it provides both text-to-embedding and embedding-to-text maps. We score each decoded text with \texttt{s-nlp/xlmr\_formality\_classifier}, a Hugging Face evaluator that returns a formality score in \([0,1]\), with larger values indicating more formal text.

\textbf{Results.}
\Cref{fig:nudging} gives two checks.  First, the learned \(\widehat f\) values are concentrated near the boundaries: formal sentences mostly receive high scores, and informal sentences mostly receive low scores.  Second, nudging increases the average evaluator score.  \Cref{tab:nudging-examples} shows five representative before-and-after texts. The nudged texts generally preserve their main semantic content while moving toward more formal phrasing, although some introduce decoding artifacts.

\begin{figure}[!htbp]
    \centering
    \begin{subfigure}[t]{0.47\textwidth}
        \vspace{0pt}
        \centering
        \footnotesize
        \renewcommand{\arraystretch}{2.2}
        \begin{tabular}{lcc}
            \toprule
            & Formal text & Informal text \\
            \midrule
            $\widehat f \leq 0.12$ 
                & $0.041$ & \textbf{0.780} \\
            $0.12 < \widehat f < 0.88$ 
                & $0.169$ & $0.155$ \\
            $\widehat f \geq 0.88$ 
                & \textbf{0.789} & $0.065$ \\
            \bottomrule
        \end{tabular}
        \caption{Formal and informal texts concentrate in opposite score categories.}
        \label{fig:nudging-f-categories}
    \end{subfigure}
    \hfill
    \begin{subfigure}[t]{0.47\textwidth}
        \vspace{0pt}
        \centering
        \includegraphics[width=0.95\textwidth]{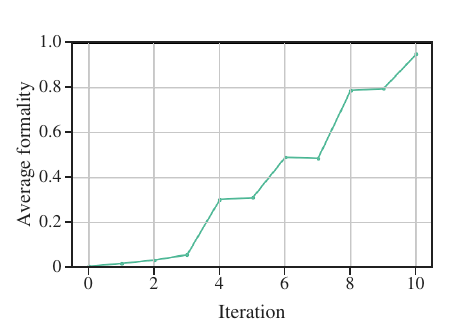}
        \caption{Nudging steadily increases average formality.}
        \label{fig:nudging-formality-iter}
    \end{subfigure}
    \caption{Learned scores separate formal from informal text, and nudging increases external formality. Panel (a) shows that formal texts concentrate at large \(\widehat f\) and informal texts at small \(\widehat f\); panel (b) shows the average formality trajectory. Higher is more formal.}
    \label{fig:nudging}
\end{figure}

\begin{table}[t]
    \centering
    \small
    \begin{tabularx}{\textwidth}{@{}>{\raggedright\arraybackslash}X >{\raggedright\arraybackslash}X@{}}
        \toprule
        \textbf{Original} & \textbf{Nudged} \\
        \midrule
        \textit{but if you think they are youre prob right...}
        &
        \textit{If you think that they are, however, you are likely to be right.}
        \\
        \midrule
        \textit{if u dnt u'll rgret it}
        &
        \textit{If you do not, you will be sorry for it.}
        \\
        \midrule
        \textit{yes what were you thinking  o my god}
        &
        \textit{True, what were you thinking of, as to my God?}
        \\
        \midrule
        \textit{eeek... marriage your thinking about this too much.}
        &
        \textit{Looking at marriage, you are thinking about this too much.}
        \\
        \midrule
        \textit{umm no.. stop asking dumb questions!}
        &
        \textit{Humpty, no. Stop asking stupid questions.}
        \\
        \bottomrule
    \end{tabularx}
    \caption{Nudging generally preserves the main semantic content while increasing formality, although individual decoded sentences can still contain artifacts. The table gives five representative texts before and after nudging.}
    \label{tab:nudging-examples}
\end{table}

\FloatBarrier

\subsubsection{Why topics can miss the feature of interest}
\label{sec:topic-model-failure}

\textbf{Topic-based feature class.}
The same formality example also illustrates why the search space matters. A topic model can split texts by relationship content, but formality cuts across topics. To make this comparison concrete, we fit LDA models~\citep{blei2003latent} with \(K\in\{2,3,\ldots,8\}\) topics on the GYAFC training sentences. For each fitted model, let \(J_K(A)\in\{1,\ldots,K\}\) be the assigned topic. We then search over nonempty subsets \(\mathcal J\subset\{1,\ldots,K\}\) and define
\[
    f_{\mathcal J}(A)
    =
    0.1+0.8\,\mathbf 1\{J_K(A)\in\mathcal J\}.
\]
This maps a deterministic topic split into the same score range \([0.1,0.9]\) used by the MIF.  The selected split maximizes the training analogue of
\[
    \widetilde\Delta_X(f_{\mathcal J})
    =
    \mathbb E\left[
        f_{\mathcal J}(A)\{Y-\mathbb E(Y)\}
    \right].
\]
Equivalently,
\[
    \widetilde\Delta_X(f_{\mathcal J})
    =
    0.8\,c_{\mathcal J}(1-c_{\mathcal J})
    \left\{
        \mathbb E(Y\mid J_K(A)\in\mathcal J)
        -
        \mathbb E(Y\mid J_K(A)\notin\mathcal J)
    \right\},
\]
where
\[
    c_{\mathcal J}
    =
    \mathbb P\{J_K(A)\in\mathcal J\}.
\]
This is the best contrast available inside the topic-based feature class.

\textbf{Held-out comparison.}
The selected topic model has \(K=5\) and \(\mathcal J=\{3,5\}\).  Its test-set variance-weighted causal contrast is \(0.24\), far below the contrast \(1.4\) achieved by the embedding-based MIF and the theoretical benchmark \(2.0\) under a perfect formality split with
\[
    \mathbb P(S_{\mathrm{formal}}=0)=\mathbb P(S_{\mathrm{formal}}=1)=1/2.
\]
The representative words of the five topics are:
\begin{itemize}
    \item Topic 1: \textit{like know time guy girls girl bad guys better really}
    \item Topic 2: \textit{love good ask way does like feel friends talk really}
    \item Topic 3: \textit{just men women say man sex maybe ask yes friend}
    \item Topic 4: \textit{don want just think people dont know make love sure}
    \item Topic 5: \textit{tell love relationship married believe old depends years boyfriend man}
\end{itemize}
The selected topics are interpretable as relationship themes, but they do not isolate formality.  This is the point of the example: when the outcome-relevant feature is stylistic, syntactic, or otherwise spread across semantic topics, a topic model may be too coarse a feature class.

\subsection{ParaDetox: Toxicity and platform response priorities}

\label{sec:text-toxic}

\textbf{Data and question.}
We next consider a semi-synthetic experiment in which the outcome-relevant textual feature is toxicity rather than formality. We use the ParaDetox dataset \citep{logacheva2022paradetox}, which contains paired toxic and neutralized versions of online comments. We reinterpret each text as a user comment submitted to an online platform. The treatment $A$ is the comment text, and the outcome $Y$ is a platform response priority score. Depending on the context, a larger priority score may correspond either to promoting constructive engagement or to prioritizing moderation review. This gives a meaningful question: \textit{what textual properties of a user comment should drive platform response priorities, after accounting for the context in which the comment appears?}

Let $Q\in\{0,1\}$ denote the toxicity label of the comment, where $Q=0$ denotes a neutral comment and $Q=1$ denotes a toxic comment. Let $X\in\{0,1\}$ denote the discussion context: $X=0$ corresponds to a regular community discussion thread, while $X=1$ corresponds to a moderation-sensitive thread. As in \Cref{sec:text-gyafc}, $X$ affects both the distribution of the text feature and the outcome, so it plays the role of a confounder. We use the same variance-weighted causal contrast and estimation algorithm as in the previous experiment, learning $\widehat f(A,X)\in[0.1,0.9]$ to identify text-context pairs with larger outcomes relative to the context-specific baseline.

\textbf{A shared toxicity direction.}
We first consider a setting in which neutral comments receive larger platform response priority scores in both contexts. The data generating process, denoted Scenario 1, is:

\begin{enumerate}

    \item $X\in\{0,1\}$ with equal probability.

    \item If $X=0$, $A$ is toxic with probability $0.3$ and neutral with probability $0.7$. If $X=1$, $A$ is toxic with probability $0.7$ and neutral with probability $0.3$.

    \item $Y=20+10(1-Q)+10X+\xi$.

\end{enumerate}

In this scenario, neutral comments have larger outcomes than toxic comments within both contexts. Therefore, the learned $\widehat f(A,X)$ should be close to $0.9$ for neutral comments and close to $0.1$ for toxic comments, regardless of $X$.

On the held-out test set, panel (a) of \Cref{fig:tox-plot} confirms this behavior. The average learned $\widehat f(A,X)$ values for neutral and toxic comments are approximately $0.88$ and $0.13$ in community discussion threads, and $0.88$ and $0.11$ in moderation-sensitive threads. Hence, the learned representation recovers the intended outcome-relevant feature: lower toxicity is associated with larger outcomes in both contexts.

\textbf{A context-dependent toxicity direction.}
We next consider a setting in which the direction of the toxicity effect depends on the discussion context. The data generating process, denoted Scenario 2, is:

\begin{enumerate}

    \item $X\in\{0,1\}$ with equal probability.

    \item If $X=0$, $A$ is toxic with probability $0.3$ and neutral with probability $0.7$. If $X=1$, $A$ is toxic with probability $0.7$ and neutral with probability $0.3$.

    \item $Y=20+10Q(2X-1)+10X+\xi$.

\end{enumerate}

Here, the direction of the toxicity effect changes across contexts. In regular community discussion threads, neutral comments receive larger platform response priority scores. In moderation-sensitive threads, toxic comments receive larger scores because they are more urgent for moderation review. Thus, a valid algorithm should not simply discover ``toxicity'' globally. Instead, it should discover a context-dependent textual direction.

On the same held-out test set, panel (b) of \Cref{fig:tox-plot} shows that the learned $\widehat f(A,X)$ adapts to this context dependence. When $X=0$, the average learned value is high for neutral comments and low for toxic comments, approximately $0.87$ versus $0.16$. When $X=1$, the pattern reverses: the average learned value is approximately $0.15$ for neutral comments and $0.89$ for toxic comments. This demonstrates that the variance-weighted causal contrast objective learns the text direction associated with larger outcomes relative to the context-specific baseline, rather than merely learning a fixed toxicity label.

\textbf{Nudging.}
We again probe the learned direction through embedding-based nudging. Starting from an initial comment, we transform its SONAR embedding in the direction that increases $\widehat f(A,X)$ while holding the context $X$ fixed. We use the same nudging algorithm as in \Cref{sec:text-gyafc}, including the same temperature smoothing and gradient normalization, and apply $10$ nudging steps with step size $\lambda=0.1$. We evaluate toxicity using Detoxify \citep{Detoxify}.

\Cref{fig:tox-nudging} summarizes the nudging trajectories. In Scenario 1, nudging toxic comments in the direction of increasing $\widehat f(A,X)$ substantially decreases their average toxicity score, from approximately $0.9$ at the initial texts to approximately $0.1$ after $10$ iterations. This agrees with the data generating process, where neutral comments are associated with larger outcomes in both contexts. In Scenario 2, the nudging direction depends on the context. For neutral comments in moderation-sensitive threads, corresponding to $(Q,X)=(0,1)$, nudging increases average toxicity because toxic comments receive larger moderation-priority scores in this context. In contrast, for toxic comments in community discussion threads, corresponding to $(Q,X)=(1,0)$, nudging decreases average toxicity because neutral comments receive larger outcomes in this context. These trajectories show that the learned intervention does not simply detoxify all text. Instead, it moves text toward the toxicity direction associated with larger outcomes within the given context.

\begin{figure}[t]
    \centering
    \includegraphics[width=0.80\textwidth]{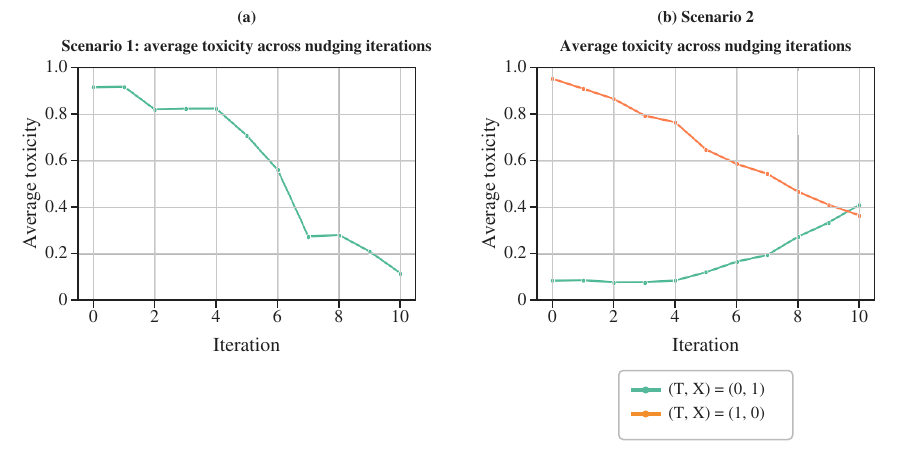}
    \caption{Nudging follows both the shared and context-dependent toxicity directions. In Scenario 1, it decreases toxicity because neutral comments have larger outcomes in both contexts. In Scenario 2, it decreases toxicity in community threads but increases toxicity in moderation-sensitive threads, matching the data-generating process. Higher is more toxic.}
    \label{fig:tox-nudging}
\end{figure}

\begin{table}[t]
    \centering
    \small
    \begin{tabularx}{\textwidth}{>{\raggedright\arraybackslash}p{0.18\textwidth}>{\raggedright\arraybackslash}X>{\raggedright\arraybackslash}X}
        \toprule
        Context & Low $\widehat f(A,X)$ examples & High $\widehat f(A,X)$ examples \\
        \midrule
        Community discussion & \textit{- \_ - lmao damn boy its all his fault !} \newline \textit{what the fuck does that have to do with the left not caring about the truth ?} \newline \textit{hey you know what ? fuck off} & \textit{i gotta stop caring so much nobody ever cares about me} \newline \textit{The kind of explanation is that for something you want people to be aware of.} \newline \textit{This makes religious people look unsuitable.} \\
        \midrule
        Moderation-sensitive & \textit{faxx but sometimes you still get pregnant why you on it that shit can fuck you up period .} \newline \textit{how does all this crap add up ?} \newline \textit{you cant be this stupid .} & \textit{i just told you tomar} \newline \textit{Donald is true to his word:.} \newline \textit{funny thing is that saakashvili is right now 11 tv} \\
        \midrule
        \bottomrule
    \end{tabularx}
    \caption{High-scoring comments are neutral and low-scoring comments are toxic in both contexts in Scenario 1. The table gives representative comments.}
    \label{tab:tox-scenario1-low-high-examples}
\end{table}

\begin{table}[t]
    \centering
    \small
    \begin{tabularx}{\textwidth}{>{\raggedright\arraybackslash}p{0.18\textwidth}>{\raggedright\arraybackslash}X>{\raggedright\arraybackslash}X}
        \toprule
        Context & Low $\widehat f(A,X)$ examples & High $\widehat f(A,X)$ examples \\
        \midrule
        Community discussion & \textit{the nra sure as fuck doesn 't .} \newline \textit{surgeons are bastards but they are shit - hot at their jobs} \newline \textit{that 's how we roll , alt as fuck} & \textit{i gotta stop caring so much nobody ever cares about me} \newline \textit{The kind of explanation is that for something you want people to be aware of.} \newline \textit{This makes religious people look unsuitable.} \\
        \midrule
        Moderation-sensitive & \textit{Which was my point?} \newline \textit{Not being offensive, just looking to understand} \newline \textit{Come on.} & \textit{that my friend , was fucking witty as shit} \newline \textit{fuck both of these anti - social assholes .} \newline \textit{the fools running the city of eugene give fools a bad name .} \\
        \midrule
        \bottomrule
    \end{tabularx}
    \caption{The high-score toxicity direction reverses with context in Scenario 2: neutral in community threads and toxic in moderation-sensitive threads. The table gives representative comments.}
    \label{tab:tox-scenario2-low-high-examples}
\end{table}

\textbf{Qualitative interpretation.}
Finally, \Cref{tab:tox-scenario1-low-high-examples,tab:tox-scenario2-low-high-examples} provide qualitative examples of the textual directions discovered by the learned $\widehat f(A,X)$. In Scenario 1, low $\widehat f(A,X)$ comments are generally more toxic or hostile, while high $\widehat f(A,X)$ comments are comparatively more neutral. This agrees with the data generating process, where neutral comments are associated with larger outcomes regardless of context. In Scenario 2, however, the interpretation of high $\widehat f(A,X)$ depends on the context. In community discussion threads, high $\widehat f(A,X)$ comments are more neutral, whereas in moderation-sensitive threads, high $\widehat f(A,X)$ comments are more toxic and therefore correspond to higher moderation priority. These examples illustrate that the learned intervention identifies the text direction associated with larger outcomes relative to the context-specific baseline, rather than simply learning a global toxicity score.

\FloatBarrier

\subsection{Food and drink review helpfulness}
\label{sec:text-food}
\textbf{Data and question.}
Finally, we consider a semi-synthetic experiment using the food and drink review data from \Cref{sec:cont-style-sep}. Each text is a user review of a food or drink item. The treatment $A$ is the review text, and the outcome $Y$ is a review helpfulness score, measuring how useful the review is to future customers. This setup corresponds to following question: \textit{what textual features in a review are associated with higher helpfulness, after accounting for the type of item being reviewed?}

Let $X\in\{0,1\}$ denote the item category, where $X=0$ corresponds to food and $X=1$ corresponds to drink. Let $S_{\mathrm{sentiment}}\in\{0,1\}$ denote the sentiment of the review, where $S_{\mathrm{sentiment}}=0$ denotes a negative review and $S_{\mathrm{sentiment}}=1$ denotes a positive review. The item category $X$ affects both the distribution of review sentiment and the outcome, and therefore plays the role of a confounder. As in \Cref{sec:text-gyafc,sec:text-toxic}, we use the variance-weighted causal contrast and learn $\widehat f(A,X)\in[0.1,0.9]$ to identify text-context pairs whose outcomes are large relative to the category-specific baseline.

\textbf{First parameterization: feature scoring with context.} We first consider the parameterization $f(A,X)$, which allows the learned intervention to depend on both the review text and the item category. We study two scenarios. In Scenario 1, positive reviews are more helpful for both food and drink items:
\begin{enumerate}
    \item $X\in\{0,1\}$ with equal probability.
    \item The review sentiment $S_{\mathrm{sentiment}}$ is associated with the review text $A$.
    \item $Y=10S_{\mathrm{sentiment}}+10X+\xi$.
\end{enumerate}
In this scenario, positive reviews have larger helpfulness scores than negative reviews within both item categories. Therefore, the learned $\widehat f(A,X)$ should be close to $0.9$ for positive reviews and close to $0.1$ for negative reviews, regardless of whether the item is food or drink.

On the held-out test set, panel (a) of \Cref{fig:dgp1-exp2} confirms this behavior. The learned $\widehat f(A,X)$ values are high for positive reviews and low for negative reviews in both categories, showing that the MIF algorithm recovers the sentiment direction associated with larger review helpfulness.

Scenario 2 introduces a category-dependent sentiment effect:
\begin{enumerate}
    \item $X\in\{0,1\}$ with equal probability.
    \item The review sentiment $S_{\mathrm{sentiment}}$ is associated with the review text $A$.
    \item $Y=10S_{\mathrm{sentiment}}(2X-1)+10X+\xi$.
\end{enumerate}
Here, the outcome-relevant sentiment direction changes across categories. For food reviews ($X=0$), negative reviews have larger outcomes, whereas for drink reviews ($X=1$), positive reviews have larger outcomes. Thus, a valid algorithm should not simply discover a global positive-versus-negative sentiment direction; it should instead learn a category-dependent sentiment direction.

On the same held-out test set, panel (b) of \Cref{fig:dgp1-exp2} shows that $\widehat f(A,X)$ adapts to this category dependence. For food reviews, the learned values are higher for negative reviews than for positive reviews. For drink reviews, the pattern reverses, with higher values for positive reviews. This demonstrates that the variance-weighted causal contrast learns the review sentiment associated with larger outcomes relative to the item-category baseline.

\textbf{Second parameterization: feature scoring without context.} We next consider the second parameterization, where the learned intervention depends only on $A$ and not on $X$. That is, we learn $f(A)$ rather than $f(A,X)$. This parameterization is useful when the goal is to discover a single review-style direction that is shared across categories, after separating item content from sentiment style as in \Cref{sec:cont-style-sep}.

To make this setting nontrivial, we consider Scenario 2 and set $\mathbb P(X=1)=0.8$, so drink reviews are more frequent than food reviews. Since positive sentiment is outcome-enhancing for drink reviews and drink reviews dominate the population, the population-optimal style-only rule assigns high values to positive reviews and low values to negative reviews. Under the constraint $f(A)\in[0.1,0.9]$, this corresponds to $f(A)=0.9$ for positive reviews and $f(A)=0.1$ for negative reviews.

On the held-out test set, panel (c) of \Cref{fig:dgp1-exp2} is consistent with this prediction. Because $f(A)$ cannot condition directly on $X$, it recovers the marginally outcome-enhancing sentiment direction, which is positive sentiment in this imbalanced setting. This contrasts with the first parameterization in panel (b), where the learned rule can reverse direction across food and drink categories.

\clearpage
\section[Additional results for Experiment II: Empirical studies on image-based treatments]{Additional results for Experiment II:\texorpdfstring{\\}{ }Empirical studies on image-based treatments}
\label{app:image-details}
\label{sec:app-imag}

This appendix contains the details and remaining results for both image studies in \Cref{sec:emp-image}. It gives the content--style representations, image architectures, reconstruction checks, and remaining perturbation evidence for rotated handwritten digits and cell-body stains.

We now illustrate that the MIF algorithm can be applied beyond text. We consider two image experiments: rotated handwritten digits in \Cref{sec:mnist} and simulated microscopy images of cell body stains in \Cref{sec:cell}. In both cases, the treatment is an image, and the goal is to discover which modifiable visual style feature is associated with larger outcomes after accounting for unmodifiable image content. The outcomes are semi-synthetic, but the setups correspond to meaningful questions: \textit{which acquisition or presentation artifacts affect downstream recognition, review, or quality-control scores?}

\subsection{Rotated MNIST}
\label{sec:mnist}

\textbf{Data and representation.}
We first consider a rotated MNIST experiment~\citep{lecun1998gradient}. Each image is a handwritten digit. The unmodifiable content is the digit identity, while the modifiable style is how the digit is written or presented, including its rotation angle. This setup can be interpreted as a simplified optical character recognition problem: \textit{after accounting for the digit identity, we ask which patterns are associated with larger downstream recognition or review scores.}

We randomly rotate each image by angle~$\alpha$, where $\alpha\sim\mathrm{Uniform}(-70,70)$. To separate digit identity from visual style, we use adversarial autoencoder\footnote{\texttt{https://github.com/johncf/mnist-style}} with the default parameters. Each image is represented by a content embedding $A_{\mathrm{unmodifiable}}$, given by a one-hot vector indicating the digit, and a style embedding $A_{\mathrm{modifiable}}\in\mathbb R^4$. The decoder reconstructs an image from any content-style pair. \Cref{fig:img_example} shows this decomposition: each row fixes the style embedding of the leftmost image and varies the digit identity from $0$ to $9$. The resulting images preserve the visual style, such as rotation and stroke pattern, while changing the digit content. This indicates that the style embedding captures presentation features rather than digit identity.

\begin{figure}[!htbp]
    \centering
    \includegraphics[width=0.72\textwidth]{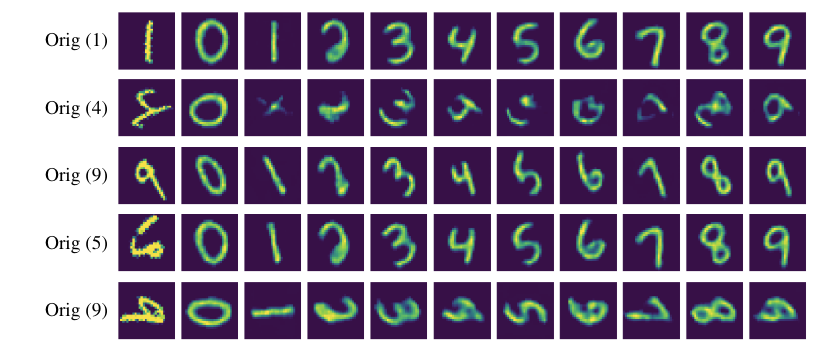}
    \caption{The style embedding preserves rotation and stroke pattern while digit identity changes. Each row fixes the style of the leftmost image and varies the digit from \(0\) to \(9\).}
    \label{fig:img_example}
\end{figure}

\textbf{Outcome model and recovery.}
We then learn the MIF from these image embeddings. Let $X\in\{0,1\}$ denote digit parity, where $X=0$ corresponds to even digits and $X=1$ corresponds to odd digits. With independent error~\(\xi\sim\mathcal N(2,1)\), we generate
\[
    Y = 0.1\alpha(2X-1) + 10X + \xi,
\]
where $\alpha$ is the rotation angle. This data-generating process represents a setting in which the preferred image orientation depends on the digit group: for even digits, clockwise rotations are associated with larger outcomes, while for odd digits, counter-clockwise rotations are associated with larger outcomes. Thus, a valid algorithm should not simply learn a global rotation direction. Instead, it should learn a parity-dependent visual direction.

We learn $\mathbb P(W_{\widehat f}=1\mid A_{\mathrm{modifiable}},X)=\widehat f(A_{\mathrm{modifiable}},X)$ using the style embedding $A_{\mathrm{modifiable}}$ and the parity covariate $X$. \Cref{fig:scatter-image} shows that the learned $\widehat f(A_{\mathrm{modifiable}},X)$ behaves as expected. Among even digits, images with high $\widehat f$ values tend to have more negative rotation angles, while images with low $\widehat f$ values tend to have more positive rotation angles. Specifically, the mean rotation angle is $26.8$ degrees for even-digit samples with $\widehat f<0.2$, compared to $-32.5$ degrees for those with $\widehat f>0.8$. Among odd digits, the pattern reverses: the mean rotation angle is $-21.1$ degrees when $\widehat f<0.2$ and $28.7$ degrees when $\widehat f>0.8$. Thus, the learned intervention captures the visual orientation associated with larger outcomes relative to the digit-parity baseline.

\textbf{Nudging.}
Finally, we probe the learned direction through image nudging. Starting from a given image, we transform its style embedding in the direction of increasing $\widehat f(A_{\mathrm{modifiable}},X)$ while treating the digit identity as unmodifiable. \Cref{fig:nudging-mnist} shows representative nudging trajectories. In general, odd digits are rotated counter-clockwise while even digits are rotated clockwise, consistent with the learned $\widehat f$ and with the data-generating process. This demonstrates that the learned intervention is not only interpretable but also actionable in the image space: it identifies how to modify the visual style while preserving image content.

\FloatBarrier

\subsection{Cell-body stains}
\label{sec:cell}

\textbf{Data.}
We next consider a microscopy image experiment using the simulated BBBC005v1 dataset from the Broad Bioimage Benchmark Collection~\citep{ljosa2012annotated}. We restrict attention to cell body stain images with at most $40$ cells. This experiment can be interpreted as a simplified image quality control problem: \textit{after accounting for the number of cells in an image, we ask which acquisition artifact is associated with larger quality control or review priority scores.}

Here, the unmodifiable content is the number of cells, and the modifiable style is the level of focus blur. We use the same adversarial autoencoder algorithm as in \Cref{sec:mnist}, with architectural adjustments for cell images and a style embedding dimension of $32$. \Cref{fig:img_example_cell} shows the resulting content-style separation. Each row fixes the style of the leftmost image and varies the cell count across selected values. The reconstructed images preserve the blurriness level while changing the number of cells, suggesting that the learned style embedding captures the focus artifact rather than the cell-count content.

\begin{figure}[!htbp]
    \centering
    \includegraphics[width=0.82\textwidth]{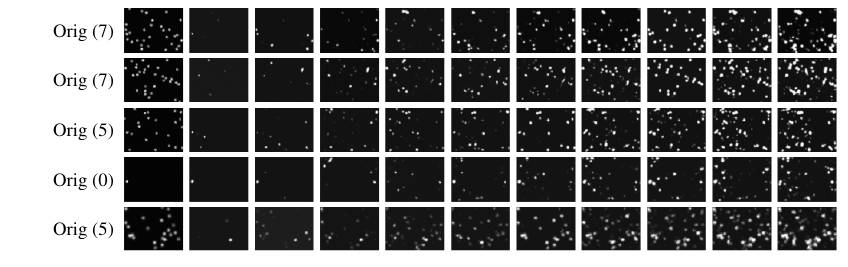}
    \caption{The style embedding approximately preserves blur while the content embedding controls cell count. Each row fixes the style of the leftmost image and varies the number of cells.}
    \label{fig:img_example_cell}
\end{figure}

\textbf{Outcome model and recovery.}
We then learn the MIF from the cell image embeddings. Let $B\in[1,48]$ denote the level of blurriness, and let $X\in\{0,1\}$ indicate whether the image contains fewer than $20$ cells, with $X=1$ if the cell count is below $20$ and $X=0$ otherwise. With independent error~\(\xi\sim\mathcal N(2,1)\), we generate
\[
    Y = 0.1B + 10X + \xi.
\]
In this setup, larger blurriness corresponds to larger quality-control or review-priority scores, regardless of the cell-count group. Thus, after adjusting for $X$, a valid algorithm should learn that high-blur style embeddings are associated with larger outcomes.

We learn $\mathbb P(W_{\widehat f}=1\mid A_{\mathrm{modifiable}},X)=\widehat f(A_{\mathrm{modifiable}},X)$, where $A_{\mathrm{modifiable}}$ is the style embedding. \Cref{fig:scatter-image-cell} shows that larger blurriness levels correspond to larger learned $\widehat f(A_{\mathrm{modifiable}},X)$ values in both cell-count groups. Among images with more than $20$ cells, the mean blurriness level is $15.9$ for samples with $\widehat f<0.2$, compared to $40.3$ for those with $\widehat f>0.8$. Similarly, among images with fewer than $20$ cells, the mean blurriness level is $15.7$ when $\widehat f<0.2$ and $37.1$ when $\widehat f>0.8$. Therefore, the learned intervention recovers the acquisition artifact driving the outcome, rather than merely learning the number of cells.

\begin{figure}[!t]
    \centering
    \includegraphics[width=0.80\textwidth]{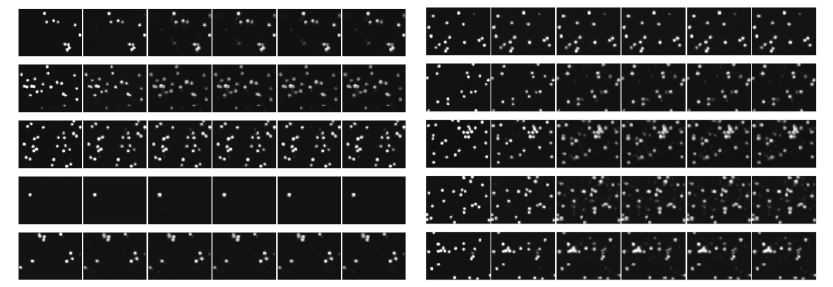}
    \caption{Nudging increases blur while approximately preserving cell count, consistent with the learned $\widehat f(A_{\mathrm{modifiable}},X)$ direction. Columns give iterations \(0\) through \(5\), following the learned \(\widehat f(A_{\mathrm{modifiable}},X)\) direction.}
    \label{fig:nudging-cell}
\end{figure}

\textbf{Nudging and interpretation.}
Finally, \Cref{fig:nudging-cell} shows nudging results for representative cell images. Nudging the style embeddings in the direction of increasing $\widehat f(A_{\mathrm{modifiable}},X)$ makes the images more blurry while preserving the approximate cell counts. This is consistent with the learned relationship between blur and the quality control priority outcome. In this example, nudging should be interpreted as a diagnostic probe of the learned image direction rather than as a recommendation to improve image quality: \textit{the MIF algorithm identifies the visual artifact associated with larger review priority scores.}

\FloatBarrier
\section[Additional results for Experiment III: Empirical studies on dynamic treatment sequences]{Additional results for Experiment III:\texorpdfstring{\\}{ }Empirical studies on dynamic treatment sequences}
\label{app:sequence-details}
\label{sec:app-dyna}

This appendix contains the details and remaining results for all sequence studies in \Cref{sec:emp-sequence}. It gives the complete outcome models, decoding rule, iteration-level tables, timing design, budget interpretation, and prefix-preserving suffix discussion.

\subsection{Short treatment sequences under categorical and sentence representations}
\label{sec:sequence-short}
\subsubsection{Categorical representation}
\textbf{Treatment representation.}
Dynamic treatment sequences arise in longitudinal care, where a patient receives a series of interventions and the treatment history may shape a final health outcome. We begin with a simplified categorical regime of length \(L\in\{1,\ldots,6\}\). We denote the treatment sequence by \(A_{1:L}=(A_1,\ldots,A_L)\), where each event~\(A_\ell\) is one of three treatment or visit types, denoted \(a\), \(b\), and \(c\).

\textbf{No-covariate scenarios.}
We first consider two no-covariate scenarios, each with independent error~\(\xi\sim\mathcal N(0,4)\):
\begin{enumerate}
    \item $Y = 10 + 5N_a(A_{1:L}) + 10N_b(A_{1:L}) + 15N_c(A_{1:L}) + \xi$.
    \item $Y = 100 - 5N_a(A_{1:L}) - 10N_b(A_{1:L}) - 15N_c(A_{1:L}) + \xi$.
\end{enumerate}
Here, \(N_j(A_{1:L})\) denotes the number of \(j\)'s in \(A_{1:L}\), for \(j\in\{a,b,c\}\). Our goal is the same as the text-based setting, which is to learn a feature-scoring function \(\widehat f(A_{1:L}) := \mathbb{P}(W_{\widehat f}=1 \mid A_{1:L})\) that maximizes the raw causal contrast \(\Delta_X(\widehat f)\).

\textbf{Embedding and nudging.}
As before, we represent \(A_{1:L}\) with its SONAR embedding. After learning such a feature-scoring function \(\widehat f\), we nudge \(A_{1:L}\) in the direction of increased \(\widehat f(A_{1:L})\) for five iterations and decode the nudged embeddings via top-$p$ sampling with $p = 0.99$, removing treatment sequences with more than three letters or containing a letter other than \(a\), \(b\), or \(c\). For each scenario, we calculate the average $Y$ value for the nudged treatment at each iteration, as summarized in \Cref{tab:avg-y}. We can see that nudging generally works well and results in nudged treatment sequences with higher values of $Y$ on average.

\textbf{Context-dependent scenario.}
We now consider a case with a binary covariate~$X$ and independent error~\(\xi\sim\mathcal N(0,4)\). The data-generating process is as follows:
\begin{enumerate}
    \item When $X = 0$, we have $Y = 10 + 5N_a(A_{1:L}) + 10N_b(A_{1:L}) + 15N_c(A_{1:L}) + \xi$.
    \item When $X = 1$, we have $Y = 100 - 5N_a(A_{1:L}) - 10N_b(A_{1:L}) - 15N_c(A_{1:L}) + \xi$.
\end{enumerate}
\textbf{Nudging results.}
We now learn a feature-scoring function \(\widehat f(A_{1:L}, X) := \mathbb{P}(W_{\widehat f}=1 \mid A_{1:L}, X)\) maximizing the raw causal contrast \(\Delta_X(\widehat f)=\Psi_{\widehat f}(1)-\Psi_{\widehat f}(0)\) and perform nudging as before (while keeping $X$ fixed). The average $Y$ values for the nudged treatment corresponding to $X = 0$ and $X = 1$ are shown in \Cref{tab:avg-y-v2}. We observe that nudging in this case also works reasonably well.
\begin{table}[!htbp]
\centering
\caption{Nudging raises the endpoint average outcome from \(45.5\) to \(64.7\) in Scenario 1 and from \(64.5\) to \(74.5\) in Scenario 2. The table gives every iteration for both no-covariate scenarios. Higher is better.}
\label{tab:avg-y}
\begin{tabular}{ccccccc}
\hline
Iteration  & 0      & 1      & 2      & 3      & 4      & 5      \\ \hline
Scenario 1 & $45.5$ & $47.2$ & $49.5$ & $54.7$ & $60.1$ & $64.7$ \\ \hline
Scenario 2 & $64.5$ & $65.9$ & $66.7$ & $68.9$ & $71.5$ & $74.5$ \\ \hline
\end{tabular}
\end{table}
\begin{table}[!htbp]
\centering
\caption{Nudging raises the endpoint average outcome for both contexts, although the \(X=1\) trajectory first falls from \(64.5\) to \(62.2\). The table gives every iteration. Higher is better.}
\label{tab:avg-y-v2}
\begin{tabular}{ccccccc}
\hline
Iteration & 0      & 1      & 2      & 3      & 4      & 5      \\ \hline
$X = 0$   & $45.5$ & $47.7$ & $57.4$ & $64.2$ & $72.3$ & $77.5$ \\ \hline
$X = 1$   & $64.5$ & $64.2$ & $62.8$ & $62.2$ & $65.6$ & $68.1$ \\ \hline
\end{tabular}
\end{table}

\textbf{When nudging is informative.} \textit{Since there is no unmodifiable content and the treatment can be freely changed, nudging is not too useful here. A more practical strategy would be to simply examine the learned \(\widehat f(A_{1:L},X)\) or \(\widehat f(A_{1:L})\) (depending on the parameterization used).}

\FloatBarrier

\subsubsection{Sentence representation}
\label{sec:sequence-sentence}
Alternatively, we can represent treatment sequences using natural language. For example, \textit{a b c} becomes \textit{The first treatment is a. The second treatment is b. The third treatment is c.} For both scenarios, we again calculate the average $Y$ value for the nudged treatment at each iteration, as shown in \Cref{tab:treat-seq}.

\begin{table}[!htbp]
\centering
\caption{Sentence-based nudging raises the endpoint average outcome from \(36.2\) to \(48.2\) in Scenario 1 and from \(73.8\) to \(89.8\) in Scenario 2; Scenario 2 peaks at \(89.9\) one iteration earlier. Higher is better.}
\label{tab:treat-seq}
\begin{tabular}{ccccccc}
\hline
Iteration  & 0      & 1      & 2      & 3      & 4      & 5      \\ \hline
Scenario 1 & $36.2$ & $36.2$ & $36.2$ & $37.8$ & $44.3$ & $48.2$ \\ \hline
Scenario 2 & $73.8$ & $73.8$ & $74.0$ & $78.6$ & $89.9$ & $89.8$ \\ \hline
\end{tabular}
\end{table}

\FloatBarrier

\subsection{Treatments that include administration times}
\label{sec:sequence-time}
\textbf{Design.}
We now add administration times to the treatment sequences, motivated by longitudinal medical regimes in which care is delivered at recurring visits---such as prenatal visits---that occur at varying intervals. The causal question is which treatment types and spacings most influence the patient's final outcome. For example, we can have \textit{0 a}, \textit{0 a 18 c 32 a}, or \textit{0 c 5 b 9 b}. We assume the following:
\begin{enumerate}
    \item There can be at most three treatments given.
    \item The first treatment is administered at time~$t = 0$.
    \item The outcome \(Y\), measured at time~\(t=100\), is
    \[
        Y=5N_a(A_{1:L})+10N_b(A_{1:L})+15N_c(A_{1:L})+0.2\,t_p+\xi,
    \]
    where \(t_p\) denotes the shortest interval between treatments and \(\xi\sim\mathcal N(0,4)\) is independent error.
\end{enumerate}
\textbf{Budget behavior.}
Here, we consider learning \(\widehat f(A_{1:L}) := \mathbb{P}(W_{\widehat f} = 1 \mid A_{1:L})\). On the test set, we observe a clear positive correlation between \(\mathbb{E}(Y \mid A_{1:L})\) and \(\widehat f(A_{1:L})\) for any given budget, as shown in \Cref{fig:scatter}. Moreover, as the budget decreases, we further prioritize setting \(W_{\widehat f} = 1\) for treatments with larger values of $Y$.

\textbf{Prefix-preserving search.}
Suppose we have obtained \(\widehat f\) from the data. Given a \textit{prefix treatment sequence}---the care already administered at previous visits---we can modify only the \textit{suffix}---the remaining treatments and administration times still under our control---and evaluate \(\widehat f(A_{1:L})\) for each candidate continuation. This search ranks feasible future regimes while leaving the patient's observed treatment history unchanged.

\end{document}